\title{Federated Composite Optimization}
\author{Honglin Yuan \\
Stanford University\thanks{Based on work performed at Google Research.} \\
\texttt{yuanhl@cs.stanford.edu} 
\and 
Manzil Zaheer \\
Google Research \\
\texttt{manzilz@google.com} \and
Sashank Reddi \\
Google Research \\
\texttt{sashank@google.com} 
}
\begin{document}
\date{}
\maketitle

\begin{abstract}
  Federated Learning (FL) is a distributed learning paradigm that scales on-device learning collaboratively and privately. 
  Standard FL algorithms such as \fedavg are primarily geared towards \emph{smooth unconstrained} settings. 
  In this paper, we study the \emph{Federated Composite Optimization} (FCO) problem, in which the loss function contains a non-smooth regularizer. 
  Such problems arise naturally in FL applications that involve sparsity, low-rank, monotonicity, or more general constraints. 
  We first show that straightforward extensions of primal algorithms such as \fedavg are not well-suited for FCO since they suffer from the ``curse of primal averaging,'' resulting in poor convergence.
  As a solution, we propose a new primal-dual algorithm, \emph{Federated Dual Averaging} (\feddualavg), which by employing a novel server dual averaging procedure
  circumvents the curse of primal averaging.
  Our theoretical analysis and empirical experiments demonstrate that \feddualavg outperforms the other baselines.
\end{abstract}

\section{Introduction}
Federated Learning (FL, \citealt{Konecny.McMahan.ea-NeurIPS15,McMahan.Moore.ea-AISTATS17}) is a novel distributed learning paradigm in which a large number of clients collaboratively train a shared model without disclosing their private local data. 
The two most distinct features of FL, when compared to classic distributed learning settings, are (1) heterogeneity in data amongst the clients and (2) very high cost to communicate with a client. Due to these aspects, classic distributed optimization algorithms have been rendered ineffective in FL settings \citep{Kairouz.McMahan.ea-arXiv19}. Several algorithms specifically catered towards FL settings have been proposed to address these issues. The most prominent amongst them is Federated Averaging (\fedavg) algorithm, which by employing local SGD updates, significantly reduces the communication overhead under moderate client heterogeneity. Several follow-up works have focused on improving the \fedavg in various ways (e.g., \citealt{Li.Sahu.ea-MLSys20,Karimireddy.Kale.ea-ICML20, Reddi.Charles.ea-20,Yuan.Ma-NeurIPS20}). 

\begin{figure}
  \centering
  \centerline{\includegraphics[width=\columnwidth]{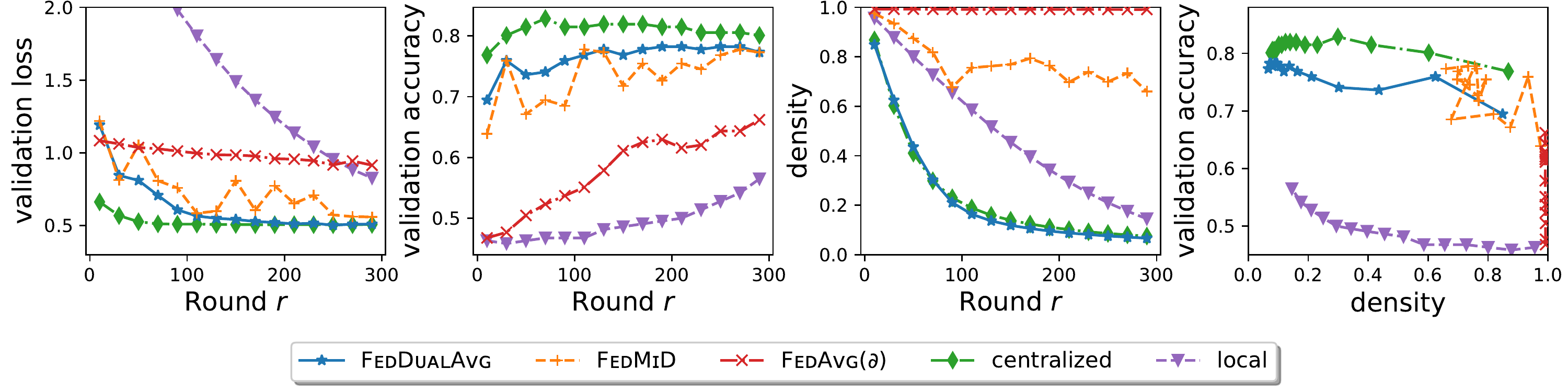}}
  \caption{
  \textbf{Results on sparse ($\ell_1$-regularized) logistic regression for a federated fMRI dataset based on  \citep{Haxby-01}.}
  \texttt{centralized} corresponds to training on the centralized dataset gathered from \textbf{all} the training clients.
  \texttt{local} corresponds to training on the local data from only \textbf{one} training client without communication. 
  \fedavg($\partial$) corresponds to running \fedavg algorithms with subgradient in lieu of SGD to handle the non-smooth $\ell_1$-regularizer. 
  \fedmid is another straightforward extension of \fedavg running local proximal gradient method (see \cref{sec:fedmid} for details). 
  We show that using our proposed algorithm \feddualavg, one can 1) achieve performance comparable to the \texttt{centralized} baseline without the need to gather client data, and 2) significantly outperforms the \texttt{local} baseline on the isolated data and the \fedavg baseline. 
  See \cref{mainsec:expr:fmri} for details.}
  \label{fig:haxby:simplified}
\end{figure}

Existing FL research primarily focuses on the \emph{unconstrained smooth} objectives;
however, many FL applications involve non-smooth objectives. 
Such problems arise naturally in the context of regularization (e.g., sparsity, low-rank, monotonicity, or additional constraints on the model). For instance, consider the problem of cross-silo biomedical FL, where medical organizations collaboratively aim to learn a global model on their patients' data without sharing. In such applications, sparsity constraints are of paramount importance due to the nature of the problem as it involves only a few data samples (e.g., patients) but with very high dimensions (e.g., fMRI scans). 
For the purpose of illustration, in \cref{fig:haxby:simplified}, we present results on a federated sparse ($\ell_1$-regularized) logistic regression task for an fMRI dataset \citep{Haxby-01}.
As shown, using a federated approach that can handle non-smooth objectives enables us to find a highly accurate sparse solution without sharing client data.

In this paper, we propose to study the  \emph{Federated Composite Optimization} (FCO) problem. As in standard FL, the losses  are distributed to $M$ clients. In addition, we assume all the clients share the same, possibly non-smooth, non-finite regularizer $\psi$. Formally, (FCO) is of the following form
\begin{equation}
\min_{w \in \reals^d} \Phi(w) := F(w) + \psi(w) := \frac{1}{M} \sum_{m=1}^M F_m(w) + \psi(w), 
\tag{FCO}
\label{FCO}
\end{equation}
where $F_m(w) := \expt_{\xi^m \sim \mathcal{D}_m} f(w; \xi^m)$ is the loss at the $m$-th client, assuming $\mathcal{D}_m$ is its local data distribution. We assume that each client $m$ can access $\nabla f(w; \xi^m)$ by drawing independent samples $\xi^m$ from its local distribution $\mathcal{D}_m$. 
Common examples of $\psi(w)$ include $\ell_1$-regularizer or more broadly $\ell_p$-regularizer, nuclear-norm regularizer (for matrix variable), total variation (semi-)norm, etc. 
The (FCO) reduces to the standard federated optimization problem if $\psi \equiv 0$. 
The (FCO) also covers the constrained federated optimization if one takes $\psi$ to be the following constraint characteristics
$\chi_{\cstr}(w) := 
  \begin{cases} 
      0 & \text{if $w \in \mathcal{C}$}, \\
      +\infty & \text{if $w \notin \mathcal{C}$}.
  \end{cases}
 $

Standard FL algorithms such as \fedavg (see \cref{alg:fedavg}) and its variants (e.g., \citealt{Li.Sahu.ea-MLSys20,Karimireddy.Kale.ea-ICML20}) are primarily tailored to \emph{smooth unconstrained} settings, and are therefore, not well-suited for FCO. 
The most straightforward extension of \fedavg towards (FCO) is to apply local subgradient method \citep{Shor-85} in lieu of SGD. 
This approach is largely ineffective due to the intrinsic slow convergence of subgradient approach \citep{Boyd.Xiao.ea-03}, which is also demonstrated in \cref{fig:haxby:simplified} (marked \fedavg ($\partial$)).

A more natural extension of \fedavg is to replace the local SGD with proximal SGD (\citealt{Parikh.Boyd-FnT14}, a.k.a. projected SGD for constrained problems), or more generally, mirror descent \cite{Duchi.Shalev-shwartz.ea-COLT10}. 
We refer to this algorithm as \emph{\fedmidfull} (\fedmid, see \cref{alg:fedmid}). 
The most noticeable drawback of a primal-averaging method like \fedmid is the ``curse of primal averaging,'' where the desired regularization of FCO may be rendered completely ineffective due to the server averaging step typically used in FL. For instance, consider a $\ell_1$-regularized logistic regression setting. 
Although each client is able to obtain a sparse solution, simply averaging the client states will inevitably yield a dense solution.
See \cref{fig:curse_of_avg} for an illustrative example.

\begin{figure}
  \centering
\begin{tikzpicture}
  \filldraw[fill=black!0] (0.0, 0.0) rectangle (0.2, 0.2);
\filldraw[fill=black!100] (0.0, 0.2) rectangle (0.2, 0.4);
\filldraw[fill=black!0] (0.0, 0.4) rectangle (0.2, 0.6000000000000001);
\filldraw[fill=black!0] (0.0, 0.6000000000000001) rectangle (0.2, 0.8);
\filldraw[fill=black!0] (0.0, 0.8) rectangle (0.2, 1.0);
\filldraw[fill=black!0] (0.0, 1.0) rectangle (0.2, 1.2);
\filldraw[fill=black!0] (0.0, 1.2000000000000002) rectangle (0.2, 1.4000000000000001);
\filldraw[fill=black!100] (0.0, 1.4000000000000001) rectangle (0.2, 1.6);
\filldraw[fill=black!100] (0.0, 1.6) rectangle (0.2, 1.8);
\filldraw[fill=black!0] (0.0, 1.8) rectangle (0.2, 2.0);
\filldraw[fill=black!100] (0.4, 0.0) rectangle (0.6000000000000001, 0.2);
\filldraw[fill=black!0] (0.4, 0.2) rectangle (0.6000000000000001, 0.4);
\filldraw[fill=black!0] (0.4, 0.4) rectangle (0.6000000000000001, 0.6000000000000001);
\filldraw[fill=black!100] (0.4, 0.6000000000000001) rectangle (0.6000000000000001, 0.8);
\filldraw[fill=black!0] (0.4, 0.8) rectangle (0.6000000000000001, 1.0);
\filldraw[fill=black!0] (0.4, 1.0) rectangle (0.6000000000000001, 1.2);
\filldraw[fill=black!0] (0.4, 1.2000000000000002) rectangle (0.6000000000000001, 1.4000000000000001);
\filldraw[fill=black!100] (0.4, 1.4000000000000001) rectangle (0.6000000000000001, 1.6);
\filldraw[fill=black!100] (0.4, 1.6) rectangle (0.6000000000000001, 1.8);
\filldraw[fill=black!100] (0.4, 1.8) rectangle (0.6000000000000001, 2.0);
\filldraw[fill=black!100] (0.8, 0.0) rectangle (1.0, 0.2);
\filldraw[fill=black!100] (0.8, 0.2) rectangle (1.0, 0.4);
\filldraw[fill=black!0] (0.8, 0.4) rectangle (1.0, 0.6000000000000001);
\filldraw[fill=black!100] (0.8, 0.6000000000000001) rectangle (1.0, 0.8);
\filldraw[fill=black!0] (0.8, 0.8) rectangle (1.0, 1.0);
\filldraw[fill=black!0] (0.8, 1.0) rectangle (1.0, 1.2);
\filldraw[fill=black!0] (0.8, 1.2000000000000002) rectangle (1.0, 1.4000000000000001);
\filldraw[fill=black!100] (0.8, 1.4000000000000001) rectangle (1.0, 1.6);
\filldraw[fill=black!0] (0.8, 1.6) rectangle (1.0, 1.8);
\filldraw[fill=black!0] (0.8, 1.8) rectangle (1.0, 2.0);
\filldraw[fill=black!0] (1.2000000000000002, 0.0) rectangle (1.4000000000000001, 0.2);
\filldraw[fill=black!100] (1.2000000000000002, 0.2) rectangle (1.4000000000000001, 0.4);
\filldraw[fill=black!0] (1.2000000000000002, 0.4) rectangle (1.4000000000000001, 0.6000000000000001);
\filldraw[fill=black!0] (1.2000000000000002, 0.6000000000000001) rectangle (1.4000000000000001, 0.8);
\filldraw[fill=black!0] (1.2000000000000002, 0.8) rectangle (1.4000000000000001, 1.0);
\filldraw[fill=black!0] (1.2000000000000002, 1.0) rectangle (1.4000000000000001, 1.2);
\filldraw[fill=black!0] (1.2000000000000002, 1.2000000000000002) rectangle (1.4000000000000001, 1.4000000000000001);
\filldraw[fill=black!0] (1.2000000000000002, 1.4000000000000001) rectangle (1.4000000000000001, 1.6);
\filldraw[fill=black!100] (1.2000000000000002, 1.6) rectangle (1.4000000000000001, 1.8);
\filldraw[fill=black!100] (1.2000000000000002, 1.8) rectangle (1.4000000000000001, 2.0);
\filldraw[fill=black!0] (1.6, 0.0) rectangle (1.8, 0.2);
\filldraw[fill=black!0] (1.6, 0.2) rectangle (1.8, 0.4);
\filldraw[fill=black!100] (1.6, 0.4) rectangle (1.8, 0.6000000000000001);
\filldraw[fill=black!0] (1.6, 0.6000000000000001) rectangle (1.8, 0.8);
\filldraw[fill=black!100] (1.6, 0.8) rectangle (1.8, 1.0);
\filldraw[fill=black!100] (1.6, 1.0) rectangle (1.8, 1.2);
\filldraw[fill=black!0] (1.6, 1.2000000000000002) rectangle (1.8, 1.4000000000000001);
\filldraw[fill=black!0] (1.6, 1.4000000000000001) rectangle (1.8, 1.6);
\filldraw[fill=black!0] (1.6, 1.6) rectangle (1.8, 1.8);
\filldraw[fill=black!0] (1.6, 1.8) rectangle (1.8, 2.0);
\filldraw[fill=black!40.0] (3.6, 0.0) rectangle (3.8000000000000003, 0.2);
\filldraw[fill=black!60.0] (3.6, 0.2) rectangle (3.8000000000000003, 0.4);
\filldraw[fill=black!20.0] (3.6, 0.4) rectangle (3.8000000000000003, 0.6000000000000001);
\filldraw[fill=black!40.0] (3.6, 0.6000000000000001) rectangle (3.8000000000000003, 0.8);
\filldraw[fill=black!20.0] (3.6, 0.8) rectangle (3.8000000000000003, 1.0);
\filldraw[fill=black!20.0] (3.6, 1.0) rectangle (3.8000000000000003, 1.2);
\filldraw[fill=black!0.0] (3.6, 1.2000000000000002) rectangle (3.8000000000000003, 1.4000000000000001);
\filldraw[fill=black!60.0] (3.6, 1.4000000000000001) rectangle (3.8000000000000003, 1.6);
\filldraw[fill=black!60.0] (3.6, 1.6) rectangle (3.8000000000000003, 1.8);
\filldraw[fill=black!40.0] (3.6, 1.8) rectangle (3.8000000000000003, 2.0);

  \draw[decoration={brace,mirror,raise=5pt},decorate]
  (0,0) -- node[below=6pt] {\footnotesize sparse clients} (1.8,0);

  \draw[-latex] (2,1.0) -- node[above=0pt] {\footnotesize server} (3.4,1.0);
  \draw[-latex] (2,1.0) -- node[below=0pt] {\footnotesize averaging} (3.4,1.0);

  \draw[decoration={brace,mirror,raise=5pt},decorate]
  (3.5,0) -- node[below=6pt] {\footnotesize dense server} (3.9,0);
\end{tikzpicture}
\caption{\textbf{Illustration of ``curse of primal averaging''}. While each client of \fedmid can locate a sparse solution, simply averaging them will yield a much denser solution on the server side.}
\label{fig:curse_of_avg}
\end{figure}
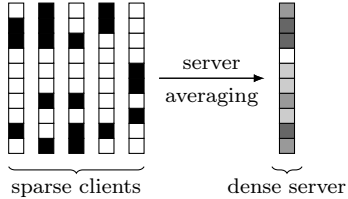

To overcome this challenge, we propose a novel primal-dual algorithm named \emph{\feddualavgfull} (\feddualavg, see \cref{alg:feddualavg}).
Unlike \fedmid (or its precursor \fedavg), the server averaging step of \feddualavg operates in the dual space instead of the primal. 
Locally, each client runs dual averaging algorithm \citep{Nesterov-MP09} by tracking of a pair of primal and dual states.
During communication, the dual states are averaged across the clients.
Thus, \feddualavg employs a novel double averaging procedure --- averaging of dual states across clients (as in \fedavg), and the averaging of gradients in dual space (as in the sequential dual averaging). 
Since both levels of averaging operate in the dual space, we can show that \feddualavg provably overcomes the curse of primal averaging.
Specifically, we prove that \feddualavg can attain significantly lower communication complexity when deployed with a large client learning rate. 

\paragraph{Contributions.} 
In light of the above discussion, let us summarize our key contributions below:
\begin{itemize}[leftmargin=*]
    \item We propose a generalized federated learning problem, namely \emph{Federated Composite Optimization} (FCO), with non-smooth regularizers and constraints.  
    \item We first propose a natural extension of \fedavg, namely \emph{\fedmidfull} (\fedmid). 
    We show that \fedmid can attain the mini-batch rate in the small client learning rate regime (\cref{subsec:small:client:lr}).
    We argue that \fedmid may suffer from the effect of ``curse of primal averaging,'' which results in poor convergence, especially in the large client learning rate regime (\cref{sec:curse}).
    \item We propose a novel primal-dual algorithm named \emph{\feddualavgfull} (\feddualavg), which provably overcomes the curse of primal averaging (\cref{sec:feddualavg}).
    Under certain realistic conditions, we show that by virtue of ``double averaging'' property, \feddualavg can have significantly lower communication complexity (\cref{sec:feddualavg-benefit}).
    \item We demonstrate the empirical performance of \fedmid and \feddualavg on various tasks, including $\ell_1$-regularization, nuclear-norm regularization, and various constraints in FL (\cref{sec:expr}).
\end{itemize}
\paragraph{Notations.}
We use $[n]$ to denote the set $\{1, \ldots, n\}$. 
We use $\langle \cdot , \cdot \rangle$ to denote the inner product, $\| \cdot\|$ to denote an arbitrary norm, and $\|\cdot\|_*$ to denote its dual norm, unless otherwise specified. 
We use $\| \cdot\|_2$ to denote the $\ell_2$ norm of a vector or the operator norm of a matrix, and $\|\cdot\|_A$ to denote the vector norm induced by positive definite matrix $A$, namely $\|w\|_A := \sqrt{ \left\langle w, Aw \right\rangle }$. 
For any convex function $g(w)$, we use $g^*(z)$ to denote its convex conjugate $g^*(z) := \sup_{w \in \reals^d}\{ \langle z, w \rangle - g(w) \}$.
We use $w^{\star}$ to denote the optimum of the problem (FCO).  
We use $\mathcal{O}, \Theta$ to hide multiplicative absolute constants only and $x \lesssim y$ to denote $x = \mathcal{O}(y)$.

\subsection{Additional Related Work}
\label{sec:related_works}
\paragraph{Federated Learning.}
Recent years have witnessed a growing interest in various aspects of Federated Learning.
The early analysis of \fedavg preceded the inception of Federated Learning, which was studied under the names of parallel SGD and local SGD \citep{Zinkevich.Weimer.ea-NIPS10,Zhou.Cong-IJCAI18}
Early results on \fedavg mostly focused on the ``one-shot'' averaging case, in which the clients are only synchronized once at the end of the procedure (e.g., \citealt{Mcdonald.Mohri.ea-NIPS09,Shamir.Srebro-Allerton14,Rosenblatt.Nadler-16,Jain.Kakade.ea-COLT18,Godichon-Baggioni.Saadane-20}). 
The first analysis of general \fedavg  was established by \citep{Stich-ICLR19} for the homogeneous client dataset.
This result was improved by 
\citep{
Haddadpour.Kamani.ea-NeurIPS19,
Khaled.Mishchenko.ea-AISTATS20,
Woodworth.Patel.ea-ICML20,
Yuan.Ma-NeurIPS20} via tighter analysis and accelerated algorithms.
\fedavg has also been studied for non-convex objectives \citep{Zhou.Cong-IJCAI18,Haddadpour.Kamani.ea-ICML19,Wang.Joshi-18,Yu.Jin-ICML19,Yu.Jin.ea-ICML19,Yu.Yang.ea-AAAI19}. 
For heterogeneous clients, numerous recent papers 
\citep{
  Haddadpour.Kamani.ea-NeurIPS19,
  Khaled.Mishchenko.ea-AISTATS20,
  Li.Huang.ea-ICLR20,
  Koloskova.Loizou.ea-ICML20,
  Woodworth.Patel.ea-NeurIPS20} studied the convergence of \fedavg under various notions of heterogeneity measure.
Other variants of \fedavg have been proposed to overcome heterogeneity (e.g., \citealt{Mohri.Sivek.ea-ICML19,
    Zhang.Hong.ea-20,
    Li.Sahu.ea-MLSys20,
    Wang.Tantia.ea-ICLR20,
    Karimireddy.Kale.ea-ICML20,
    Reddi.Charles.ea-20,
    Pathak.Wainwright-NeurIPS20,
    Al-Shedivat.Gillenwater.ea-ICLR21}).
A recent line of work has studied the behavior of Federated algorithms for personalized multi-task objectives \citep{Smith.Chiang.ea-NIPS17,Hanzely.Hanzely.ea-NeurIPS20,T.Dinh.Tran.ea-NeurIPS20,Deng.Kamani.ea-20}
and meta-learning objectives \citep{Fallah.Mokhtari.ea-NeurIPS20,Chen.Luo.ea-19,Jiang.Konecny.ea-19}. 
Federated Learning techniques have been successfully applied in a broad range of practical applications \citep{Hard.Rao.ea-18,Hartmann.Suh.ea-19,Hard.Partridge.ea-INTERSPEECH20}. 
We refer readers to \citep{Kairouz.McMahan.ea-arXiv19} for a comprehensive survey of the recent advances in Federated Learning. 
However, none of the aforementioned work allows for non-smooth or constrained problems such as (FCO).
To the best of our knowledge, the present work is the first work that studies non-smooth or constrained problems in Federated settings.

Shortly after the initial preprint release of the present work, \citet{Tong.Liang.ea-20} proposed a related federated $\ell_0$-constrained problem (which does not belong to FCO due to the non-convexity of $\ell_0$), and two algorithms to solve (similar to \fedmid(\textsc{-OSP}) but with hard-thresholding instead). 
As in most hard-thresholding work, the convergence is weaker since it depends on the sparsity level $\tau$ (worsens as $\tau$ gets tighter).

\paragraph{Composite Optimization, Dual Averaging, and Mirror Descent.}
Composite optimization has been a classic problem in convex optimization, which covers a variety of statistical inference, machine learning, signal processing problems. 
Mirror Descent (MD, a generalization of proximal gradient method) and Dual Averaging (DA, a.k.a. lazy mirror descent) are two representative algorithms for convex composite optimization.
The \emph{Mirror Descent} (MD) method was originally introduced by \citet{Nemirovski.Yudin-83} for the constrained case and reinterpreted by  \citet{Beck.Teboulle-ORL03}. 
MD was generalized to the composite case by \citet{Duchi.Shalev-shwartz.ea-COLT10} under the name of \textsc{Comid}, though numerous preceding work had studied the special case of \textsc{Comid} under a variety of names such as gradient mapping \citep{Nesterov-MP13}, forward-backward splitting method (FOBOS,\citealt{Duchi.Singer-JMLR09}), iterative shrinkage and thresholding (ISTA, \citealt{Daubechies.Defrise.ea-04}), and truncated gradient \citep{Langford.Li.ea-JMLR09}.
The \emph{Dual Averaging} (DA) method was introduced by \citet{Nesterov-MP09} for the constrained case, which is also known as \emph{Lazy Mirror Descent} in the literature \citep{Bubeck-15}. The DA method was generalized to the composite (regularized) case by \citep{Xiao-JMLR10,Dekel.Gilad-Bachrach.ea-JMLR12} under the name of Regularized Dual Averaging, and extended by recent works \citep{Flammarion.Bach-COLT17,Lu.Freund.ea-SIOPT18} to account for non-Euclidean geometry induced by an arbitrary distance-generating function $h$. 
DA also has its roots in online learning \citep{Zinkevich-ICML03}, and is related to the follow-the-regularized-leader (FTRL) algorithms \citep{McMahan-AISTATS11}. 
Other variants of MD or DA (such as delayed / skipped proximal step) have been investigated to mitigate the expensive proximal oracles \citep{Mahdavi.Yang.ea-NeurIPS12,Yang.Lin.ea-ICML17}. 
We refer readers to \citep{Flammarion.Bach-COLT17,Diakonikolas.Orecchia-SIOPT19} for more detailed discussions on the recent advances of MD and DA.

\paragraph{Classic Decentralized Consensus Optimization.}
A related distributed setting is the \emph{decentralized consensus optimization}, also known as 
\emph{multi-agent optimization} or 
\emph{optimization over networks} in the literature \citep{Nedich-FnT15}. 
Unlike the federated settings, in decentralized consensus optimization, each client can communicate every iteration, but the communication is limited to its graphic neighborhood.
Standard algorithms for unconstrained consensus optimization include decentralized (sub)gradient methods \citep{Nedic.Ozdaglar-TACON09,Yuan.Ling.ea-SIOPT16} 
and EXTRA \citep{Shi.Ling.ea-SIOPT15,Mokhtari.Ribeiro-JMLR16}. 
For constrained or composite consensus problems, people have studied both mirror-descent type methods (with primal consensus), {e.g.}, \citep{SundharRam.Nedic.ea-JOTA10,Shi.Ling.ea-TSP15,Rabbat-CAMSAP15,Yuan.Hong.ea-Automatica18,Yuan.Hong.ea-TACON20}; and dual-averaging type methods (with dual consensus), {e.g.,} \citep{Duchi.Agarwal.ea-TACON12,Tsianos.Lawlor.ea-CDC12,Tsianos.Rabbat-ACC12,Liu.Chen.ea-TSP18}.
In particular, the distributed dual averaging \citep{Duchi.Agarwal.ea-TACON12} has gained great popularity since its dual consensus scheme elegantly handles the constraints, and overcomes the technical difficulties of primal consensus, as noted by the original paper.
We identify that while the federated settings share certain backgrounds with the decentralized consensus optimization, the motivations, techniques, challenges, and results are quite dissimilar due to the fundamental difference of communication protocol, as noted by \citep{Kairouz.McMahan.ea-arXiv19}.
We refer readers to \citep{Nedich-FnT15} for a more detailed introduction to the classic decentralized consensus optimization.

\section{Preliminaries}
\label{sec:prelim}
In this section, we review the necessary background for composite optimization and federated learning. 
A detailed technical exposition of these topics is relegated to \cref{sec:background}.

\subsection{Composite Optimization}
\label{sec:co}
Composite optimization covers a variety of statistical inference, machine learning, signal processing problems.
Standard (non-distributed) composite optimization is defined as
\begin{equation}
    \min_{w \in \reals^d} \quad \expt_{\xi \sim \mathcal{D}}f(w;\xi) + \psi(w),
    \tag{CO}
    \label{eq:CO}
\end{equation}
where $\psi$ is a non-smooth, possibly non-finite regularizer.

\paragraph{Proximal Gradient Method.}
A natural extension of SGD for (CO) is the following \emph{proximal gradient method} (PGM):
\begin{align}
    w_{t+1} \gets \prox_{\eta \psi} \left(w_t - \eta \nabla f(w_t;\xi_t)  \right)
    = \argmin_{w} \left( \eta \langle \nabla f(w_t;\xi_t) , w \rangle + \frac{1}{2} \|w - w_t\|_2^2   + \eta \psi(w)  \right). 
    \label{eq:pgm}
\end{align}
The sub-problem \cref{eq:pgm} can be motivated by optimizing a quadratic upper bound of $f$ together with the original $\psi$. 
This problem can often be efficiently solved by virtue of the special structure of $\psi$. 
For instance, one can verify that PGM reduces to projected gradient descent if $\psi$ is a constraint characteristic $ {\chi}_{\mathcal{C}}$, soft thresholding if $\psi(w) = \lambda \|w\|_1$, or weight decay if $\psi(w) := \lambda \|w\|_2^2$. 

\paragraph{Mirror Descent / Bregman-PGM.}
PGM can be generalized to the Bregman-PGM if one replaces the Euclidean proximity term by the general Bregman divergence, namely
\begin{equation}
    w_{t+1} \gets \argmin_{w} \left( \eta  \left\langle \nabla f(w_t; \xi_t), w \right \rangle + \eta \psi(w) + D_{h}(w, w_t)  \right),
     \label{eq:bpgm}
\end{equation}
where $h$ is a strongly convex distance-generating function, $D_h$ is the Bregman divergence which reduces to Euclidean distance if one takes $h(w) = \frac{1}{2} \|w\|_2^2$. 
We will still refer to this step as a proximal step for ease of reference.
This general formulation \eqref{eq:bpgm} enables an equivalent primal-dual interpretation:
\begin{equation}
    w_{t+1} \gets \nabla (h + \eta \psi)^*(\nabla h(w_t) - \nabla f(w_t;\xi_t)).
    \label{eq:md:oneline}
\end{equation}
A common interpretation of \eqref{eq:md:oneline} is to decompose it into the following three sub-steps \citep{Nemirovski.Yudin-83}: 
\begin{enumerate}[(a),leftmargin=*]
    \item Apply $\nabla h$ to carry $w_t$ to a dual state (denoted as $z_t$)
    \item Update $z_t$ to $y_{t+1}$ with the gradient queried at $w_t$.
    \item Map $y_{t+1}$ back to primal via $\nabla (h+ \eta\psi)^*$
\end{enumerate}
This formulation is known as the \emph{composite objective mirror descent} (\textsc{Comid}, \citealt{Duchi.Shalev-shwartz.ea-COLT10}), or simply \emph{mirror descent} in the literature \citep{Flammarion.Bach-COLT17}. 

\paragraph{Dual Averaging.} 
An alternative approach for (CO) is the following \emph{dual averaging} algorithm \citep{Nesterov-MP09}:
\begin{equation}
    z_{t+1} \gets z_t - \eta \nabla f \left(\nabla (h + \eta t \psi)^*(z_t); \xi_t \right).
    \label{eq:da}
\end{equation}
Similarly, we can decompose \eqref{eq:da} into two sub-steps:
\begin{enumerate}[(a),leftmargin=*]
    \item Apply $\nabla (h + \eta t \psi)^*$ to map dual state $z_t$ to primal $w_t$. 
        Note that this sub-step can be reformulated into
        \begin{equation}
            w_t = \argmin_{w} \left( \left\langle - z_t, w  \right\rangle + \eta t \psi(w) + h(w)\right),
        \end{equation}
        which allows for efficient computation for many $\psi$, as in PGM.
    \item Update $z_t$ to $z_{t+1}$ with the gradient queried at $w_t$.
\end{enumerate}
Dual averaging is also known as the \emph{``lazy'' mirror descent} algorithm \citep{Bubeck-15} since it skips the forward mapping $(\nabla h)$ step. 
Theoretically, mirror descent and dual averaging often share the similar convergence rates for sequential \eqref{eq:CO} (e.g., for smooth convex $f$, c.f. \citealt{Flammarion.Bach-COLT17}).

\begin{remark}
    There are other algorithms that are popular for certain types of \eqref{eq:CO} problems. 
    For example, \emph{Frank-Wolfe} method \citep{Frank.Wolfe-56,Jaggi-ICML13} solves constrained optimization with a linear optimization oracle.
    Smoothing method \citep{Nesterov-MP05} can also handle non-smoothness in objectives, but is in general less efficient than specialized CO algorithms such as dual averaging (c.f., \citealt{Nesterov-18}). 
    In this work, we mostly focus on Mirror Descent and Dual Averaging algorithms since they only employ simple proximal oracles such as projection and soft-thresholding.
\end{remark}

\subsection{Federated Averaging}
\label{sec:fedavg}

Federated Averaging (\textsc{FedAvg}, \citealt{McMahan.Moore.ea-AISTATS17}) is the \emph{de facto} standard algorithm for Federated Learning with unconstrained smooth objectives (namely $\psi = 0$ for (FCO)). 
In this work, we follow the exposition of \citep{Reddi.Charles.ea-20} which splits the client learning rate and server learning rate, offering more flexibility (see \cref{alg:fedavg}).

\fedavg involves a series of \emph{rounds} in which each round consists of a client update phase and server update phase. 
We denote the total number of rounds as $R$. 
At the beginning of each round $r$, a subset of clients $\mathcal{S}_r$ are sampled from the client pools of size $M$. 
The server state is then broadcast to the sampled client as the client initialization.
During the client update phase (highlighted in \textcolor{blue}{blue} shade), each sampled client runs local SGD for $K$ steps with client learning rate $\eta_{\client}$ with their own data. 
We use $w_{r,k}^m$ to denote the $m$-th client state at the $k$-th local step of the $r$-th round. 
During the server update phase, the server averages the updates of the sampled clients and treats it as a pseudo-anti-gradient $\Delta_r$ (Line 9). 
The server then takes a server update step to update its server state with server learning rate $\eta_{\server}$ and the pseudo-anti-gradient $\Delta_r$ (Line 10). 

\begin{algorithm}
  \caption{\fedavgfull (\fedavg)}
  \label{alg:fedavg}
  \begin{algorithmic}[1]
  \STATE {\textbf{procedure}} \fedavg ($w_0, \eta_{\client}, \eta_{\server}$)
  \FOR {$r=0, \ldots, R-1$}
    \STATE sample a subset of clients $\mathcal{S}_r \subseteq [M]$
      \FORALL {\kern-6.5em\tikzmark{fedavg:begin}\kern+6.5em $m \in \mathcal{S}_r$ {\bf in parallel}}
        \STATE $w_{r,0}^m \gets w_{r}$ \COMMENT{broadcast client initialization}
        \FOR {$k = 0, \ldots, K-1$}
          \STATE $g_{r,k}^m \gets \nabla f(w_{r,k}^m; \xi_{r,k}^m)$ \COMMENT{query gradient}
          \STATE $w_{r,k+1}^m \gets w_{r,k}^m - \eta_{\client} \cdot g_{r,k}^m$ 
          \COMMENT{client update\tikzmark{fedavg:end}} 
          \drawCodeBox{fill=blue}{fedavg:begin}{fedavg:end}
        \ENDFOR
      \ENDFOR
  \STATE $\Delta_r = \frac{1}{|\mathcal{S}_r|} \sum_{m \in \mathcal{S}_r} (w_{r, K}^m - w_{r, 0}^m)$ 
  \STATE $w_{r+1} \gets w_{r} + \eta_{\server} \cdot \Delta_r$ \COMMENT{server update}
  \ENDFOR
\end{algorithmic}
\end{algorithm}

\section{Proposed Algorithms for FCO}
\label{sec:alg}
In this section, we explore the possible solutions to approach (FCO).
As mentioned earlier, existing FL algorithms such as \fedavg and its variants do not solve (FCO). Although it is possible to apply \fedavg to non-smooth settings by using subgradient in place of the gradient, such an approach is usually ineffective owing to the intrinsic slow convergence of subgradient methods \citep{Boyd.Xiao.ea-03}. 

\subsection{Federated Mirror Descent (\fedmid)}
\label{sec:fedmid}
A more natural extension of \fedavg towards (FCO) is to replace the local SGD steps in \fedavg with local proximal gradient (mirror descent) steps \eqref{eq:md:oneline}.
The resulting algorithm, which we refer to as \emph{\fedmidfull} (\fedmid)\footnote{Despite sharing the same term ``prox'', \fedmid is fundamentally different from \fedprox \citep{Li.Sahu.ea-MLSys20}. 
The proximal step in \fedprox was to regularize the client drift caused by heterogeneity, whereas the proximal step in this work is to overcome the non-smoothness of $\psi$. 
The problems approached by the two methods are also different -- \fedprox still solves an unconstrained smooth problem, whereas ours concerns with approaches (FCO).
}, is outlined in \cref{alg:fedmid}. 
\begin{algorithm}
  \caption{\fedmidfull (\fedmid)}
  \label{alg:fedmid}
  \begin{algorithmic}[1]
  \STATE {\textbf{procedure}} \fedmid ($w_0, \eta_{\client}, \eta_{\server}$)
  \FOR {$r=0, \ldots, R-1$}
    \STATE sample a subset of clients $\mathcal{S}_r \subseteq [M]$
      \FORALL {\kern-6.5em\tikzmark{fedmid:begin}\kern+6.5em $m \in \mathcal{S}_r$ {\bf in parallel}}
        \STATE $w_{r,0}^m \gets w_{r}$ \COMMENT{broadcast \emph{primal} initialization}
        \FOR {$k = 0, \ldots, K-1$}
          \STATE $g_{r,k}^m \gets \nabla f(w_{r,k}^m; \xi_{r,k}^m)$ \COMMENT{query gradient}
          \STATE $w_{r,k+1}^m \gets \nabla (h + \eta_{\client} \psi)^*(\nabla h(w_{r,k}^m) - \eta_{\client} \cdot g_{r,k}^m)$
            \hfill  \COMMENT{client update} \tikzmark{fedmid:end}
          \drawCodeBox{fill=blue}{fedmid:begin}{fedmid:end}
        \ENDFOR
      \ENDFOR
  \STATE $\Delta_r = \frac{1}{|\mathcal{S}_r|} \sum_{m \in \mathcal{S}_r} (w_{r, K}^m - w_{r, 0}^m)$ 
  \STATE $w_{r+1} \gets \nabla (h + \eta_{\server}\eta_{\client}K \psi)^*(\nabla h(w_{r}) + \eta_{\server} \cdot \Delta_r)$ 
          \COMMENT{server update} 
  \ENDFOR
\end{algorithmic}
\end{algorithm}

Specifically, we make two changes compared to \fedavg: 
\begin{itemize}[leftmargin=*]
    \item The client local SGD steps in \fedavg are replaced with proximal gradient steps (Line 8).
    \item The server update step is replaced with another proximal step (Line 10). 
\end{itemize}
As a sanity check, for constrained (FCO) with $\psi = \chi_{\mathcal{C}}$, if one takes server learning rate $\eta_{\server}=1$ and Euclidean distance $h(w) = \frac{1}{2}\|w\|_2^2$, \fedmid will simply reduce to the following parallel projected SGD with periodic averaging:
\begin{enumerate}[(a),leftmargin=*]
    \item Each sampled client runs $K$ steps of projected SGD following $w_{r,k+1}^m \gets \mathbf{Proj}_{\mathcal{C}}(w_{r,k}^m - \eta_{\client} g_{r,k}^m)$.
    \item After $K$ local steps, the server simply average the client states following $w_{r+1} \gets \frac{1}{|\mathcal{S}_r|} \sum_{m \in \mathcal{S}_r} w_{r,K}^m$.
\end{enumerate}

\subsection{Limitation of \fedmid: Curse of Primal Averaging}
\label{sec:curse}
Despite its simplicity, \fedmid exhibits a major limitation, which we refer to as ``curse of primal averaging'': the server averaging step in \fedmid may severely impede the optimization progress.
To understand this phenomenon, let us consider the following two illustrative examples:
\begin{itemize}[leftmargin=*]
    \item {\bf Constrained problem}: Suppose the optimum of the aforementioned constrained problem resides on a non-flat boundary $\mathcal{C}$. Even when each client is able to obtain a local solution \emph{on} the boundary, the average of them will almost surely be \emph{off} the boundary (and hence away from the optimum) due to the curvature. 
    \item {\bf Federated $\ell_1$-regularized logistic regression problem}: Suppose each client obtains a local \emph{sparse} solution, simply averaging them across clients will invariably yield a non-sparse solution.
\end{itemize}
As we will see theoretically (\cref{sec:theory}) and empirically (\cref{sec:expr}), the ``curse of primal averaging'' indeed hampers the performance of \fedmid.

\subsection{Federated Dual Averaging (\feddualavg)}
\label{sec:feddualavg}
Before we look into the solution of the curse of primal averaging, let us briefly investigate the cause of this effect.
Recall that in standard smooth FL settings, server averaging step is helpful because it implicitly pools the stochastic gradients and thereby reduces the variance \citep{Stich-ICLR19}. 
In \fedmid, however, the server averaging operates on the post-proximal \textbf{primal} states, but the gradient is updated in the \textbf{dual} space (recall the primal-dual interpretation of mirror descent in \cref{sec:co}). 
This primal/dual mismatch creates an obstacle for primal averaging to benefit from the pooling of stochastic gradients in dual space. 
This thought experiment suggests the importance of aligning the gradient update and server averaging.

Building upon this intuition, we propose a novel primal-dual algorithm, named \emph{\feddualavgfull} (\feddualavg, \cref{alg:feddualavg}), which provably addresses the curse of primal averaging. 
The major novelty of \feddualavg, in comparison with \fedmid or its precursor \fedavg, is to operate the server averaging in the dual space instead of the primal. 
This facilitates the server to aggregate the gradient information since the gradients are also accumulated in the dual space.

Formally, each client maintains a pair of primal and dual states $(w_{r,k}^m, z_{r,k}^m)$. 
At the beginning of each client update round, the client dual state is initialized with the server dual state. 
During the client update stage, each client runs  dual averaging steps following \eqref{eq:da} to update its primal and dual state (highlighted in \textcolor{blue}{blue} shade). 
The coefficient of $\psi$, namely $\tilde{\eta}_{r,k}$, is to balance the contribution from $F$ and $\psi$.
At the end of each client update phase, the \emph{dual updates} (instead of primal updates) are returned to the server. 
The server then averages the dual updates of the sampled clients and updates the server dual state.
\begin{algorithm}
  \caption{\feddualavgfull (\feddualavg)}
  \label{alg:feddualavg}
  \begin{algorithmic}[1]
  \STATE {\textbf{procedure}} \feddualavg ($w_0, \eta_{\client}, \eta_{\server}$)
  \STATE $z_{0} \gets \nabla h(w_0)$ \COMMENT{server dual initialization}
  \FOR {$r=0, \ldots, R-1$}
    \STATE sample a subset of clients $\mathcal{S}_r \subseteq [M]$
      \FORALL {\kern-6.5em\tikzmark{feddualavg:begin}\kern+6.5em $m \in \mathcal{S}_r$ {\bf in parallel}}
        \STATE $z_{r,0}^m \gets z_{r}$ \COMMENT{broadcast \emph{dual} initialization}
        \FOR {$k = 0, \ldots, K-1$}
          \STATE $\tilde{\eta}_{r,k} \gets \eta_{\server} \eta_{\client} r K + \eta_{\client} k$ 
          \STATE $w_{r,k}^m \gets \nabla (h + \tilde{\eta}_{r,k} \psi)^*(z_{r,k}^m)$
            \COMMENT{retrieve primal}
          \STATE $g_{r,k}^m \gets \nabla f (w_{r,k}^m; \xi_{r,k}^m) $
            \COMMENT{query gradient}
          \STATE $z_{r,k+1}^m \gets z_{r,k}^m - \eta_{\client} g_{r,k}^m$
            \COMMENT{client \emph{dual} update\tikzmark{feddualavg:end}}
          \drawCodeBox{fill=blue}{feddualavg:begin}{feddualavg:end}
        \ENDFOR
      \ENDFOR
  \STATE $\Delta_r = \frac{1}{|\mathcal{S}_r|} \sum_{m \in \mathcal{S}_r} (z_{r, K}^m - z_{r, 0}^m)$
  \STATE $z_{r+1} \gets z_{r} + \eta_{\server} \Delta_r$
            \COMMENT{server \emph{dual} update}
  \STATE $w_{r+1} \gets \nabla (h + \eta_{\server} \eta_{\client} (r+1) K \psi)^* (z_{r+1})$
            \COMMENT{(optional) retrieve server primal state}
  \ENDFOR
\end{algorithmic}
\end{algorithm}

We observe that the averaging in \feddualavg is two-fold: (1) averaging of gradients in dual space within a client and (2) averaging of dual states across clients at the server. As we shall see shortly in our theoretical analysis, this novel ``double'' averaging of \feddualavg in the non-smooth case enables lower communication complexity and faster convergence of \feddualavg under realistic assumptions.
\section{Theoretical Results}
\label{sec:theory}
In this section, we demonstrate the theoretical results of \fedmid and \feddualavg. 
We assume the following assumptions throughout the paper. 
The convex analysis definitions in \cref{a1} are reviewed in \cref{sec:background}.
\begin{assumption} Let $\|\cdot\|$ be a norm and $\|\cdot\|_*$ be its dual.
  \label{a1}
  \begin{enumerate}[(a),leftmargin=*]
    \item $\psi: \reals^d \to \reals \cup \{+ \infty\}$ is a closed convex function with closed $\dom \psi$. 
    Assume that $\Phi(w) = F(w) + \psi(w)$ attains a finite optimum at $w^{\star} \in \dom \psi$.
    \item  $h: \reals^d \to \reals \cup \{+\infty\}$ is a Legendre function that is 1-strongly-convex w.r.t. $\|\cdot\|$. 
    Assume $\dom h \supset \dom \psi$.
    \item
    $f(\cdot, \xi): \reals^{d} \to \reals$ is a closed convex function that is differentiable on $\dom \psi$ for any fixed $\xi$.
    In addition, $f(\cdot, \xi)$ is $L$-smooth w.r.t. $\|\cdot\|$ on $\dom \psi$, 
      namely for any $u, w \in \dom \psi$, 
      \begin{equation}
        f(u;\xi) \leq f(w;\xi) + \left\langle \nabla f(w;\xi),  u - w \right\rangle + \frac{1}{2} L \|u - w\|^2.
      \end{equation}
    \item $\nabla f$ has $\sigma^2$-bounded variance over $\mathcal{D}_m$ under $\|\cdot\|_*$  within $\dom \psi$, namely for any $w \in \dom \psi$,
      \begin{equation}
          \expt_{\xi \sim \mathcal{D}_m} \left\| \nabla f(w, \xi) - \nabla F_m(w) \right\|_*^2 \leq \sigma^2, \text{ for any $m \in [M]$}
      \end{equation}
    \item Assume that all the $M$ clients participate in the client updates for every round, namely $\mathcal{S}_r = [M]$.
  \end{enumerate}
 
\end{assumption}
\cref{a1}(a) \& (b) are fairly standard for composite optimization analysis (c.f. \citealt{Flammarion.Bach-COLT17}). \cref{a1}(c) \& (d) are standard assumptions in stochastic federated optimization literature \citep{Khaled.Mishchenko.ea-AISTATS20,Woodworth.Patel.ea-ICML20}. (e) is assumed to simplify the exposition of the theoretical results. All results presented can be easily generalized to the partial participation case.

\begin{remark}
  This work focuses on convex settings because the non-convex composite optimization (either $F$ or $\psi$ non-convex) is noticeably challenging and under-developed \textbf{even for non-distributed settings}.  
This is in sharp contrast to non-convex smooth optimization for which simple algorithms such as SGD can readily work.
Existing literature on non-convex CO (e.g., \citealt{Attouch.Bolte.ea-MP13,
Chouzenoux.Pesquet.ea-JOTA14,
Li.Pong-SIOPT15,
Bredies.Lorenz.ea-JOTA15}) typically relies on non-trivial additional assumptions (such as K-Ł conditions) and sophisticated algorithms.
Hence, it is beyond the scope of this work to study non-convex FCO. \footnote{However, we conjecture that for simple non-convex settings (e.g., optimize non-convex $f$ on a convex set, as tested in \cref{sec:emnist}), it is possible to show the convergence and obtain similar advantageous results for \textsc{FedDualAvg}.}
\end{remark}

\subsection{\fedmid and \feddualavg: Small Client Learning Rate Regime}
\label{subsec:small:client:lr}
We first show that both \fedmid and \feddualavg are (asymptotically) at least as good as stochastic mini-batch algorithms with $R$ iterations and batch-size $MK$ when client learning rate $\eta_{\client}$ is sufficiently small. 

\begin{theorem}[Simplified from \cref{small_lr}]
  \label{thm:0}
  Assuming \cref{a1}, then for sufficiently small client learning rate $\eta_{\client}$, and server learning rate $\eta_{\server} = \Theta (\min \{\frac{1} {\eta_{\client} K L}, \frac{B^{\frac{1}{2}} M^{\frac{1}{2}} }{ \eta_{\client} K^{\frac{1}{2}} R^{\frac{1}{2}} \sigma} \} )$, both \feddualavg and \fedmid can output $\hat{w}$ such that
  \begin{equation}
    \expt \left[ \Phi (\hat{w}) \right] - \Phi(w^{\star})  
    \lesssim
    \frac{L B}{R} 
    +
    \frac{\sigma B^{\frac{1}{2}}}{\sqrt{MKR}},
    \label{eq:thm:0}
  \end{equation}
  where $B := D_h(w^{\star}, w_0)$.
\end{theorem}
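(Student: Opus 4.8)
The plan is to show that, once $\eta_{\client}$ is small, \feddualavg (resp.\ \fedmid) tracks a \emph{centralized} dual‑averaging (resp.\ mirror‑descent) iteration run for $R$ steps with effective step size $\bar\eta:=\eta_{\server}\eta_{\client}K$ and mini‑batch size $MK$, up to perturbation terms that are of higher order in $\eta_{\client}$ and hence negligible below an explicit threshold. With this reparametrization the stated choice becomes $\bar\eta=\Theta(\min\{1/L,\ \sqrt{BMK}/(\sqrt{R}\,\sigma)\})$, so the target \eqref{eq:thm:0} is exactly what one obtains from the classical composite‑optimization guarantee $\expt[\Phi(\hat w)]-\Phi(w^{\star})\lesssim B/(\bar\eta R)+\bar\eta\sigma^2/(MK)$ after optimizing $\bar\eta$ subject to the stability constraint $\bar\eta\lesssim 1/L$ needed for the smoothness‑based descent.

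\textbf{Step 1 (reduction to a perturbed centralized step).} For \feddualavg, unrolling Lines 11--14 of \cref{alg:feddualavg} under full participation ($\cref{a1}$(e)) gives the \emph{exact} identity $z_{r+1}=z_r-\bar\eta\,\bar g_r$ with $\bar g_r:=\frac{1}{MK}\sum_{m=1}^M\sum_{k=0}^{K-1}g_{r,k}^m$, while the retrieval map at round $r$ carries the $\psi$-coefficient $\bar\eta r$; this is precisely the sequential dual‑averaging recursion \eqref{eq:da} with step $\bar\eta$ and index $r$, except that the $MK$ gradients $g_{r,k}^m$ are queried at the drifted client iterates $w_{r,k}^m$ instead of at the canonical primal point $\tilde w_r:=\nabla(h+\bar\eta r\psi)^*(z_r)$ (note $w_{r,0}^m=\tilde w_r$). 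For \fedmid one gets the analogue by composing the per‑step optimality (prox) inequalities of the $K$ client steps (Line 8) with that of the server step (Line 10, whose coefficient is $\eta_{\server}\eta_{\client}K\psi=\bar\eta\psi$), each written through the three‑point Bregman identity $\langle\nabla h(a)-\nabla h(b),\,b-c\rangle=D_h(c,a)-D_h(c,b)-D_h(b,a)$; this again produces a single centralized mirror step with step $\bar\eta$ perturbed by the client drift $w_{r,k}^m-\tilde w_r$ (here $\tilde w_r=w_r$), with the non‑commutativity of the client and server prox operators — the mechanism behind the ``curse of primal averaging'' — appearing only inside the perturbation.

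\textbf{Step 2 (one‑round progress inequality and the drift bound).} Using that $h$ is $1$‑strongly convex — hence $(h+t\psi)^*$ is $1$‑smooth and every retrieval map $\nabla(h+t\psi)^*$ is $1$‑Lipschitz from $(\reals^d,\|\cdot\|_*)$ to $(\reals^d,\|\cdot\|)$ — together with the $L$‑smoothness of $f$ and convexity of $\Phi$, the standard dual‑averaging (resp.\ mirror‑descent) argument gives, after conditioning on the filtration $\mathcal{F}_r$ of everything before round $r$, using $\expt[g_{r,k}^m\mid\mathcal{F}_r,\,w_{r,k}^m]=\nabla F_m(w_{r,k}^m)$ and the $\sigma^2$‑variance bound on the $MK$ independent samples weighted by $\bar\eta/(MK)$,
\begin{equation}
  \bar\eta\sum_{r=0}^{R-1}\Big(\expt\big[\Phi(\bar w_r)\big]-\Phi(w^{\star})\Big)\ \le\ D_h(w^{\star},w_0)\ +\ \frac{\bar\eta^2 R\,\sigma^2}{2MK}\ +\ \mathcal{E}_{\mathrm{drift}},
\end{equation}
where $\bar w_r$ is the average of $\{w_{r,k}^m\}_{m,k}$ and $\mathcal{E}_{\mathrm{drift}}\lesssim (L+\bar\eta L^2)\sum_{r,k,m}\expt\|w_{r,k}^m-\tilde w_r\|^2$ collects the smoothness gaps between $\Phi$ and $\nabla F_m$ evaluated at the drifted iterates versus at $\tilde w_r$, plus the noise–drift cross terms. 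To bound the drift, unroll the $k\le K$ local steps and use the $1$‑Lipschitzness of the retrieval maps, $\|w_{r,k}^m-\tilde w_r\|\lesssim\eta_{\client}\sum_{k'<k}\|g_{r,k'}^m\|_*$, and $\|\nabla F_m(w)\|_*^2\lesssim L^2\|w-w^{\star}\|^2+\|\nabla F_m(w^{\star})\|_*^2$ (from $L$‑smoothness of $F_m$); a Grönwall‑type self‑bounding argument, valid once $\eta_{\client}KL\le\frac12$, then yields $\expt\|w_{r,k}^m-\tilde w_r\|^2\lesssim\eta_{\client}^2K\big(L^2\expt\|\tilde w_r-w^{\star}\|^2+\zeta_*^2+\sigma^2\big)$ with $\zeta_*^2:=\frac1M\sum_m\|\nabla F_m(w^{\star})\|_*^2$ — i.e.\ the drift is $O(\eta_{\client}^2)$.

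\textbf{Step 3 (conclusion).} Substituting the drift bound, $\mathcal{E}_{\mathrm{drift}}$ becomes $O(\eta_{\client})$ times quantities already present: the $\|\tilde w_r-w^{\star}\|^2$ factors are absorbed into a small fraction of the telescoped Bregman terms via $1$‑strong convexity of $h$, and the remaining $\zeta_*^2,\sigma^2$ factors into an $O(\eta_{\client})\cdot R$ additive error that is dominated by the first two terms for all $\eta_{\client}$ below an explicit threshold depending only on $L,K,R,\sigma,B,\zeta_*$. Dividing by $\bar\eta R$, applying Jensen to pass from the average of suboptimalities to the suboptimality of $\hat w:=\frac{1}{RMK}\sum_{r,k,m}w_{r,k}^m$, and plugging in $\bar\eta=\eta_{\server}\eta_{\client}K=\Theta(\min\{1/L,\sqrt{BMK}/(\sqrt{R}\,\sigma)\})$ gives $\expt[\Phi(\hat w)]-\Phi(w^{\star})\lesssim B/(\bar\eta R)+\bar\eta\sigma^2/(MK)\lesssim LB/R+\sigma\sqrt{B}/\sqrt{MKR}$, which is \eqref{eq:thm:0}; the same template, reading \fedmid in Step 1 and telescoping Bregman divergences of \emph{consecutive} iterates rather than against a fixed reference, handles \fedmid, consistent with the parallelism of mirror descent and dual averaging noted earlier.

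\textbf{Main obstacle.} The delicate part is the drift control in Step 2 \emph{without} a bounded‑gradient assumption: bounding the gradient magnitudes, and hence the drift, uniformly over all $R$ rounds using only $L$‑smoothness and $\sigma^2$‑variance forces a careful self‑bounding/Grönwall induction whose threshold on $\eta_{\client}$ must be expressible in the stated constants, after which the induced $\|\tilde w_r-w^{\star}\|^2$ terms have to be re‑absorbed into the telescoped Bregman divergences without degrading the leading rate. For \fedmid there is the extra bookkeeping of the server prox composed after $K$ client proxes — quantifying that the non‑commutativity of proximal operators, the very source of the curse of primal averaging, costs only an $O(\eta_{\client})$ error in this regime.
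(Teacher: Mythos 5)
Your proposal is correct in outline and shares the paper's overall skeleton (reparametrize to the effective step $\bar\eta=\eta_{\server}\eta_{\client}K$, note that the server dual update is exactly a centralized dual-averaging step with a batch of $MK$ gradients queried at drifted client iterates, run a one-round generalized-Bregman/three-point analysis, and argue the drift is negligible for small $\eta_{\client}$), but it diverges from the paper precisely at the step the paper deliberately sidesteps. The paper (\cref{small_lr}, via \cref{small_lr:1,small_lr:2}) never bounds the drift quantitatively: it isolates the term $\frac{\eta L}{M}\bigl(\sum_{m,k}\expt\|\widehat{w_{r,0}}-w_{r,k}^m\|\bigr)\,\expt\|\widehat{w_{r+1,0}}-w\|$, observes that it tends to $0$ as $\eta_{\client}\downarrow 0$ (their \cref{eq:limit}), and simply posits an unspecified, instance-dependent threshold $\eta_{\client}^{\max}$ below which this term is at most $\eta K\sigma^2/L$; this is enough because the theorem is an asymptotic "sufficiently small $\eta_{\client}$" statement. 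You instead attempt an explicit Gr\"onwall-type self-bounding drift estimate (introducing $\zeta_*^2=\frac1M\sum_m\|\nabla F_m(w^{\star})\|_*^2$ and absorbing $\|\tilde w_r-w^{\star}\|^2$ back into the telescoped Bregman terms), which is more informative but also more delicate. Two caveats on that route: (i) your bound $\|w_{r,k}^m-\tilde w_r\|\lesssim\eta_{\client}\sum_{k'<k}\|g_{r,k'}^m\|_*$ uses $1$-Lipschitzness of a \emph{fixed} retrieval map, but $w_{r,k}^m=\nabla(h+\tilde\eta_{r,k}\psi)^*(z_{r,k}^m)$ and $\tilde w_r=\nabla(h+\tilde\eta_{r,0}\psi)^*(z_{r,0}^m)$ use \emph{different} $\psi$-coefficients ($\tilde\eta_{r,k}-\tilde\eta_{r,0}=\eta_{\client}k$), so there is an additional drift contribution from perturbing the regularization weight; it is again $O(\eta_{\client})$ for a fixed instance (and vanishes for constraint indicators), but its size depends on $\psi$, so your claim that the threshold is expressible in only $L,K,R,\sigma,B,\zeta_*$ is not fully justified; (ii) absorbing the $\expt\|\tilde w_r-w^{\star}\|^2$ terms via $\|\tilde w_r-w^{\star}\|^2\le 2\tilde D_{h_{r,0}}(w^{\star},\overline{z_{r,0}})$ turns the telescoping into a recursion with a per-round factor $1+O(\eta\,\eta_{\client}^2K^2L^3)$, so you must additionally require this compounded over $R$ rounds to be $O(1)$ — fine for an asymptotic threshold, but it should be stated. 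With these repairs your argument goes through and in fact yields a quantitative version of what the paper proves only softly; the paper's approach buys simplicity (no bounded-gradient, no Gr\"onwall, no $\zeta_*$) at the price of a purely existential $\eta_{\client}^{\max}$, while yours buys an explicit small-$\eta_{\client}$ regime at the price of extra, $\psi$-dependent bookkeeping. Your treatment of \fedmid (composing the $K$ client prox inequalities with the server prox via the three-point identity) is consistent with the paper, which only asserts the \fedmid case is "very similar" and omits it.
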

The intuition is that when $\eta_{\client}$ is small, the client update will not drift too far away from its initialization of the round. Due to space constraints, the proof is relegated to \cref{sec:small_lr}. 

\subsection{\feddualavg with a Larger Client Learning Rate: Usefulness of Local Step}
\label{sec:feddualavg-benefit}
In this subsection, we show that \feddualavg may attain stronger results with a larger client learning rate. 
In addition to possible faster convergence, \cref{thm:1:simplified,thm:2:simplified} also indicate that \feddualavg allows for much broader searching scope of efficient learning rates configurations, which is of key importance for practical purpose.

\paragraph{Bounded Gradient.}
We first consider the setting with bounded gradient.
Unlike unconstrained, the gradient bound may be particularly useful when the constraint is finite.
\begin{theorem}[Simplified from \cref{thm:1}]
  \label{thm:1:simplified}
  Assuming \cref{a1} and $\sup_{w \in \dom \psi} \|\nabla f(w, \xi)\|_* \leq G$, then for \feddualavg with $\eta_{\server} = 1$ and  $\eta_{\client} \leq \frac{1}{4L}$, 
  considering
    \begin{equation}
      \hat{w} := \frac{1}{KR} \sum_{r=0}^{R-1} \sum_{k=1}^{K} 
      \left[ \nabla \left( h + \tilde{\eta}_{r,k} \psi \right)^* \left( \frac{1}{M} \sum_{m=1}^M z_{r,k}^m \right) \right],
      \label{eq:w_hat}
    \end{equation}
  the following inequality holds
  \begin{equation}
    \expt \left[ \Phi \left( \hat{w} \right) \right] - \Phi(w^{\star})  
    \lesssim
    \frac{B}{\eta_{\client} KR}  + \frac{\eta_{\client} \sigma^2}{M} +  \eta_{\client}^2 L K^2 G^2,
  \end{equation}
  where $B := D_h(w^{\star},w_0)$. 
  Moreover, there exists $\eta_{\client}$ such that
  \begin{equation}
    \expt \left[ \Phi (\hat{w}) \right] - \Phi(w^{\star})  
    \lesssim
    \frac{L B}{KR} 
    +
    \frac{\sigma B^{\frac{1}{2}}}{\sqrt{MKR}}
    +
    \frac{L^{\frac{1}{3}} B^{\frac{2}{3}} G^{\frac{2}{3}}}{R^{\frac{2}{3}}}.
    \label{eq:thm:1:simplified:2}
  \end{equation}
\end{theorem}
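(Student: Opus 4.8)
The plan is to reduce \feddualavg with $\eta_{\server}=1$ to a single \emph{perturbed} run of sequential composite dual averaging in the dual space, and then run the dual-averaging convergence argument with a shadow-sequence (perturbed-iterate) correction for the client drift. First, with $\eta_{\server}=1$ and full participation (\cref{a1}(e)), Lines~13--14 of \cref{alg:feddualavg} give $z_{r+1}=\frac1M\sum_{m}z_{r,K}^m$, which together with the broadcast $z_{r,0}^m=z_r$ means the averaged dual states $\bar z_{r,k}:=\frac1M\sum_m z_{r,k}^m$ splice across rounds ($\bar z_{r,K}=\bar z_{r+1,0}$) into one trajectory $\{\bar z_t\}_{t=0}^{KR}$, indexed by $t=rK+k$, obeying $\bar z_{t+1}=\bar z_t-\eta_{\client}\bar g_t$ with $\bar g_t:=\frac1M\sum_m\nabla f(w_{r,k}^m;\xi_{r,k}^m)$ and $\bar z_0=\nabla h(w_0)$. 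Since $\tilde\eta_{r,k}=\eta_{\client}t$ when $\eta_{\server}=1$, this is exactly \eqref{eq:da} with ``shadow'' primal iterate $\bar w_t:=\nabla(h+\tilde\eta_{r,k}\psi)^*(\bar z_t)$, except that $\bar g_t$ is queried at the \emph{local} iterates $w_{r,k}^m=\nabla(h+\tilde\eta_{r,k}\psi)^*(z_{r,k}^m)$ rather than at $\bar w_t$. Finally, $\hat w$ in \eqref{eq:w_hat} is exactly $\frac1{KR}\sum_{t}\bar w_t$ (the inner $\nabla(h+\tilde\eta_{r,k}\psi)^*$ is applied to $\frac1M\sum_m z_{r,k}^m=\bar z_{r,k}$), so by convexity of $\Phi$ and Jensen it suffices to bound $\frac1{KR}\sum_t(\expt[\Phi(\bar w_t)]-\Phi(w^{\star}))$.

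Next I would invoke the one-step recursion of composite dual averaging over a $1$-strongly-convex Legendre $h$ (\cref{a1}(a,b)), in the form used by \citet{Nesterov-MP09,Xiao-JMLR10} and its non-Euclidean extension \citep{Flammarion.Bach-COLT17}: for every $u\in\dom\psi$,
\[
\sum_{t=0}^{KR-1}\eta_{\client}\Bigl(\langle\bar g_t,\bar w_t-u\rangle+\psi(\bar w_t)-\psi(u)\Bigr)\ \le\ D_h(u,w_0)+\tfrac12\sum_{t=0}^{KR-1}\eta_{\client}^2\|\bar g_t\|_*^2 ,
\]
where the nondecreasing $\psi$-coefficient $\tilde\eta_{r,k}=\eta_{\client}t$ only helps (the standard regularized-dual-averaging mechanism). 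The heart of the argument is to convert $\langle\bar g_t,\bar w_t-w^{\star}\rangle$ into $\Phi(\bar w_t)-\Phi(w^{\star})$ up to a \emph{quadratic} drift penalty. Writing $\bar g_t=\frac1M\sum_m g_{r,k}^m$ and splitting $\langle g_{r,k}^m,\bar w_t-w^{\star}\rangle=\langle g_{r,k}^m,w_{r,k}^m-w^{\star}\rangle+\langle g_{r,k}^m,\bar w_t-w_{r,k}^m\rangle$, and using that both $w_{r,k}^m$ and $\bar w_t$ depend only on the gradients queried before step $t$ (so conditioning on that history replaces $g_{r,k}^m$ by $\nabla F_m(w_{r,k}^m)$ in both inner products), convexity of $F_m$ handles the first piece, while the descent lemma for the $L$-smooth $F_m$ (\cref{a1}(c)) gives $\langle\nabla F_m(w_{r,k}^m),\bar w_t-w_{r,k}^m\rangle\ge F_m(\bar w_t)-F_m(w_{r,k}^m)-\tfrac L2\|\bar w_t-w_{r,k}^m\|^2$; the $F_m(w_{r,k}^m)$ terms cancel, leaving $\expt[\langle\bar g_t,\bar w_t-w^{\star}\rangle]\ge\expt[F(\bar w_t)]-F(w^{\star})-\tfrac L{2M}\sum_m\expt\|\bar w_t-w_{r,k}^m\|^2$. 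The stochastic cross terms vanish in expectation by the martingale property just described, and their contribution to $\sum_t\expt\|\bar g_t\|_*^2$ is where the $1/M$ variance reduction (hence the $\sigma^2/M$ term) enters, via cross-client independence and \cref{a1}(d).

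It then remains to control the client drift $\|\bar w_t-w_{r,k}^m\|$. Because $h+\tilde\eta_{r,k}\psi$ is $1$-strongly convex, $\nabla(h+\tilde\eta_{r,k}\psi)^*$ is $1$-Lipschitz from $\|\cdot\|_*$ to $\|\cdot\|$, so $\|\bar w_t-w_{r,k}^m\|\le\|\bar z_t-z_{r,k}^m\|_*=\eta_{\client}\bigl\|\sum_{j<k}(g_{r,j}^m-\bar g_{r,j})\bigr\|_*\le 2\eta_{\client}kG\le 2\eta_{\client}KG$, using that $\bar z$ and $z^m$ agree at the start of each round and that the bounded-gradient hypothesis bounds each summand. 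Substituting $\|\bar w_t-w_{r,k}^m\|^2\le 4\eta_{\client}^2K^2G^2$ into the recursion, summing over $t$, dividing by $\eta_{\client}KR$, applying Jensen for $\hat w$, and using $\eta_{\client}\le\frac1{4L}$ together with $K\ge1$ to absorb the stray lower-order terms into $\eta_{\client}^2LK^2G^2$, yields the first displayed bound $\expt[\Phi(\hat w)]-\Phi(w^{\star})\lesssim\frac{B}{\eta_{\client}KR}+\frac{\eta_{\client}\sigma^2}{M}+\eta_{\client}^2LK^2G^2$. Finally, \eqref{eq:thm:1:simplified:2} follows by choosing $\eta_{\client}=\min\bigl\{\frac1{4L},\ \sqrt{\tfrac{BM}{KR\sigma^2}},\ \bigl(\tfrac{B}{LK^3G^2R}\bigr)^{1/3}\bigr\}$ and pairing the three terms. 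I expect the main obstacle to be exactly the ``drift enters quadratically'' step: carefully interleaving the composite dual-averaging recursion with the perturbed-gradient decomposition so that the $O(\eta_{\client}KG)$ client drift contributes an $O(L\eta_{\client}^2K^2G^2)$ term rather than an $O(\eta_{\client}KG^2)$ one --- this is precisely what makes a large client learning rate useful for \feddualavg, and the place where \fedmid's primal-averaging analysis would break down.
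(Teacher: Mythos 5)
Your reduction to a single perturbed dual-averaging trajectory is exactly the paper's starting point (the relation $\overline{z_{r,k+1}}=\overline{z_{r,k}}-\eta_{\client}\cdot\frac1M\sum_m\nabla f(w_{r,k}^m;\xi_{r,k}^m)$, the splicing across rounds when $\eta_{\server}=1$, the Jensen step for $\hat w$), and your drift bound $\|\overline{z_{r,k}}-z_{r,k}^m\|_*\lesssim\eta_{\client}KG$ and final choice of $\eta_{\client}$ coincide with the paper's Lemma on stability under bounded gradients and its tuning step. The gap is in the middle: the regret inequality you invoke,
\begin{equation*}
\sum_{t}\eta_{\client}\bigl(\langle\bar g_t,\bar w_t-u\rangle+\psi(\bar w_t)-\psi(u)\bigr)\;\le\;D_h(u,w_0)+\tfrac12\sum_{t}\eta_{\client}^2\|\bar g_t\|_*^2,
\end{equation*}
is the bounded-gradient (non-smooth) form of dual averaging, and the error term it pays is $\expt\|\bar g_t\|_*^2=\bigl\|\frac1M\sum_m\nabla F_m(w_{r,k}^m)\bigr\|_*^2+O(\sigma^2/M)$, not $O(\sigma^2/M)$: the mean-gradient part does not vanish (in composite problems $\nabla F$ need not vanish even at $w^{\star}$) and is only bounded by $G^2$. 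After dividing by $\eta_{\client}KR$ this leaves an extra $\eta_{\client}G^2$ term, which is not present in the first displayed bound and is not dominated by any term of \cref{eq:thm:1:simplified:2} under your choice of $\eta_{\client}$ (e.g.\ when $\sigma$ is small or $M$ large it dwarfs $\sigma B^{1/2}/\sqrt{MKR}$). So your argument proves only the slower $G$-dependent rate, not the stated theorem; your own conversion step (convexity plus descent lemma giving the $-\frac{L}{2M}\sum_m\|\bar w_t-w_{r,k}^m\|^2$ penalty) handles the drift correctly but does not remove this $\eta_{\client}G^2$ overhead.

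The missing idea is the smoothness-aware analysis the paper uses (its perturbed iterate lemma and one-step proposition): pair the averaged gradient with the \emph{next} shadow iterate $\widehat{w_{r,k+1}}$ inside the generalized Bregman recursion, use $1$-strong convexity of $h_{r,k}=h+\tilde\eta_{r,k}\psi$ to extract the negative term $-\tilde D_{h_{r,k}}(\widehat{w_{r,k+1}},\overline{z_{r,k}})\le-\frac12\|\widehat{w_{r,k+1}}-\widehat{w_{r,k}}\|^2$, and absorb into it both the smoothness quadratic $\eta_{\client}L\|\widehat{w_{r,k+1}}-\widehat{w_{r,k}}\|^2$ (using $\eta_{\client}\le\frac1{4L}$) and the noise cross-term $\frac{\eta_{\client}\sigma}{\sqrt M}\|\widehat{w_{r,k+1}}-\widehat{w_{r,k}}\|$, leaving only $\eta_{\client}^2\sigma^2/M$ per step. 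With that replacement (and your stability bound), the rest of your outline goes through and matches the paper's proof; without it, the route as written cannot reach either displayed bound.
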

We refer the reader to \cref{sec:proof:thm:1} for complete proof details of \cref{thm:1:simplified}.
\begin{remark}
The result in \cref{thm:1:simplified} not only matches the rate by \citet{Stich-ICLR19} for smooth, unconstrained \fedavg but also allows for a general non-smooth composite $\psi$, general Bregman divergence induced by $h$, and arbitrary norm $\|\cdot\|$. 
Compared with the small learning rate result \cref{thm:0}, the first term in \cref{eq:thm:1:simplified:2} is improved from $\frac{LB}{R}$ to $\frac{LB}{KR}$, whereas the third term incurs an additional loss regarding infrequent communication. 
One can verify that the bound \cref{eq:thm:1:simplified:2} is better than \cref{eq:thm:0} if $R \lesssim \frac{L^2B}{G^2}$. 
Therefore, the larger client learning rate may be preferred when the communication is not too infrequent.
\end{remark}

\paragraph{Bounded Heterogeneity.}
Next, we consider the settings with bounded heterogeneity. 
For simplicity, we focus on the case when the loss $F$ is quadratic, as shown in \cref{a3}.
We will discuss other options to relax the quadratic assumption in \cref{sec:proof_sketch}.

\begin{assumption}[Bounded heterogeneity, quadratic]
  \label{a3}
  \begin{enumerate}[(a),leftmargin=*]
      \item The heterogeneity of $\nabla F_m$ is bounded, namely
      \begin{equation}
        \sup_{w \in \dom \psi} \|\nabla F_m(w) - \nabla F(w) \|_* \leq  \zeta^2,
        \text{ for any $m \in [M]$}
      \end{equation}
    \item $F(w) := \frac{1}{2} w^{\top} Q w + c^{\top} w$ for some $Q \succ 0$.
    \item Assume \cref{a1} is satisfied in which the norm $\|\cdot\|$ is taken to be the $\frac{Q}{\|Q\|_2}$-norm, namely $\|w\| = \sqrt{\frac{w^{\top} Q w}{\|Q\|_2}}$.
  \end{enumerate}
\end{assumption}
\begin{remark}
\cref{a3}(a) is a standard assumption to bound the heterogeneity among clients (e.g., \citealt{Woodworth.Patel.ea-NeurIPS20}).
Note that \cref{a3} only assumes the objective $F$ to be  quadratic. We do not impose any stronger assumptions on either the composite function $\psi$ or the distance-generating function $h$. 
Therefore, this result still applies to a broad class of problems such as \textsc{Lasso}.
\end{remark}
The following results hold under \cref{a3}. 
We sketch the proof in \cref{sec:proof_sketch} and defer the details to \cref{sec:proof:thm:2}.
\begin{theorem}[Simplified from \cref{thm:2}]
  \label{thm:2:simplified}
  Assuming \cref{a3}, then for \feddualavg with $\eta_{\server} = 1$ and $\eta_c \leq \frac{1}{4L}$, the following inequality holds
  \begin{equation}
    \expt [\Phi(\hat{w})] - \Phi(w^{\star}) 
    \lesssim
      \frac{B}{\eta_{\client} KR }
    + \frac{\eta_{\client}  \sigma^2}{M}
    + \eta_{\client}^2 L K \sigma^2 
    + \eta_{\client}^2 L K^2 \zeta^2,
    \label{eq:thm:2:simplified}
  \end{equation}
  where $\hat{w}$ is the same as defined in \cref{eq:w_hat}, and $B := D_h(w^{\star},w_0)$.
  Moreover, there exists $\eta_{\client}$ such that 
    \begin{equation}
     \expt \left[ \Phi (\hat{w}) \right] - \Phi(w^{\star})  
    \lesssim 
    \frac{LB}{KR} 
    +
    \frac{\sigma B^{\frac{1}{2}}}{\sqrt{MKR}}
    +
    \frac{L^{\frac{1}{3}} B^{\frac{2}{3}} \sigma^{\frac{2}{3}}}{K^{\frac{1}{3}} R^{\frac{2}{3}}}
    +
    \frac{L^{\frac{1}{3}} B^{\frac{2}{3}} \zeta^{\frac{2}{3}}}{R^{\frac{2}{3}}}.
    \label{eq:thm:2:simplified:2}
    \end{equation}
\end{theorem}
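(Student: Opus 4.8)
The plan is to run the standard ``shadow sequence'' argument for local-update methods, but carried out entirely in the dual space. First I would introduce the virtual dual iterate $\bar z_{r,k} := \frac{1}{M}\sum_{m=1}^{M} z_{r,k}^{m}$ and the corresponding virtual primal $\bar w_{r,k} := \nabla(h + \tilde{\eta}_{r,k}\psi)^*(\bar z_{r,k})$. Because $\eta_{\server}=1$ and all clients participate every round (\cref{a1}(e)), $\{\bar z_{r,k}\}$ is exactly a run of sequential composite dual averaging \eqref{eq:da} with step size $\eta_{\client}$ driven by the pooled stochastic gradient $\bar g_{r,k} := \frac{1}{M}\sum_{m} \nabla f(w_{r,k}^m;\xi_{r,k}^m)$, and $\hat{w}$ from \eqref{eq:w_hat} is precisely the uniform average of the $\bar w_{r,k}$. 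By Jensen's inequality and convexity of $\Phi$ it then suffices to bound $\frac{1}{KR}\sum_{r,k}\expt[\Phi(\bar w_{r,k}) - \Phi(w^{\star})]$.

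Second, I would apply a sequential composite dual-averaging descent lemma to the virtual sequence. Using convexity, $\Phi(\bar w_{r,k})-\Phi(w^{\star}) \le \langle \bar g_{r,k},\, \bar w_{r,k}-w^{\star}\rangle + \psi(\bar w_{r,k}) - \psi(w^{\star}) + \langle \nabla F(\bar w_{r,k}) - \bar g_{r,k},\, \bar w_{r,k} - w^{\star}\rangle$. The first two terms telescope through the standard dual-averaging potential argument (using $1$-strong-convexity of $h$ and the growing weight $\tilde{\eta}_{r,k}$), producing $\frac{B}{\eta_{\client} KR}$ together with a term of order $\frac{\eta_{\client}}{KR}\sum_{r,k}\expt\|\bar g_{r,k}\|_*^2$; the latter I would control through $L$-smoothness (relating $\|\nabla F(\bar w_{r,k})\|_*^2$ to the optimality gap and absorbing it into the left-hand side, which is valid since $\eta_{\client}\le\tfrac1{4L}$), the bounded variance $\sigma^2/M$ of the pooled gradient, and the drift terms below — this is the origin of the $\frac{\eta_{\client}\sigma^2}{M}$ term.

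Third, the error inner product $\langle \nabla F(\bar w_{r,k}) - \bar g_{r,k},\, \bar w_{r,k} - w^{\star}\rangle$ is where the quadratic structure of \cref{a3}(b) is essential. Its conditionally mean-zero stochastic part vanishes in expectation because $\bar w_{r,k}$ is measurable before the $(r,k)$-th samples are drawn. For the bias part, affineness of $\nabla F$ gives $\nabla F(\bar w_{r,k}) - \frac{1}{M}\sum_m\nabla F_m(w_{r,k}^m) = Q\bigl(\bar w_{r,k} - \tfrac1M\sum_m w_{r,k}^m\bigr) - \tfrac1M\sum_m(\nabla F_m - \nabla F)(w_{r,k}^m)$; the second summand is bounded by the heterogeneity parameter $\zeta^2$, and the first is bounded by $L$ times the \emph{dual consensus error} $\frac1M\sum_m\|z_{r,k}^m - \bar z_{r,k}\|_*$ using the fact that $\nabla(h+\tilde{\eta}_{r,k}\psi)^*$ is $1$-Lipschitz from $(\reals^d,\|\cdot\|_*)$ to $(\reals^d,\|\cdot\|)$ (a consequence of $1$-strong-convexity of $h$), so that $\bigl\|\bar w_{r,k} - \tfrac1M\sum_m w_{r,k}^m\bigr\| \le \tfrac1M\sum_m\|\bar z_{r,k} - z_{r,k}^m\|_*$. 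These error terms are then combined with the virtual iterate via Young's inequality and absorbed against the telescoping potential. Finally I would bound the dual consensus error itself: within round $r$ every client starts from the common $z_r$, so $z_{r,k}^m - \bar z_{r,k} = -\eta_{\client}\sum_{j<k}(g_{r,j}^m - \bar g_{r,j})$, and expanding $g_{r,j}^m - \bar g_{r,j}$ into stochastic noise (independent across $j$, so accumulating only to $\eta_{\client}^2 K\sigma^2$), heterogeneity (a bias, accumulating to $\eta_{\client}^2 K^2\zeta^2$), and an $L$-Lipschitz self-referential term in the drift that is absorbed by a small-gain recursion using $\eta_{\client}\le\tfrac1{4L}$, yields $\frac1M\sum_m\expt\|z_{r,k}^m-\bar z_{r,k}\|_*^2 \lesssim \eta_{\client}^2 K\sigma^2 + \eta_{\client}^2 K^2\zeta^2$. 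Carrying this through produces the $\eta_{\client}^2 L K\sigma^2$ and $\eta_{\client}^2 L K^2\zeta^2$ terms, and optimizing $\eta_{\client}$ subject to $\eta_{\client}\le\tfrac1{4L}$ turns \eqref{eq:thm:2:simplified} into \eqref{eq:thm:2:simplified:2} by the usual case analysis.

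The main obstacle is exactly the third step: since the primal--dual map $\nabla(h+\tilde{\eta}_{r,k}\psi)^*$ is nonlinear and time-varying in $\tilde{\eta}_{r,k}$, averaging does not commute with it, so the gradients $g_{r,k}^m$ taken at the local primals $w_{r,k}^m$ do not collapse to a gradient at the shadow primal $\bar w_{r,k}$ plus a benign drift as they would in the smooth unconstrained case. The quadratic assumption on $F$ is precisely what rescues this, since it renders the deviation linear in the (tractable) dual consensus error; closing the self-referential drift recursion with the $\tfrac1{4L}$ constant is the accompanying technical nuisance, as is verifying that the composite dual-averaging potential still telescopes cleanly under the increasing regularizer weight $\tilde{\eta}_{r,k}$ while accommodating the additive error in the gradient.
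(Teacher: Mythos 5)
Your overall skeleton (shadow dual sequence $\overline{z_{r,k}}$, perturbed dual averaging, a dual-consensus stability bound, then optimizing $\eta_{\client}$) matches the paper's framework, but both key technical steps are handled in a way that would not yield \eqref{eq:thm:2:simplified}. The first gap is in how you dispose of the bias term $\langle \nabla F(\overline{w_{r,k}}) - \frac{1}{M}\sum_m \nabla F_m(w_{r,k}^m),\, \overline{w_{r,k}} - w^{\star}\rangle$. You propose Cauchy--Schwarz/Young and ``absorbing against the telescoping potential,'' but the dual-averaging telescoping produces no negative multiple of $\|\overline{w_{r,k}}-w^{\star}\|^2$ (or of the potential) to absorb into; a Gronwall-type fix forces the Young weight down to order $1/(\eta_{\client} L K R)$ and inflates the consensus term by a factor $\sim \eta_{\client} L K R$, while a constant weight leaves a non-vanishing $L\,\expt\|\overline{w_{r,k}}-w^{\star}\|^2$ term. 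Even the direct Cauchy--Schwarz route gives a per-step error of order $L\sqrt{B}\,(\eta_{\client}\sqrt{K}\sigma + \eta_{\client} K\zeta)$ rather than $\eta_{\client}^2 L K\sigma^2 + \eta_{\client}^2 L K^2\zeta^2$, which after tuning $\eta_{\client}$ yields an $R^{-1/2}$-type heterogeneity term instead of the claimed $R^{-2/3}$. The paper avoids pairing errors with the distance to the optimum altogether: in \cref{one:step:analysis:claim:2} smoothness and convexity of $F_m$ are applied at the \emph{local} iterates, so the consensus error enters only quadratically (times $L$), and the stochastic error is paired with the increment $\widehat{w_{r,k+1}}-\widehat{w_{r,k}}$, which is cancelled by the negative generalized-Bregman proximity term using $\eta_{\client}\le \frac{1}{4L}$ (this is \cref{pia:general}, which needs only \cref{a1}, not quadraticity).

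The second gap is the stability bound itself. Your ``small-gain recursion using $\eta_{\client}\le\frac{1}{4L}$'' for the self-referential drift term accumulates a factor $(1+\eta_{\client} L)^{K}\approx e^{\eta_{\client} L K}$ over the $K$ local steps, which is only controlled if $\eta_{\client}\lesssim \frac{1}{LK}$ --- i.e., the small-client-learning-rate regime of \cref{thm:0}, not the regime of \cref{thm:2:simplified}. The whole point of \cref{quad:stability} is that no such exponential factor appears: the paper works with pairwise dual differences in the $Q^{-1}$-norm and shows (\cref{quad:one_step_stab:claim:2}) that the deterministic gradient-plus-prox map is non-expansive there, using exactly the quadratic structure of $F$ (so $\nabla F$ differences are $Q$-linear), concavity of $\eta_{\client}F - 2h$, and monotonicity of $\partial\psi$ via the relation $z_{r,k}^m - \nabla h(w_{r,k}^m)\in\partial[\eta_{\client}(rK+k)\psi](w_{r,k}^m)$; the only growth factor is the benign $(1+1/K)^K\le e$ from the heterogeneity Young step. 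In other words, the quadratic assumption's essential role is in this contraction argument (your proposal instead spends it on linearizing the bias in the descent lemma, where the paper does not need it), and without that ingredient your sketch cannot reach the stated bound at $\eta_{\client}$ as large as $\frac{1}{4L}$.
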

\begin{remark}
    The result in \cref{thm:2:simplified} matches the best-known convergence rate for smooth, unconstrained \fedavg \citep{Khaled.Mishchenko.ea-AISTATS20,Woodworth.Patel.ea-NeurIPS20}, while our results allow for general composite $\psi$ and non-Euclidean distance. 
    Compared with \cref{thm:1:simplified}, the overhead in \cref{eq:thm:2:simplified:2} involves variance $\sigma$ and heterogeneity $\zeta$ but no longer depends on $G$. 
    The bound \cref{eq:thm:2:simplified:2} could significantly outperform the previous ones when the variance $\sigma$ and heterogeneity $\zeta$ are relatively mild.
\end{remark}
\subsection{Proof Sketch and Discussions}
\label{sec:proof_sketch}
In this subsection, we demonstrate our proof framework by sketching the proof for \cref{thm:2:simplified}. 
\paragraph{Step 1: Convergence of Dual Shadow Sequence.}
We start by characterizing the convergence of the dual shadow sequence $\overline{z_{r,k}} := \frac{1}{M} \sum_{m=1}^M z_{r,k}^m$. 
The key observation for \feddualavg when $\eta_{s} = 1$ is the following relation
\begin{equation}
  \overline{z_{r,k+1}} = \overline{z_{r,k}} - \eta_{\client} \cdot \frac{1}{M} \sum_{m=1}^M \nabla f(w_{r,k}^m; \xi_{r,k}^m).
  \label{eq:shadow}
\end{equation}
This suggests that the shadow sequence $\overline{z_{r,k}}$ almost executes a dual averaging update \eqref{eq:da}, but with some perturbed gradient $\frac{1}{M} \sum_{m=1}^M \nabla f(w_{r,k}^m; \xi_{r,k}^m)$. 
To this end, we extend the perturbed iterate analysis framework \citep{Mania.Pan.ea-SIOPT17} to the dual space.
Theoretically we show the following \cref{pia:general:simplified}, with proof relegated to \cref{sec:pia:general}.
\begin{lemma}[Convergence of dual shadow sequence of \feddualavg, simplified version of \cref{pia:general}]
  \label{pia:general:simplified}
  Assuming $\cref{a1}$, then for \feddualavg with $\eta_{\server} = 1$ and $\eta_{\client} \leq \frac{1}{4L}$,  the following inequality holds
  \begin{small}
    \begin{equation}
      \expt \left[ \Phi \left( \frac{1}{KR} \sum_{r=0}^{R-1} \sum_{k=1}^{K} 
      \nabla \left( h + \tilde{\eta}_{r,k} \psi \right)^* \left( \overline{z_{r,k}}\right) \right) \right] - \Phi(w^{\star}) 
      \leq 
      \underbrace{\frac{B}{\eta_{\client} KR} + \frac{\eta_{\client}  \sigma^2}{M} + }_{\substack{\text{\footnotesize Rate if synchronized} \\ 
      \text{\footnotesize every iteration}}}
      \underbrace{\frac{L}{M KR}   \left[ \sum_{r=0}^{R-1}  \sum_{k=0}^{K-1} \sum_{m=1}^M   \expt \| \overline{z_{r,k}} - z_{r,k}^m \|_*^2 \right]}_{\text{\footnotesize  Discrepancy overhead}}.
      \label{eq:pia:quad:simplified:1}
    \end{equation} 
  \end{small}
\end{lemma}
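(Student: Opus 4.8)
The plan is to establish \cref{pia:general:simplified} by adapting the classical regret analysis of (regularized) dual averaging to the perturbed-gradient setting suggested by the shadow-sequence identity \eqref{eq:shadow}. First I would set up the notation: write $\widehat{g}_{r,k} := \frac{1}{M}\sum_{m=1}^M \nabla f(w_{r,k}^m;\xi_{r,k}^m)$ for the averaged stochastic gradient, $\overline{w_{r,k}} := \nabla(h+\tilde\eta_{r,k}\psi)^*(\overline{z_{r,k}})$ for the primal image of the dual shadow, and $t := rK+k$ for the global step counter, so that \eqref{eq:shadow} reads $\overline{z_{r,k+1}} = \overline{z_{r,k}} - \eta_{\client}\widehat{g}_{r,k}$ and $\tilde\eta_{r,k} = \eta_{\client} t$ when $\eta_{\server}=1$. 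Then the shadow sequence is exactly a sequential regularized-dual-averaging run on the "oracle sequence" $\{\widehat{g}_{r,k}\}$ with stepsize $\eta_{\client}$ and distance generator $h$, which is the object for which a standard DA regret bound applies.

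Next I would invoke (or re-derive) the one-shot dual averaging progress lemma: for RDA with $z_{t+1} = z_t - \eta g_t$, $w_t = \nabla(h+\eta t\psi)^*(z_t)$, one has for every $u \in \dom\psi$
\begin{equation}
\sum_{t} \left( \langle g_t, w_t - u\rangle + \psi(w_t) - \psi(u) \right) \leq \frac{D_h(u,w_0)}{\eta} + \frac{\eta}{2}\sum_t \|g_t\|_*^2 ,
\end{equation}
using 1-strong convexity of $h$ and convexity of $\psi$; this is the composite analogue of Nesterov's bound and is available from \citet{Xiao-JMLR10,Flammarion.Bach-COLT17}. Applying this with $u = w^\star$ and $g_t = \widehat{g}_{r,k}$, I would then convert the linearized regret into a function-value gap. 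The key move is to split $\langle \widehat{g}_{r,k}, \overline{w_{r,k}} - w^\star\rangle$ by adding and subtracting $\nabla F(\overline{w_{r,k}})$ and the per-client exact gradients $\nabla F_m(w_{r,k}^m)$: convexity of $F$ gives $\langle \nabla F(\overline{w_{r,k}}), \overline{w_{r,k}}-w^\star\rangle \geq F(\overline{w_{r,k}}) - F(w^\star)$, the stochastic-noise cross terms vanish in expectation (using independence and the $\sigma^2/M$ variance reduction from averaging $M$ clients, which produces the $\eta_{\client}\sigma^2/M$ term), and the "client drift" terms — the differences $\nabla F_m(w_{r,k}^m) - \nabla F_m(\overline{w_{r,k}})$ and the associated displacement $\overline{w_{r,k}} - w_{r,k}^m$ — are controlled via $L$-smoothness and Young's inequality, which is what yields the discrepancy overhead $\tfrac{L}{MKR}\sum \expt\|\overline{z_{r,k}} - z_{r,k}^m\|_*^2$ (note $\|w_{r,k}^m - \overline{w_{r,k}}\|$ is bounded by $\|z_{r,k}^m - \overline{z_{r,k}}\|_*$ since $\nabla(h+\tilde\eta\psi)^*$ is $1$-Lipschitz from $\|\cdot\|_*$ to $\|\cdot\|$). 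Finally, Jensen's inequality applied to the average $\frac{1}{KR}\sum_{r,k}\overline{w_{r,k}}$ turns the summed function-value gap into the gap at $\hat w$, and dividing by $KR$ produces the $\tfrac{B}{\eta_{\client}KR}$ leading term.

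The main obstacle I anticipate is the careful bookkeeping of the $L$-smoothness step when the proximal coefficient $\tilde\eta_{r,k}$ grows with $t$: one must verify that the identity "$\overline{z_{r,k}}$ executes RDA with the $\psi$-coefficient schedule $\eta_{\client}t$" is exactly the schedule prescribed in \cref{alg:feddualavg} (it is, precisely because $\eta_{\server}=1$ makes the server dual update a no-op beyond averaging, so $\overline{z_{r,K}}$ at the end of round $r$ equals $z_{r+1}$ and the coefficient continues seamlessly across rounds), and that the $\eta_{\client} \leq \tfrac{1}{4L}$ condition is exactly what is needed to absorb a term like $\eta_{\client}L\|\overline{w_{r,k}}-w_{r,k}^m\|\cdot\|\ldots\|$ back into the left-hand side (the factor $4$ buys the slack to keep a positive coefficient on $F(\overline{w_{r,k}})-F(w^\star)$ after the smoothness-induced negative contributions are subtracted). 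Separating the genuinely stochastic randomness from the drift — so that the martingale/cross terms cancel cleanly in expectation while the deterministic discrepancy $\|\overline{z_{r,k}}-z_{r,k}^m\|_*^2$ is left intact for the subsequent bounding step (done in the full proof of \cref{thm:2}) — is the delicate part; the rest is the routine telescoping of the DA potential.
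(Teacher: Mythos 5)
Your overall framing --- the shadow sequence $\overline{z_{r,k}}$ executes a sequential regularized dual averaging run on the perturbed gradients $\widehat g_{r,k}=\frac1M\sum_m\nabla f(w_{r,k}^m;\xi_{r,k}^m)$, client drift is transferred to the dual via the $1$-Lipschitzness of $\nabla(h+\tilde\eta_{r,k}\psi)^*$, and the bound follows by telescoping plus Jensen --- is the same as the paper's. But the engine you invoke does not deliver the stated bound. The classical RDA regret inequality you quote, with $\frac{\eta}{2}\sum_t\|g_t\|_*^2$ on the right-hand side, injects $\eta_{\client}\expt\|\frac1M\sum_m\nabla F_m(w_{r,k}^m)\|_*^2$ in addition to the variance $\eta_{\client}\sigma^2/M$. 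Under \cref{a1} alone this deterministic squared-gradient term cannot be removed: in the composite/constrained setting $\nabla F(w^{\star})\neq 0$ in general, so the self-bounding trick $\|\nabla F(w)\|_*^2\lesssim L(F(w)-F(w^{\star}))$ fails, and the lemma contains no $G^2$-type term to absorb it (bounded gradients is exactly the \emph{extra} hypothesis of \cref{thm:1:simplified}, not of this lemma). Relatedly, your treatment of the noise is internally inconsistent: if you evaluate the linearized regret at the $\mathcal{F}_{r,k}$-measurable current point $\overline{w_{r,k}}$ so that the noise cross terms vanish in conditional expectation, then the only possible source of the $\eta_{\client}\sigma^2/M$ term is that same $\frac{\eta}{2}\|g_t\|_*^2$ term --- you cannot both have the noise ``vanish'' and have it ``produce'' the variance contribution while discarding the deterministic part of $\|g_t\|_*^2$.

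What is actually needed (and what the paper does in \cref{one:step:analysis} via \cref{one:step:analysis:claim:1,one:step:analysis:claim:2}) is a refined one-step analysis of the potential $\tilde{D}_{h_{r,k}}(w^{\star},\overline{z_{r,k}})$ that never completes the square against the full gradient. The exact DA identity retains the decrease term $-\tilde{D}_{h_{r,k}}(\widehat{w_{r,k+1}},\overline{z_{r,k}})\le-\frac12\|\widehat{w_{r,k+1}}-\widehat{w_{r,k}}\|^2$; the objective is estimated at the \emph{next} shadow point $\widehat{w_{r,k+1}}$ (which is also why the lemma's $\hat w$ averages $k=1,\dots,K$ rather than $k=0,\dots,K-1$ as your current-point pairing would give), with smoothness and convexity at the client points producing the inner product with $\widehat g_{r,k}$, the term $L\|\widehat{w_{r,k+1}}-\widehat{w_{r,k}}\|^2$ (absorbed into the Bregman decrease using $\eta_{\client}\le\frac{1}{4L}$, which is the real role of that condition), and the discrepancy $\frac{L}{M}\sum_m\|\overline{z_{r,k}}-z_{r,k}^m\|_*^2$; and the noise inner product, which does \emph{not} vanish because $\widehat{w_{r,k+1}}$ depends on the fresh samples $\xi_{r,k}^m$, is re-centered to $\widehat{w_{r,k+1}}-\widehat{w_{r,k}}$, bounded by Cauchy--Schwarz, and absorbed into the leftover $-\frac14\|\widehat{w_{r,k+1}}-\widehat{w_{r,k}}\|^2$, which is precisely what yields $\eta_{\client}^2\sigma^2/M$ per step and hence $\eta_{\client}\sigma^2/M$ after dividing by $\eta_{\client}KR$. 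Without this refinement your argument only proves a weaker statement that additionally requires bounded gradients, so the proposal as written has a genuine gap at its central step.
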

The first two terms correspond to the rate when \feddualavg is synchronized every step. 
The last term corresponds to the overhead for not synchronizing every step, which we call ``discrepancy overhead''.
\cref{pia:general:simplified} can serve as a general interface towards the convergence of \feddualavg as it only assumes the blanket \cref{a1}.

\begin{remark}
Note that the relation \eqref{eq:shadow} is not satisfied by \fedmid due to the incommutability of the proximal operator and the the averaging operator, which thereby breaks \cref{pia:general:simplified}. 
Intuitively, this means \fedmid fails to pool the gradients properly (up to a high-order error) in the absence of communication. 
\feddualavg overcomes the incommutability issue because all the gradients are accumuluated and averaged  in the dual space, whereas the proximal step only operates at the interface from dual to primal.
This key difference explains the ``curse of primal averaging'' from the theoretical perspective.
\end{remark}

\paragraph{Step 2: Bounding Discrepancy Overhead via Stability Analysis.}
The next step is to bound the discrepancy term introduced in \cref{eq:pia:quad:simplified:1}. 
Intuitively, this term characterizes the \emph{stability} of \feddualavg, in the sense that how far away a single client can deviate from the average (in dual space) if there is no synchronization for $k$ steps. 

However, unlike the smooth convex unconstrained settings in which the stability of SGD is known to be well-behaved \citep{Hardt.Recht.ea-ICML16}, the stability analysis for composite optimization is challenging and absent from the literature. 
We identify that the main challenge originates from the asymmetry of the Bregman divergence. 
In this work, we provide a set of simple conditions, namely \cref{a3}, such that the stability of \feddualavg is well-behaved. 
\begin{lemma}[Dual stability of \feddualavg under \cref{a3}, simplified version of \cref{quad:stability}]
  \label{quad:stability:simplified}
  Under the same settings of \cref{thm:2:simplified}, the following inequality holds
  \begin{equation}
    \frac{1}{M} \sum_{m=1}^M \expt \left \| \overline{z_{r,k}} - z_{r,k}^m \right\|_*^2
     \lesssim \eta_{\client}^2 K \sigma^2 + \eta_{\client}^2 K^2 \zeta^2.
  \end{equation}
\end{lemma}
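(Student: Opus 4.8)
The plan is to establish a within-round stability recursion for the dual states. Fix a round $r$, write $\delta_{r,k}^m := z_{r,k}^m - \overline{z_{r,k}}$ for the dual discrepancy of client $m$, and let $a_{k} := \frac{1}{M}\sum_{m=1}^M \expt\|\delta_{r,k}^m\|_*^2$ be the quantity to bound (note $\frac1M\sum_m \delta_{r,k}^m = 0$). Subtracting the per-client dual update $z_{r,k+1}^m = z_{r,k}^m - \eta_{\client}\nabla f(w_{r,k}^m;\xi_{r,k}^m)$ from the shadow identity \eqref{eq:shadow} (which uses $\eta_{\server}=1$) gives $\delta_{r,k+1}^m = \delta_{r,k}^m - \eta_{\client}\bigl(\nabla f(w_{r,k}^m;\xi_{r,k}^m) - \frac{1}{M}\sum_{m'}\nabla f(w_{r,k}^{m'};\xi_{r,k}^{m'})\bigr)$. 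The key structural point is that under full participation all clients start the round from the common server dual state, so $\delta_{r,0}^m = 0$ and the discrepancy accumulates over only the $K$ local steps of the current round; unrolling, $\delta_{r,k}^m = -\eta_{\client}\sum_{j=0}^{k-1}\bigl(\nabla f(w_{r,j}^m;\xi_{r,j}^m) - \frac{1}{M}\sum_{m'}\nabla f(w_{r,j}^{m'};\xi_{r,j}^{m'})\bigr)$.

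Next I would split each summand into a stochastic-noise part $n_{r,j}^m := \nabla f(w_{r,j}^m;\xi_{r,j}^m) - \nabla F_m(w_{r,j}^m)$, a heterogeneity part $e_{r,j}^m := \nabla F_m(w_{r,j}^m) - \nabla F(w_{r,j}^m)$, and a residual $\nabla F(w_{r,j}^m)$, so that $\delta_{r,k}^m$ splits into three sums and $\|\delta_{r,k}^m\|_*^2 \le 3\eta_{\client}^2(\cdots)$. Because the norm in \cref{a3}(c) and its dual are both induced by positive definite matrices, $\|\cdot\|_*^2$ is a genuine quadratic form: the centered noises $\{n_{r,j}^m - \frac1M\sum_{m'} n_{r,j}^{m'}\}_j$ form a martingale-difference sequence w.r.t. the natural filtration (conditionally mean zero by independence of $\xi$, independent across clients), so the cross terms vanish in expectation and, using \cref{a1}(d), $\frac1M\sum_m\expt\|\sum_{j<k}(n_{r,j}^m - \overline{n}_{r,j})\|_*^2 \le k\sigma^2$. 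For the heterogeneity sum, the triangle inequality together with the uniform bound of \cref{a3}(a) gives $\frac1M\sum_m\|\sum_{j<k}(e_{r,j}^m - \overline{e}_{r,j})\|_*^2 \lesssim k^2\zeta^2$. Scaled by $\eta_{\client}^2$ and using $k\le K$, these are already of the claimed order.

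The remaining (curvature) term is where \cref{a3}(b) is indispensable: since $F$ is quadratic, $\nabla F$ is affine, so $\nabla F(w_{r,j}^m) - \frac1M\sum_{m'}\nabla F(w_{r,j}^{m'}) = Q(w_{r,j}^m - \overline{w}_{r,j})$ \emph{exactly}, with no Bregman-type remainder. I would then relate the primal discrepancy back to the dual discrepancy: at step $j$ all clients apply the \emph{same} map $\nabla(h + \tilde{\eta}_{r,j}\psi)^*$, which by \cref{a1}(b) ($h$ is $1$-strongly convex, $\psi$ convex) is $1$-Lipschitz from $(\reals^d,\|\cdot\|_*)$ to $(\reals^d,\|\cdot\|)$; hence $\|w_{r,j}^m - w_{r,j}^{m'}\| \le \|z_{r,j}^m - z_{r,j}^{m'}\|_*$, and by the variance identity for an inner-product norm $\frac1M\sum_m\|w_{r,j}^m - \overline{w}_{r,j}\|^2 \le \frac1M\sum_m\|\delta_{r,j}^m\|_*^2$. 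Combining with $\|Qv\|_* \le L\|v\|$ (which follows from $L$-smoothness, \cref{a1}(c), in the norm of \cref{a3}(c)) and Cauchy--Schwarz over the $k$ terms, the curvature term contributes $\lesssim \eta_{\client}^2 L^2 k\sum_{j<k}a_j$. Collecting the three pieces gives a recursion $a_k \lesssim \eta_{\client}^2 K\sigma^2 + \eta_{\client}^2 K^2\zeta^2 + \eta_{\client}^2 L^2 K\sum_{j<k}a_j$, which I would close by strong induction on $k$ (equivalently a discrete Gr\"onwall argument), invoking the step-size restriction on $\eta_{\client}$ to keep the amplification factor $O(1)$, yielding $a_k \lesssim \eta_{\client}^2 K\sigma^2 + \eta_{\client}^2 K^2\zeta^2$.

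I expect the main obstacle to be the curvature term and the closing of the recursion: controlling how the dual discrepancy at earlier local steps propagates through $\nabla F$ into the discrepancy at step $k$. The quadratic assumption \cref{a3}(b) is precisely what makes this tractable, since for general $F$ the affine identity above is replaced by a Bregman-divergence-type remainder whose asymmetry (as remarked after \cref{quad:stability:simplified}) obstructs a clean stability bound; and even under \cref{a3}, care with the step-size restriction is needed so that the induction (Gr\"onwall) does not blow up over the $K$ local steps.
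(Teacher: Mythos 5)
Your treatment of the noise and heterogeneity parts is fine (the martingale and triangle-inequality bounds go through because the norm in \cref{a3} is induced by a positive definite matrix), and working with deviations from the mean rather than the paper's pairwise differences is immaterial. The genuine gap is in the curvature term and the closing of the recursion — exactly the step you flag as the main obstacle. After Cauchy--Schwarz over the $k$ local steps you obtain $a_k \lesssim \eta_{\client}^2K\sigma^2 + \eta_{\client}^2K^2\zeta^2 + \eta_{\client}^2L^2K\sum_{j<k}a_j$ and assert that the step-size restriction keeps the Gr\"onwall amplification $O(1)$. But the only restriction available in \cref{thm:2:simplified} is $\eta_{\client}\le\frac{1}{4L}$, under which the homogeneous coefficient $\eta_{\client}^2L^2K$ can be as large as $K/16$, so the discrete Gr\"onwall factor is $(1+\eta_{\client}^2L^2K)^{K}=e^{\Theta(\eta_{\client}^2L^2K^2)}$, i.e.\ potentially $e^{\Theta(K^2)}$. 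Closing your recursion would require $\eta_{\client}\lesssim \frac{1}{LK}$, a strictly stronger condition that collapses the result back to the small-client-learning-rate regime of \cref{thm:0} and defeats the purpose of the lemma. The loss comes from treating the curvature sum as adversarial growth: Cauchy--Schwarz discards the favorable sign of the cross term between the accumulated dual discrepancy and $-\eta_{\client}Q\bigl(w_{r,j}^m-\overline{w}_{r,j}\bigr)$.

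The paper's proof of \cref{quad:stability} avoids this by a one-step non-expansiveness argument (\cref{quad:one_step_stab}) instead of unrolling. For a pair of clients it expands $\bigl\| z^{m_1}-z^{m_2}-\eta_{\client}Q(w^{m_1}-w^{m_2})-\eta_{\client}(\varepsilon^{m_1}-\varepsilon^{m_2})\bigr\|_{Q^{-1}}^2$ and shows the quadratic-plus-cross curvature contribution is nonpositive (\cref{quad:one_step_stab:claim:2}), by combining (i) the affine gradient of the quadratic $F$, (ii) concavity of $\eta_{\client}F-2h$ (this is where $\eta_{\client}\le\frac{1}{4L}$ and the $1$-strong convexity of $h$ enter), and (iii) monotonicity of $\partial\psi$ together with the optimality relation $z_{r,k}^m-\nabla h(w_{r,k}^m)\in\partial\bigl[\eta_{\client}(rK+k)\psi\bigr](w_{r,k}^m)$. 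Hence the curvature contributes no growth at all; only the heterogeneity is split off by AM--GM with weight $K$, giving a benign per-step factor $(1+1/K)$ that telescopes to at most $e$, while the noise adds $O(\eta_{\client}^2\sigma^2)$ per step, yielding $\eta_{\client}^2K\sigma^2+\eta_{\client}^2K^2\zeta^2$ under $\eta_{\client}\le\frac{1}{4L}$. To salvage your unrolled-sum route you would need to retain this cancellation — e.g.\ prove non-expansiveness of the map $z\mapsto z-\eta_{\client}\nabla F\bigl(\nabla(h+\tilde{\eta}\psi)^*(z)\bigr)$ in the $Q^{-1}$-metric — rather than bounding the curvature sum by Cauchy--Schwarz.
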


\paragraph{Step 3: Deciding $\eta_{\client}$.}
The final step is to plug in the bound in step 2 back to step 1,
and find appropriate $\eta_{\client}$ to optimize such upper bound. 
We defer the details to \cref{sec:proof:thm:2}.
\section{Numerical Experiments}
\label{sec:expr}
  In this section, we validate our theory and demonstrate the efficiency of the algorithms via numerical experiments. 
  We mostly compare \feddualavg with \fedmid since the latter serves a natural baseline.
  We do not present subgradient-\fedavg in this section due to its consistent ineffectiveness, as demonstrated in \cref{fig:haxby:simplified} (marked \fedavg($\partial$)). 
  To examine the necessity of client proximal step, we also test two less-principled versions of \fedmid and \feddualavg, in which the proximal steps are only performed on the server-side.
  We refer to these two versions as \textsc{FedMiD-OSP} and \textsc{FedDualAvg-OSP}, where ``OSP'' stands for ``only server proximal,'' with pseudo-code provided in \cref{sec:addl_expr_setup}.
  We provide the complete setup details in \cref{sec:additional_expr}, including but not limited to hyper-parameter tuning, dataset processing and evaluation metrics.
  The source code is available at \url{https://github.com/hongliny/FCO-ICML21}.
  
  \subsection{Federated LASSO for Sparse Feature Recovery}
  \label{mainsec:expr:lasso}
  In this subsection, we consider the LASSO ($\ell_1$-regularized least-squares) problem on a synthetic  dataset, motivated by models from biomedical and signal processing literature~(e.g., \citealt{Ryali.Supekar.ea-10,Chen.Lin.ea-NIPS12}).   
  The goal is to recover the sparse signal $w$ from noisy observations $(x,y)$.
  \begin{equation}
      \min_{w \in \reals^d, b \in \reals} ~~ \frac{1}{M} \sum_{m=1}^M \expt_{(x,y) \sim \mathcal{D}_m} (x^{\top} w + b - y)_2^2 + \lambda \|w\|_1.
      \label{eq:flasso}
  \end{equation}
  To generate the synthetic dataset, we first fix a sparse ground truth $w_{\mathrm{real}} \in \reals^{d}$ and some bias $b_{\mathrm{real}} \in \reals$, and then sample the dataset $(x, y)$ following $y = x^{\top} w_{\mathrm{real}} + b_{\mathrm{real}} + \varepsilon$ for some noise $\varepsilon$. 
  We let the distribution of $(x,y)$ vary over clients to simulate the heterogeneity.
  We select $\lambda$ so that the centralized solver (on gathered data) can successfully recover the sparse pattern. 
  Since the ground truth $w_{\mathrm{real}}$ is known, we can assess the quality of the sparse features recovered by comparing it with the ground truth. 
  We evaluate the performance by recording precision, recall, sparsity density, and F1-score. 
  We tune the client learning rate $\eta_{\client}$ and server learning rate $\eta_{\server}$ only to attain the best F1-score.
  The results are presented in \cref{fig:lasso_p1024_nnz512_cl64_1x4}.
  The best learning rates configuration is $\eta_{\client} = 0.01, \eta_{\server} = 1$ for  \feddualavg, and  $\eta_{\client} = 0.001, \eta_{\server} = 0.3$ for other algorithms (including \fedmid). This matches our theory that \feddualavg can benefit from larger learning rates.
  We defer the rest of the setup details and further experiments to \cref{sec:expr:lasso}.
\begin{figure}[ht]
  \centering
  \includegraphics[width=\columnwidth]{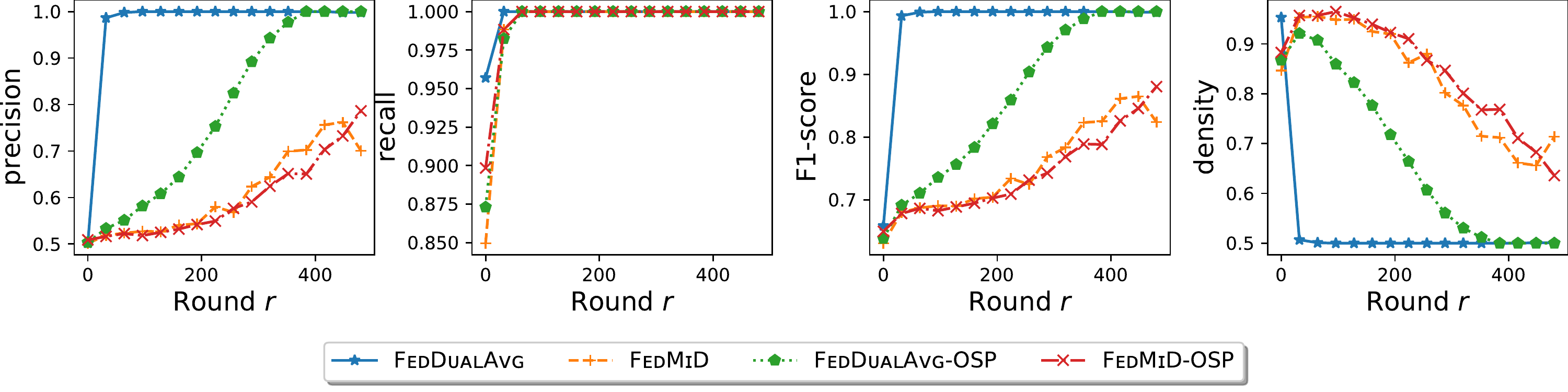}
  \caption{
   \textbf{Sparsity recovery on a synthetic LASSO problem with 50\% sparse ground truth.} 
  Observe that \feddualavg not only identifies most of the sparsity pattern but also is fastest. 
  It is also worth noting that the less-principled \textsc{FedDualAvg-OSP} is also very competitive.
  The poor performance of \fedmid can be attributed to the ``curse of primal averaging'', as the server averaging step ``smooths out'' the sparsity pattern, which is corroborated empirically by the least sparse solution obtained by \fedmid.
  }
  \label{fig:lasso_p1024_nnz512_cl64_1x4}
\end{figure}

  \subsection{Federated Low-Rank Matrix Estimation via Nuclear-Norm Regularization}
  \label{mainsec:expr:nuclear}
  In this subsection, we consider a low-rank matrix estimation problem via the nuclear-norm regularization
  \begin{equation}
    \min_{W \in \reals^{d_1 \times d_2}, b\in \reals}
     \frac{1}{M} \sum_{m=1}^M \mathbb{E}_{(X,y) \sim \mathcal{D}_m} \left( \langle X, W \rangle + b - y \right)^2 + \lambda \|W\|_{\mathrm{nuc}},
    \label{eq:fed:matrix:estimation}
  \end{equation}
  where $\|W\|_{\mathrm{nuc}}$ denotes the matrix nuclear norm.
  The goal is to recover a low-rank matrix $W$ from noisy observations $(X,y)$. 
  This formulation captures a variety of problems such as low-rank matrix completion and recommendation systems  \citep{Candes.Recht-09}.
  Note that the proximal operator with respect to the nuclear-norm regularizer reduces to singular-value thresholding operation \citep{Cai.Candes.ea-SIOPT10}.
  
  We evaluate the algorithms on a synthetic federated dataset with known low-rank ground truth $W_{\mathrm{real}} \in \reals^{d_1 \times d_2}$ and bias $b_{\mathrm{real}} \in \reals$, similar to the above LASSO experiments. 
  We focus on four metrics for this task: the training (regularized) loss, the validation mean-squared-error, the recovered rank, and the recovery error in Frobenius norm $\|W_{\mathrm{output}} - W_{\mathrm{real}}\|_{\mathrm{F}}$.
  We tune the client learning rate $\eta_{\client}$ and server learning rate $\eta_{\server}$ only to attain the best recovery error.
  We also record the results obtained by the deterministic solver on centralized data, marked as \texttt{optimum}. 
  The results are presented in \cref{fig:nuclear_row32_rank16_cl64_1x4}.
  We provide the rest of the setup details and more experiments in \cref{sec:expr:nuclear}.
\begin{figure}[ht]
    \centering
    \includegraphics[width=\columnwidth]{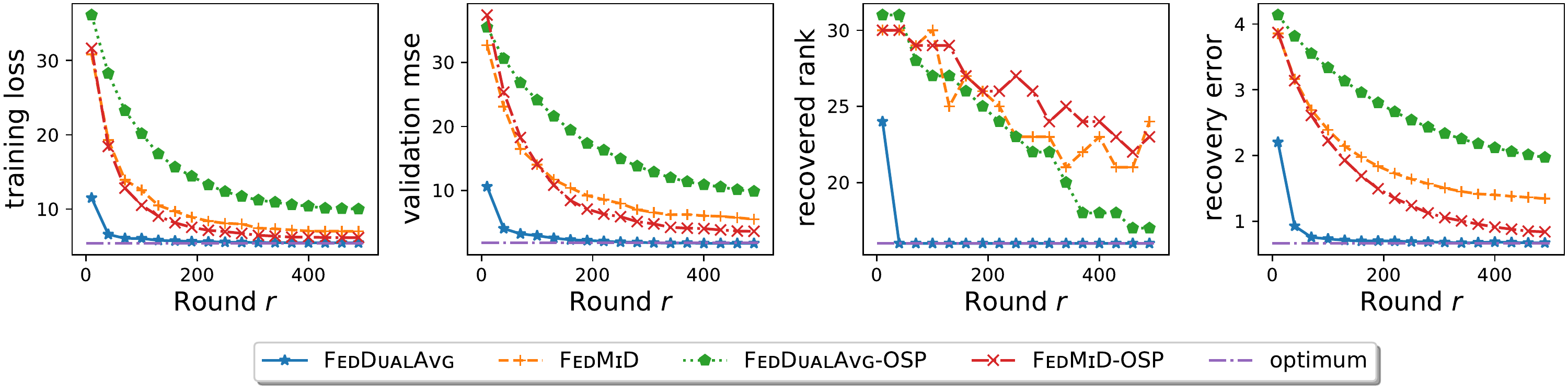}
    \caption{
      \textbf{Low-rank matrix estimation comparison on a synthetic dataset with the ground truth of rank 16.}
  We observe that \feddualavg finds the solution with exact rank in less than 100 communication rounds.
  \fedmid and \fedmid-OSP converge slower in loss and rank.
  The unprincipled \feddualavg-OSP can generate low-rank solutions but is far less accurate. 
    }
    \label{fig:nuclear_row32_rank16_cl64_1x4}
  \end{figure}

  \subsection{Sparse Logistic Regression for fMRI Scan}
  \label{mainsec:expr:fmri}
  In this subsection, we consider the cross-silo setup of learning a binary classifier on fMRI scans.
  For this purpose, we use the data collected by \citet{Haxby-01}, to understand the pattern of response in the ventral temporal (vt) area of the brain given a visual stimulus. 
  We plan to learn a sparse ($\ell_1$-regularized) binary logistic regression on the voxels to classify the stimuli given the voxels input.
  Enforcing sparsity is crucial for this task as it allows domain experts to understand which part of the brain is differentiating between the stimuli.
  We compare the algorithms with two non-federated baselines: 
  1) \texttt{centralized} corresponds to training on the centralized dataset gathered from \textbf{all} the training clients; 2) \texttt{local} corresponds to training on the local data from only \textbf{one} training client without communication. 
  The results are shown in \cref{fig:haxby59_lambd_1e-03_1x4}. 
  In \cref{sec:lr:config}, we provide another presentation of this experiment to visualize the progress of federated algorithms and understand the robustness to learning rate configurations. 
  The results suggest \feddualavg not only recovers sparse and accurate solutions, but also behaves most robust to learning-rate configurations.
  We defer the rest of the setup details to \cref{sec:expr:fmri}.

  In \cref{sec:emnist},  we provide another set of experiments on federated constrained optimization for Federated EMNIST dataset \citep{Caldas.Duddu.ea-NeurIPS19}.
  \begin{figure}[!hbp]
  \centering
  \includegraphics[width=\columnwidth]{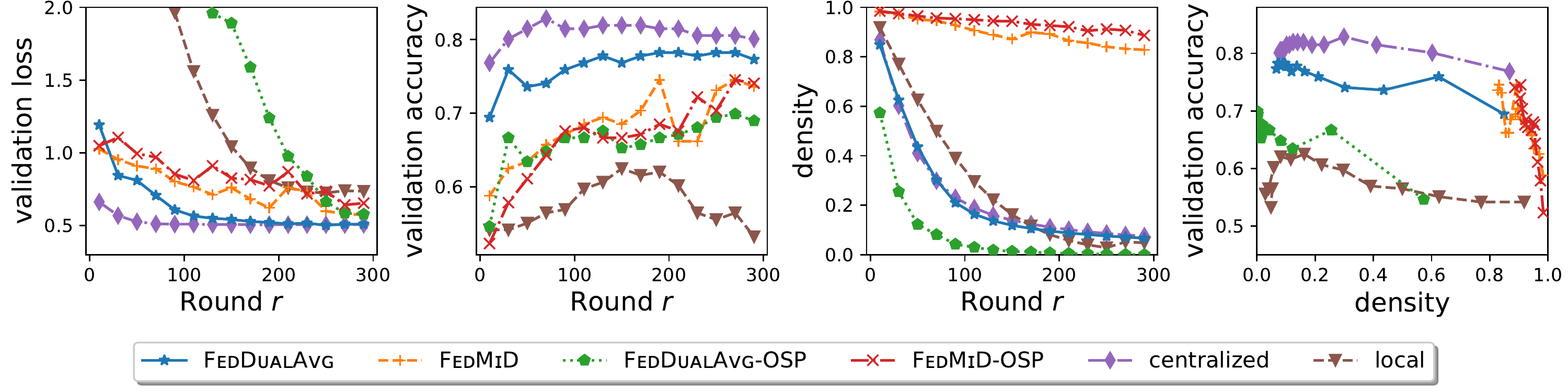}
  \caption{
    \textbf{Results on $\ell_1$-regularized logistic regression for fMRI data from \citep{Haxby-01}}. 
    We observe that $\feddualavg$ yields sparse and accurate solutions that are comparable with the centralized baseline. 
    \fedmid and \fedmid-OSP provides denser solutions that are relatively less accurate.
    The unprincipled \feddualavg-OSP can provide sparse solutions but far less accurate.
  }
  \label{fig:haxby59_lambd_1e-03_1x4}
\end{figure}

\section{Conclusion}
In this paper, we have shown the shortcomings of primal FL algorithms for FCO and proposed a primal-dual method (\feddualavg) to tackle them. Our theoretical and empirical analysis provide strong evidence to support the superior performance of \feddualavg over natural baselines. 
Potential future directions include control variates and acceleration based methods for FCO, and applying FCO to personalized settings.
\section*{Acknowledgement}
We would like to thank Zachary Charles, Zheng Xu, Andrew Hard, Ehsan Amid, Amr Ahmed, Aranyak Mehta, Qian Li,  Junzi Zhang, Tengyu Ma, and TensorFlow Federated team for helpful discussions at various stages of this work. 
Honglin Yuan would like to thank the support by the TOTAL Innovation Scholars program. 
We would like to thank the anonymous reviewers for their suggestions and comments.

\bibliographystyle{plainnat}
\bibliography{fed_cstr}

\clearpage

\begin{appendices}
\crefalias{section}{appendix}
\crefalias{subsection}{appendix}
\crefalias{subsubsection}{appendix}
The appendices are structured as follows. 
In \cref{sec:additional_expr}, we include additional experiments and detailed setups. 
In \cref{sec:background}, we provide the necessary backgrounds for our theoretical results. 
We prove two of our main results, namely \cref{thm:1:simplified,thm:2:simplified}, in \cref{sec:proof:thm:1,sec:proof:thm:2}, respectively.
The proof of \cref{thm:0} is sketched in \cref{sec:small_lr}.
\begin{spacing}{1.05}
 \listofappendices
\end{spacing}
\clearpage
\section{Additional Experiments and Setup Details}
\label{sec:additional_expr}
\subsection{General Setup}
\label{sec:addl_expr_setup}
\paragraph{Algorithms.} 
In this paper we mainly test four Federated algorithms, namely \fedmidfull (\fedmid, see \cref{alg:fedmid}), \feddualavgfull (\feddualavg, see \cref{alg:feddualavg}), as well as two less-principled algorithms which skip the client-side proximal operations. 
We refer to these two algorithms as \fedmid-OSP and \feddualavg-OSP, where ``OSP'' stands for ''only server proximal''. 
We formally state these two OSP algorithms in \cref{alg:fedmid-osp,alg:feddualavg-osp}. 
We study these two OSP algorithms mainly for ablation study purpose, thouse they might be of special interest if the proximal step is computationally intensive. 
For instance, in \fedmid-OSP, the client proximal step is replaced by $w_{r,k+1}^m \gets \nabla h^*(\nabla h (w_{r,k}^m) - \eta_{\client} g_{r,k}^m)$ with no $\psi$ involved (see line 8 of \cref{alg:fedmid-osp}). This step reduces to the ordinary SGD $w_{r,k+1}^m \gets w_{r,k}^m - \eta_{\client} g_{r,k}^m$ if $h(w) = \frac{1}{2}\|w\|_2^2$ in which case both $\nabla h$ and $\nabla h^*$ are identity mapping.
Theoretically, it is not hard to establish similar rates of \cref{thm:0} for \fedmid-OSP with finite $\psi$. For infinite $\psi$, we need extension of $f$ outside $\textbf{dom}\psi$ to fix regularity. To keep this paper focused, we will not establish these results formally.
There is no theoretical guarantee on the convergence of \feddualavg-OSP.

\begin{algorithm}
  \caption{\fedmidfull Only Server Proximal (\fedmid-OSP)}
  \begin{algorithmic}[1]
    \label{alg:fedmid-osp}
    \STATE {\textbf{procedure}} \fedmid-OSP ($w_0, \eta_{\client}, \eta_{\server}$)
      \FOR{$r = 0, \ldots, R-1$}
        \STATE sample a subset of clients $\mathcal{S}_r \subseteq [M]$
        \FORALL {$m \in \mathcal{S}_r$ {\bf in parallel}}
          \STATE client initialization $w_{r,0}^m \gets w_{r}$
          \COMMENT{Broadcast primal initialization for round $r$}
          \FOR{$k = 0, \ldots, K-1$}
            \STATE $g_{r,k}^m \gets \nabla f(w_{r,k}^m; \xi_{r,k}^m)$
            \COMMENT{Query gradient}
            \STATE $w_{r,k+1}^m \gets \nabla h^*(\nabla h (w_{r,k}^m) - \eta_{\client}g_{r,k}^m$)
            \COMMENT{Client (primal) update -- proximal operation skipped}
          \ENDFOR
        \ENDFOR
        \STATE $\Delta_r = \frac{1}{|\mathcal{S}_r|} \sum_{m \in \mathcal{S}_r} (w_{r, K}^m - w_{r, 0}^m)$
        \COMMENT{Compute pseudo-anti-gradient}
        \STATE $w_{r+1} \gets \nabla (h + \eta_{\server}\eta_{\client}K \psi)^*(\nabla h(w_{r}) + \eta_{\server} \Delta_r)$
        \COMMENT{Server (primal) update}
      \ENDFOR
  \end{algorithmic}
\end{algorithm}
\begin{algorithm}
  \caption{\feddualavgfull Only Server Proximal (\feddualavg-OSP)}
  \begin{algorithmic}[1]
      \label{alg:feddualavg-osp}
    \STATE \textbf{procedure} {\feddualavg-OSP}($w_0, \eta_{\client}, \eta_{\server}$)
      \STATE server initialization $z_{0} \gets \nabla h(w_0)$
      \FOR{$r = 0, \ldots, R-1$}
        \STATE sample a subset of clients $\mathcal{S}_r \subseteq [M]$
        \FORALL{$m \in \mathcal{S}_r$ {\bf in parallel}}
          \STATE client initialization $z_{r,0}^m \gets z_{r}$ 
           \COMMENT{Broadcast dual initialization for round $r$}
          \FOR{$k = 0, \ldots, K-1$}
            \STATE $w_{r,k}^m \gets \nabla h^*(z_{r,k}^m)$
            \COMMENT{Compute primal point $w_{r,k}^m$ -- proximal operation skipped}
            \STATE $g_{r,k}^m \gets \nabla f (w_{r,k}^m; \xi_{r,k}^m) $
            \COMMENT{Query gradient}
            \STATE $z_{r,k+1}^m \gets z_{r,k}^m - \eta_{\client} g_{r,k}^m$
            \COMMENT{Client (dual) update}
          \ENDFOR
        \ENDFOR
        \STATE 
        $\Delta_r = \frac{1}{|\mathcal{S}_r|} \sum_{m \in \mathcal{S}_r} (z_{r, K}^m - z_{r, 0}^m)$
        \COMMENT{Compute pseudo-anti-gradient}
        \STATE $z_{r+1} \gets z_{r} + \eta_{\server} \Delta_r$
        \COMMENT{Server (dual) update}
        \STATE $w_{r+1} \gets \nabla (h + \eta_{\server} \eta_{\client} (r+1) K \psi)^* (z_{r+1})$
        \COMMENT{(Optional) Compute server primal state}
      \ENDFOR
  \end{algorithmic}
\end{algorithm}

\paragraph{Environments.} We simulate the algorithms in the TensorFlow Federated (TFF) framework \citep{Ingerman.Ostrowski-19}. 
The implementation is based on the Federated Research repository.\footnote{\url{https://github.com/google-research/federated}}
\paragraph{Tasks.} We experiment the following four tasks in this work. 
\begin{enumerate}
    \item Federated Lasso ($\ell_1$-regularized least squares) for sparse feature selection, see \cref{sec:expr:lasso}.
    \item Federated low-rank matrix recovery via nuclear-norm regularization, see \cref{sec:expr:nuclear}.
    \item Federated sparse ($\ell_1$-regularized) logistic regression for fMRI dataset \citep{Haxby-01}, see \cref{sec:expr:fmri}.
    \item Federated constrained optimization for Federated EMNIST dataset \citep{Caldas.Duddu.ea-NeurIPS19}, see \cref{sec:emnist}.
\end{enumerate}
We take the distance-generating function $h$ to be $h(w) := \frac{1}{2}\|w\|_2^2$ for all the four tasks.
The detailed setups of each experiment are stated in the corresponding subsections.

\subsection{Federated LASSO for Sparse Feature Selection}
\label{sec:expr:lasso}
\subsubsection{Setup Details}
In this experiment, we consider the federated LASSO ($\ell_1$-regularized least squares) on a synthetic dataset inspired by models from biomedical and signal processing literature (e.g., \citealt{Ryali.Supekar.ea-10,Chen.Lin.ea-NIPS12})
  \begin{equation}
      \min_{w,b} \quad \frac{1}{M} \sum_{m=1}^M \expt_{(x,y) \sim \mathcal{D}_m} (x^{\top} w + b - y)_2^2 + \lambda \|w\|_1.
  \end{equation}
The goal is to retrieve sparse features of $w$ from noisy observations.
\paragraph{Synthetic Dataset Descriptions.}
We first generate the ground truth $w_{\mathrm{real}}$ with $d_1$ ones and $d_0$ zeros for some $d_1 + d_0 = d$, namely
\begin{equation}
   w_{\mathrm{real}}  = \begin{bmatrix}
  \textbf{1}_{d_1}
  \\
  \textbf{0}_{d_0}
\end{bmatrix} \in \reals^{d},
\end{equation}
and ground truth $b_{\mathrm{real}} \sim \mathcal{N}(0,1)$. 

The observations $(x,y)$ are generated as follows to simulate the heterogeneity among clients. 
Let $(x_{m}^{(i)}, y_{m}^{(i)})$ denotes the $i$-th observation of the $m$-th client. 
For each client $m$, we first generate and fix the mean $\mu_m \sim \mathcal{N}(0, I_{d \times d})$. 
Then we sample $n_m$ pairs of observations following
\begin{align}
  x_m^{(i)} = \mu_m + \delta_m^{(i)}, \quad & \text{where $\delta_m^{(i)} \sim \mathcal{N}(\mathbf{0}_d, I_{d \times d})$ are \text{i.i.d.}, for $i=1,\ldots,n_m$;}
  \\
  y_m^{(i)} = w_{\mathrm{real}}^\top x_m^{(i)} + b_{\mathrm{real}} + \varepsilon_m^{(i)}, \quad & \text{where $\varepsilon_m^{(i)} \sim \mathcal{N}(0,1)$ are i.i.d., for $i=1,\ldots,n_m$}.
\end{align}

We test four configurations of the above synthetic dataset.
\begin{enumerate}
  \item [(I)] The ground truth $w_{\mathrm{real}}$ has $d_1 = 512$ ones and $d_0 = 512$ zeros. 
  We generate $M=64$ training clients where each client possesses $128$ pairs of samples.
  There are 8,192 training samples in total.
  \item [(II)] (sparse ground truth) The ground truth $w_{\mathrm{real}}$ has $d_1 = 64$ ones and $d_0 = 960$ zeros. The rest of the configurations are the same as dataset (I).
  \item [(III)] (sparser ground truth) The ground truth $w_{\mathrm{real}}$ has $d_1 = 8$ ones and $d_0 = 1016$ zeros. The rest of the configurations are the same as dataset (I).
  \item [(IV)] (more distributed data) The ground truth is the same as (I). We generate $M=256$ training clients where each client possesses $32$ pairs of samples. The total number of training examples are the same.
\end{enumerate}

\paragraph{Evaluation Metrics.}
Since the ground truth of the synthetic dataset is known, we can evaluate the quality of the sparse features retrieved by comparing it with the ground truth.
To numerically evaluate the sparsity, we treat all the features in $w$ with absolute values smaller than $10^{-2}$ as zero elements, and non-zero otherwise.
We evaluate the performance by recording precision, recall, F1-score, and sparse density.

\paragraph{Hyperparameters.}
For all algorithms, we tune the client learning rate $\eta_{\client}$ and server learning rate $\eta_{\server}$ only. 
We test 49 different combinations of $\eta_{\client}$ and $\eta_{\server}$.  
$\eta_{\client}$ is selected from $\{0.001, 0.003, 0.01, 0.03, 0.1, 0.3, 1 \}$, and $\eta_{\server}$ is selected from $\{0.01, 0.03, 0.1, 0.3, 1, 3, 10\}$. 
All methods are tuned to achieve the best averaged recovery error over the last 100 communication rounds.
We claim that the best learning rate combination falls in this range for all the algorithms tested.
We draw 10 clients uniformly at random at each communication round and let the selected clients run local algorithms with batch size 10 for one epoch (of its local dataset) for this round. 
We run 500 rounds in total, though \feddualavg usually converges to almost perfect solutions in much fewer rounds.

The \cref{fig:nuclear_row32_rank16_cl64_1x4} presented in the main paper (\cref{mainsec:expr:lasso}) is for the synthetic dataset (I). Now we test the performance on the other three datasets.

\subsubsection{Results on Synthetic Dataset (II) and (III) with Sparser Ground Truth}
\label{sec:expr:lasso:2:3}
We repeat the experiments on the dataset (II) and (III) with $1/2^{4}$ and $1/{2^7}$ ground truth density, respectively.
The results are shown in \cref{fig:lasso_p1024_nnz64_cl64_1x4,fig:lasso_p1024_nnz8_cl64_1x4}.
We observe that \feddualavg converges to the perfect F1-score in less than 100 rounds, which outperforms the other baselines by a margin. 
The F1-score of \feddualavg-OSP converges faster on these sparser datasets than (I), which makes it comparably more competitive.
The convergence of \fedmid and \fedmid-OSP remains slow.

\begin{figure}[!hbp]
  \centering
   \includegraphics[width=\textwidth]{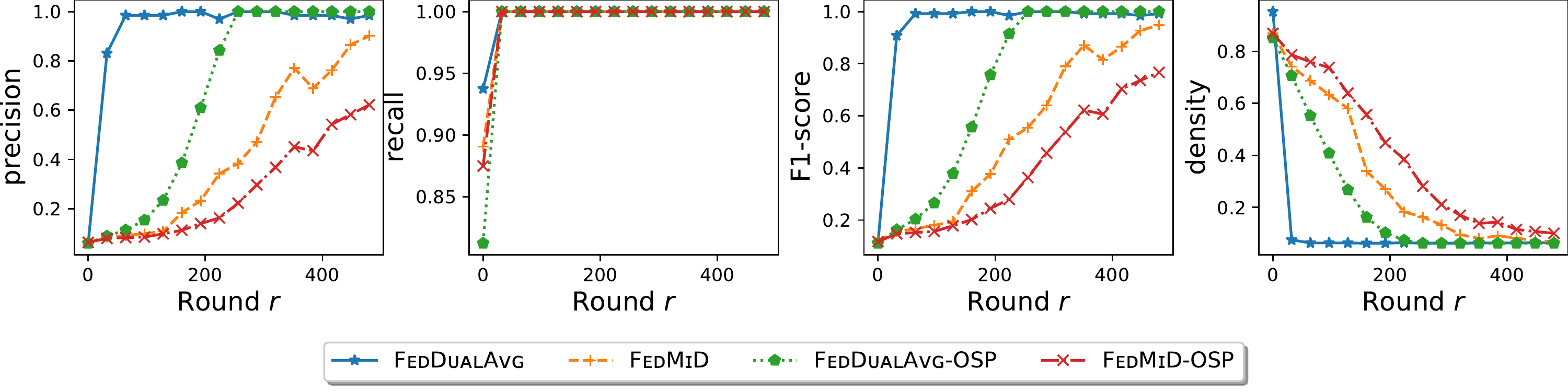}
  \caption{\textbf{Results on Dataset (II): $1/2^{4}$ Ground Truth Density.} See \cref{sec:expr:lasso:2:3} for discussions.}
  \label{fig:lasso_p1024_nnz64_cl64_1x4}
\end{figure}

\begin{figure}[!hbp]
  \centering
  \includegraphics[width=\textwidth]{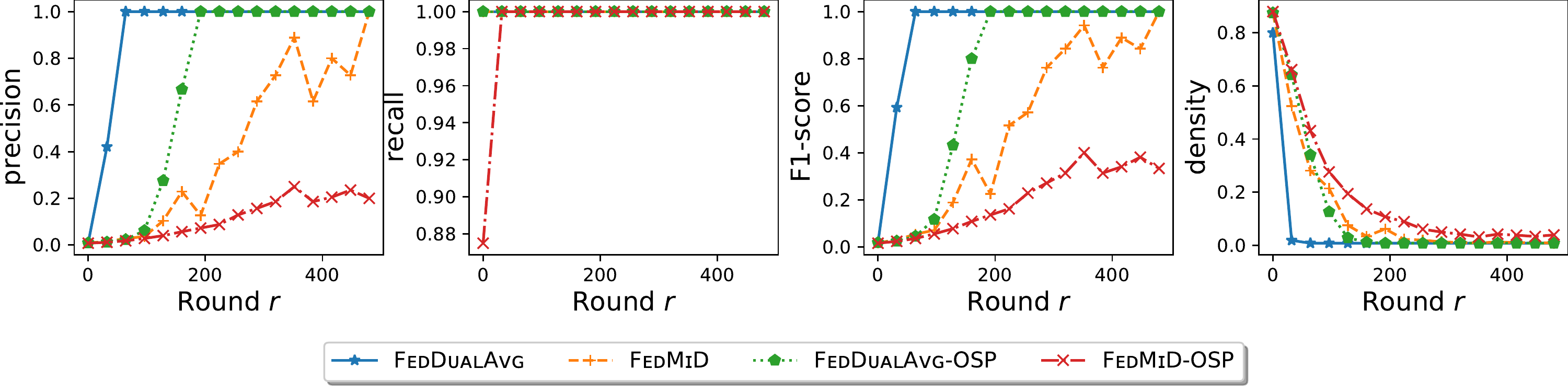}
  \caption{\textbf{Results on Dataset (III): $1/2^{7}$ Ground Truth Density.} See \cref{sec:expr:lasso:2:3} for discussions.}
  \label{fig:lasso_p1024_nnz8_cl64_1x4}
\end{figure}

\subsubsection{Results on Synthetic Dataset (IV): More Distributed Data (256 clients)}
\label{sec:expr:lasso:4}
We repeat the experiments on the dataset (IV) with more distributed data (256 clients). 
The results are shown in \cref{fig:lasso_p1024_nnz512_cl256_1x4}. 
We observe that all the four algorithms take more rounds to converge in that each client has fewer data than the previous configurations. 
\feddualavg manages to find perfect F1-score in less than 200 rounds, which outperforms the other algorithms significantly.
\feddualavg-OSP can recover an almost perfect F1-score after 500 rounds, but is much slower than on the less distributed dataset (I).
\fedmid and \fedmid-OSP have very limited progress within 500 rounds. 
This is because the server averaging step in \fedmid and \fedmid-OSP fails to aggregate the sparsity patterns properly. 
Since each client is subject to larger noise due to the limited amount of local data, simply averaging the primal updates will ``smooth out'' the sparsity pattern.

\begin{figure}[!hbp]
  \centering
  \includegraphics[width=\textwidth]{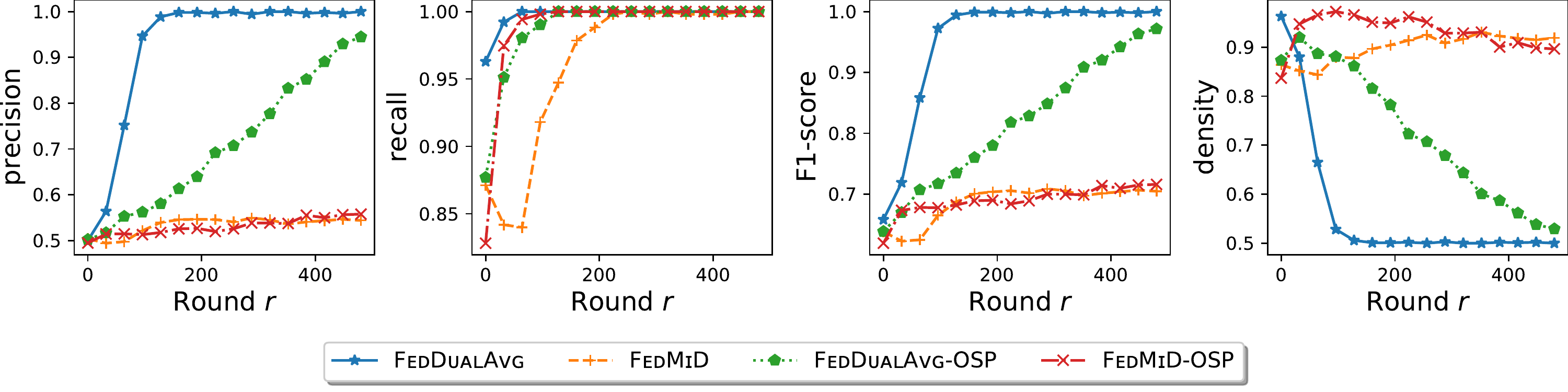}
  \caption{\textbf{Results on Dataset (IV): More Distributed Data.} See \cref{sec:expr:lasso:4} for discussions.}
  \label{fig:lasso_p1024_nnz512_cl256_1x4}
\end{figure}

\subsection{Nuclear-Norm-Regularization for Low-Rank Matrix Estimation}
\label{sec:expr:nuclear}
\subsubsection{Setup Details}
In this subsection, we consider a low-rank matrix estimation problem via the nuclear-norm regularization
\begin{equation}
  \min_{W \in \reals^{d_1 \times d_2}, b\in \reals} \frac{1}{M} \sum_{m=1}^M \mathbb{E}_{(X,y) \sim \mathcal{D}_m} \left( \langle X, W \rangle + b - y \right)^2 + \lambda \|W\|_{\mathrm{nuc}},
  \label{eq:fed:matrix:estimation}
\end{equation}
where $\|W\|_{\mathrm{nuc}} := \sum_i \sigma_i(W)$ denotes the nuclear norm (a.k.a. trace norm) defined by the summation of all the singular values. 
The goal is to recover a low-rank matrix $W$ from noisy observations $(X,y)$. 
This formulation captures a variety of problems, such as low-rank matrix completion and recommendation systems (c.f. \citealt{Candes.Recht-09}).
Note that the proximal operator with respect to the nuclear-norm regularizer $\|\cdot\|_{\mathrm{nuc}}$ reduces to the well-known singular-value thresholding operation \citep{Cai.Candes.ea-SIOPT10}.

\paragraph{Synthetic Dataset Descriptions.}
We first generate the following ground truth $W_{\mathrm{real}} \in \reals^{d \times d}$ of rank $r$ 
\begin{equation}
  W_{\mathrm{real}} = \begin{bmatrix}
    I_{r \times r} & \mathbf{0}_{r \times (d - r)} \\
    \mathbf{0}_{(d-r) \times r} & \mathbf{0}_{(d-r) \times (d-r)}
  \end{bmatrix},
\end{equation}
and ground truth $b_{\mathrm{real}} \sim \mathcal{N}(0,1)$. 

The observations $(X,y)$ are generated as follows to simulate the heterogeneity among clients. 
Let $(X_{m}^{(i)}, y_{m}^{(i)})$ denotes the $i$-th observation of the $m$-th client. 
For each client $m$, we first generate and fix the mean $\mu_m \in \reals^{d \times d}$ where all coordinates are i.i.d. standard Gaussian $\mathcal{N}(0,1)$. 
Then we sample $n_m$ pairs of observations following
\begin{align}
  & X_m^{(i)} = \mu_m + \delta_m^{(i)}, \text{where $\delta_m^{(i)} \in \reals^{d \times d}$ is a matrix with all coordinates from standard Gaussian;}
  \\
  & y_m^{(i)} = \langle w_{\mathrm{real}}, X_m^{(i)} \rangle + b_{\mathrm{real}} + \varepsilon_m^{(i)}, \text{where $\varepsilon_m^{(i)} \sim \mathcal{N}(0,1)$ are i.i.d.}
\end{align}

We tested four configurations of the above synthetic dataset.
\begin{enumerate}
  \item [(I)] The ground truth $W_{\mathrm{real}}$ is a matrix of dimension $32 \times 32$ with rank $r=16$. 
  We generate $M=64$ training clients where each client possesses $128$ pairs of samples.
  There are 8,192 training samples in total.
  \item [(II)] (rank-4 ground truth) The ground truth $W_{\mathrm{real}}$ has rank $r=4$. The other configurations are the same as the dataset (I).
  \item [(III)] (rank-1 ground truth) The ground truth $W_{\mathrm{real}}$ has rank $r=1$. The other configurations are the same as the dataset (I).
  \item [(IV)] (more distributed data) The ground truth is the same as (I). We generate $M=256$ training clients where each client possesses $32$ samples. The total number of training examples remains the same.
\end{enumerate}

\paragraph{Evaluation Metrics.}
We focus on four metrics for this task: the training (regularized) loss, the validation mean-squared-error, the recovered rank, and the recovery error in Frobenius norm $\|W_{\mathrm{output}} - W_{\mathrm{real}}\|_{\mathrm{F}}$.
To numerically evaluate the rank, we count the number of singular values that are greater than $10^{-2}$.

\paragraph{Hyperparameters.}
For all algorithms, we tune the client learning rate $\eta_{\client}$ and server learning rate $\eta_{\server}$ only. 
We test 49 different combinations of $\eta_{\client}$ and $\eta_{\server}$.  
$\eta_{\client}$ is selected from $\{0.001, 0.003, 0.01, 0.03, 0.1, 0.3, 1 \}$, and $\eta_{\server}$ is selected from $\{0.01, 0.03, 0.1, 0.3, 1, 3, 10\}$. 
All methods are tuned to achieve the best averaged recovery error on the last 100 communication rounds.
We claim that the best learning rate combination falls in this range for all algorithms tested.
We draw 10 clients uniformly at random at each communication round and let the selected clients run local algorithms with batch size 10 for one epoch (of its local dataset) for this round. 
We run 500 rounds in total, though \feddualavg usually converges to perfect F1-score in much fewer rounds.

The \cref{fig:nuclear_row32_rank16_cl64_1x4} presented in the main paper (\cref{mainsec:expr:nuclear}) is for the synthetic dataset (I). Now we test the performance of the algorithms on the other three datasets.

\subsubsection{Results on Synthetic Dataset (II) and (III) with Ground Truth of Lower Rank}
\label{sec:expr:nuclear:2:3}
We repeat the experiments on the dataset (II) and (III) with 4 and 1 ground truth rank, respectively.
The results are shown in \cref{fig:nuclear_row32_rank4_cl64_1x4,fig:nuclear_row32_rank1_cl64_1x4}. 
The results are qualitatively reminiscent of the previous experiments on the dataset (I). 
\feddualavg can recover the exact rank in less than 100 rounds, which outperforms the other baselines by a margin. 
\feddualavg-OSP can recover a low-rank solution but is less accurate.
The convergence of \fedmid and \fedmid-OSP remains slow.
\subsubsection{Results on Synthetic Dataset (IV): More Distributed Data (256 clients)}
\label{sec:expr:nuclear:4}
We repeat the experiments on the dataset (IV) with more distributed data. 
The results are shown in \cref{fig:nuclear_row32_rank16_cl256_1x4}. 
We observe that all four algorithms take more rounds to converge in that each client has fewer data than the previous configurations. 
The other messages are qualitatively similar to the previous experiments -- 
\feddualavg manages to find exact rank in less than 200 rounds, which outperforms the other algorithms significantly.
\begin{figure}[ht]
  \centering
  \includegraphics[width=\textwidth]{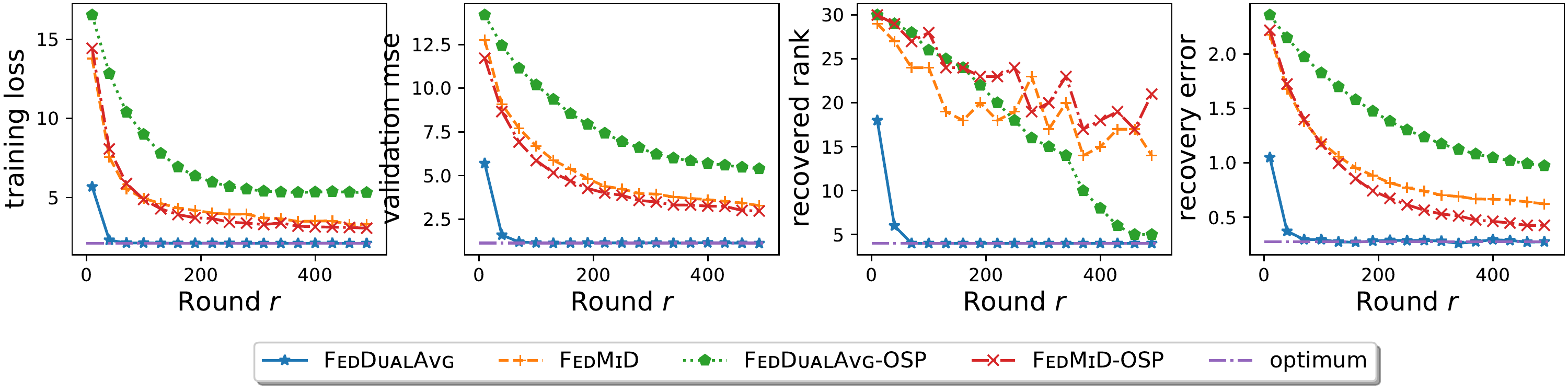}
  \caption{\textbf{Results on Dataset (II): Ground Truth Rank 4.} See \cref{sec:expr:nuclear:2:3} for discussions.}
  \label{fig:nuclear_row32_rank4_cl64_1x4}
\end{figure}
\begin{figure}[ht]
  \centering
  \includegraphics[width=\textwidth]{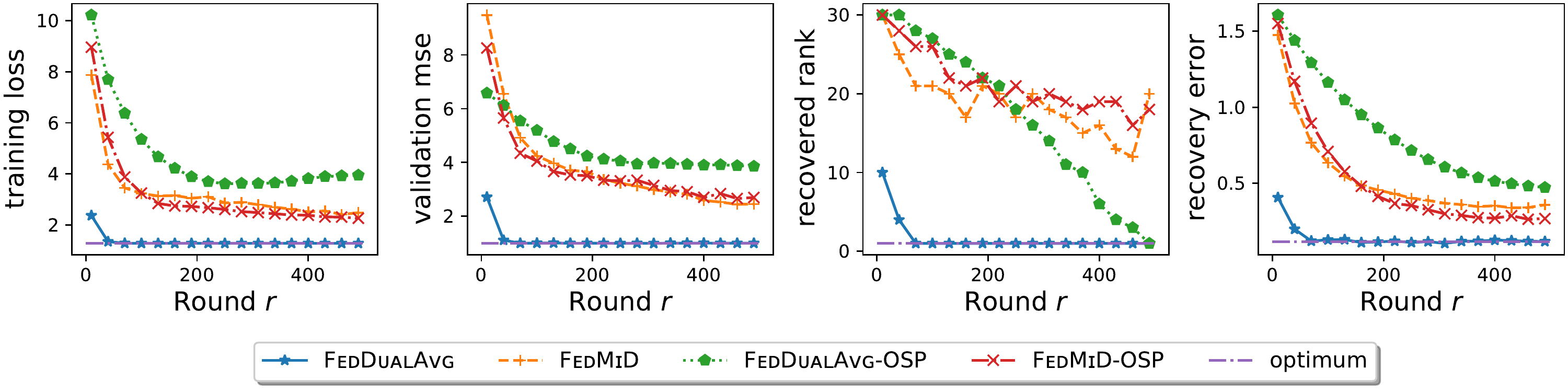}
  \caption{\textbf{Results on Dataset (III): Ground Truth Rank 1.} See \cref{sec:expr:nuclear:2:3} for discussions.}
  \label{fig:nuclear_row32_rank1_cl64_1x4}
\end{figure}
\begin{figure}[ht]
    \centering
    \includegraphics[width=\textwidth]{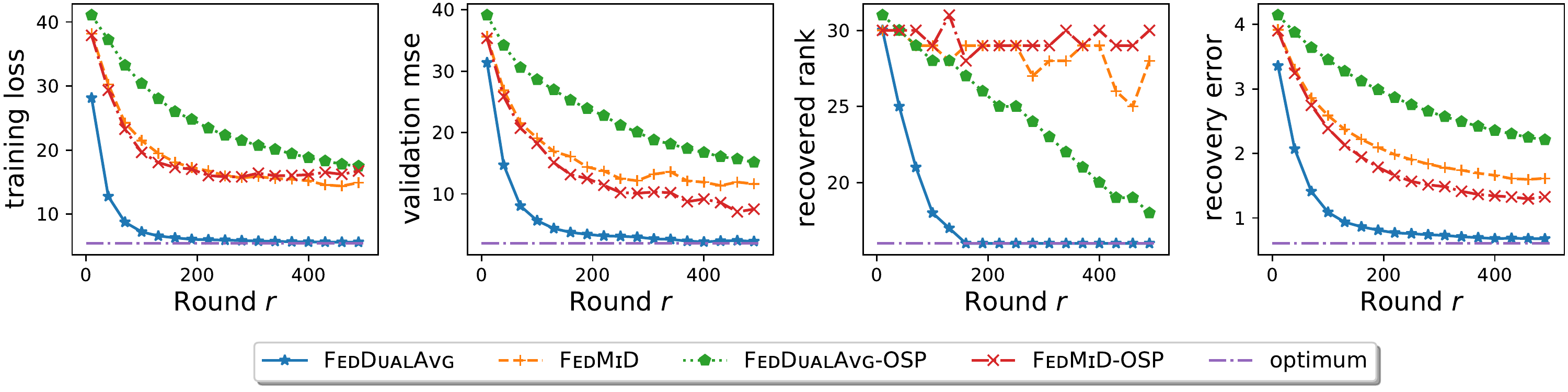}
    \caption{\textbf{Results on Dataset (IV): More Distributed Data.} See \cref{sec:expr:nuclear:4} for discussions.}
    \label{fig:nuclear_row32_rank16_cl256_1x4}
\end{figure}

\subsection{Sparse Logistic Regression for fMRI}
\label{sec:expr:fmri}
\subsubsection{Setup Details}
In this subsection, we provide the additional setup details for the fMRI experiment presented in \cref{fig:haxby59_lambd_1e-03_1x4}.
The goal is to understand the pattern of response in ventral temporal area of the brain given a visual stimulus.
Enforcing sparsity is important as it allows domain experts to understand which part of the brain is differentiating between the stimuli.
We apply $\ell_1$-regularized logistic regression on the voxels to classify the visual stimuli.

\paragraph{Dataset Descriptions and Preprocessing.}
  We use data collected by \citet{Haxby-01}. 
  There were 6 subjects doing binary image recognition (from a horse and a face) in a block-design experiment over 11-12 sessions per subject, in which each session consists of 18 scans.
  We use \texttt{nilearn} package \citep{Abraham.Pedregosa.ea-14} to normalize and transform the 4-dimensional raw fMRI scan data into an array with 39,912 volumetric pixels (voxels) using the standard mask.
  We choose the first 5 subjects as training set and the last subject as validation set.
  To simulate the cross-silo federated setup, we treat each session as a client. There are 59 training clients and 12 test clients, where each client possesses the voxel data of 18 scans.
  
\paragraph{Evaluation Metrics.} We focus on three metrics for this task: validation (regularized) loss, validation accuracy, and (sparsity) density.
To numerically evaluate the density, we treat all weights with absolute values smaller than $10^{-4}$ as zero elements.
The density is computed as non-zero parameters divided by the total number of parameters.

\paragraph{Hyperparameters.} For all algorithms, we adjust only client learning rate $\eta_{\client}$ and server learning rate $\eta_{\server}$. 
For each federated setup, we tested 49 different combinations of $\eta_{\client}$ and $\eta_{\server}$.  $\eta_{\client}$ is selected from $\{0.001, 0.003, 0.01, 0.03, 0.1, 0.3, 1\}$, and $\eta_{\server}$ is selected from $\{0.01, 0.03, 0.1, 0.3, 1, 3, 10\}$. 
We let each client run its local algorithm with batch-size one for one epoch per round. 
At the beginning of each round, we draw 20 clients uniformly at random.
We run each configuration for 300 rounds and present the configuration with the lowest validation (regularized) loss at the last round. 

We also tested two non-federated baselines for comparison, marked as \texttt{centralized} and \texttt{local}. 
\texttt{centralized} corresponds to training on the centralized dataset gathered from \textbf{all} the 59 training clients.
\texttt{local} corresponds to training on the local data from only \textbf{one} training client without communication. 
We run proximal gradient descent for these two baselines for 300 epochs.
The learning rate is tuned from $\{0.0001, 0.0003, 0.001, 0.003, 0.01, 0.03, 0.1, 0.3, 1\}$ to attain the best validation loss at the last epoch.
The results are presented in \cref{fig:haxby59_lambd_1e-03_1x4}.

\subsubsection{Progress Visualization across Various Learning Rate Configurations}
\label{sec:lr:config}
In this subsection, we present an alternative viewpoint to visualize the progress of federated algorithms and understand the robustness to hyper-parameters.
To this end, we run four algorithms for various learning rate configurations (we present all the combinations of learning rates mentioned above such that $\eta_{\client} \eta _{\server} \in [0.003, 0.3]$) and record the validation accuracy and (sparsity) density after 10th, 30th, 100th, and 300th round. 
The results are presented in \cref{fig:haxby59_lambd_1e-03_multi_1x4}. 
Each dot stands for a learning rate configuration (client and server). 
We can observe that most \feddualavg configurations reach the upper-left region of the box, which indicates sparse and accurate solutions. 
\feddualavg-OSP reaches to the mid-left region of the box, which indicates sparse but less accurate solutions.
The majority of \fedmid and \fedmid-OSP lands on the right side region box, which reflects the hardness for \fedmid and \fedmid-OSP to find the sparse solutions.

\begin{figure}[!hbp]
    \centering
    \includegraphics[width=\textwidth]{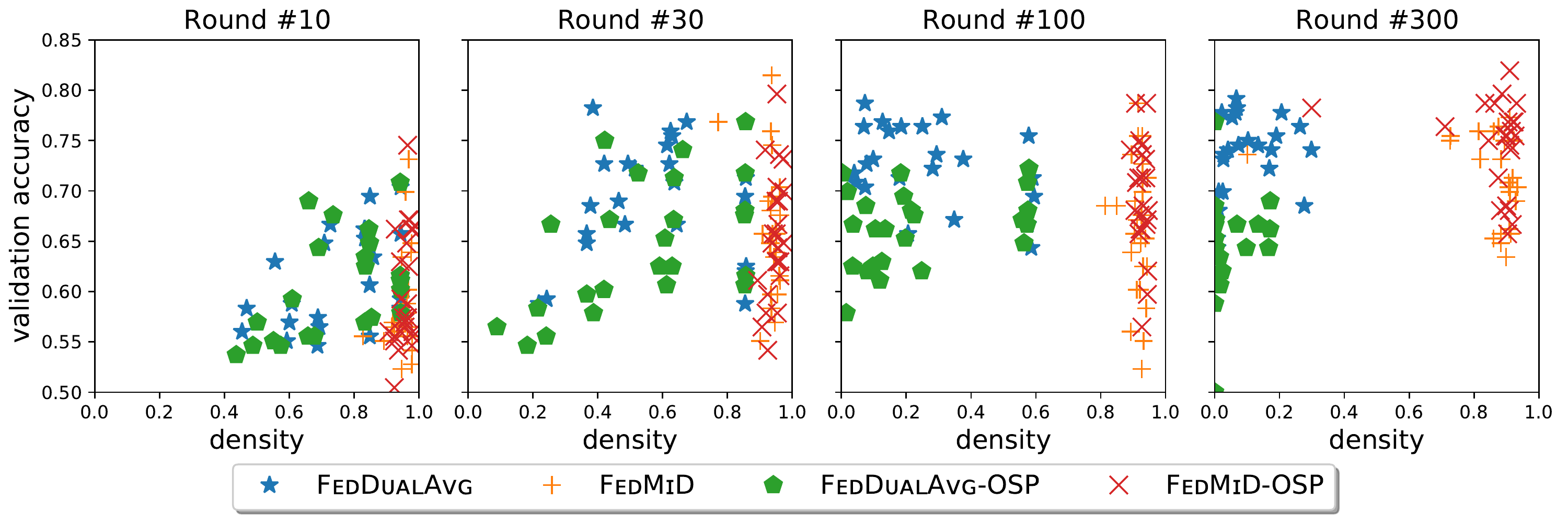}
    \caption{\textbf{Progress of Federated Algorithms Under Various Learning Rate Configurations for fMRI.} Each dot stands for a learning rate configuration (client and server). \feddualavg recovers sparse and accurate solutions, and is robust to learning-rate configurations.
    }
    \label{fig:haxby59_lambd_1e-03_multi_1x4}
\end{figure}

\subsection{Constrained Federated Optimization for Federated EMNIST}
\label{sec:emnist}
\subsubsection{Setup Details}
In this task we test the performance of the algorithms when the composite term $\psi$ is taken to be convex characteristics
$\chi_{\cstr}(w) := 
  \begin{cases} 
      0 & \text{if $w \in \mathcal{C}$}, \\
      +\infty & \text{if $w \notin \mathcal{C}$}.
  \end{cases}
$
which encodes a hard constraint. 
  
\paragraph{Dataset Descriptions and Models.} 
We tested on the Federated EMNIST (FEMNIST) dataset provided by TensorFlow Federated, which was derived from the Leaf repository \citep{Caldas.Duddu.ea-NeurIPS19}. 
EMNIST is an image classification dataset that extends MNIST dataset by incorporating alphabetical classes. 
The Federated EMNIST dataset groups the examples from EMNIST by writers.

We tested two versions of FEMNIST in this work:
\begin{enumerate}
    \item[(I)] FEMNIST-10: digits-only version of FEMNIST which contains 10 label classes. 
    We experiment the logistic regression models with $\ell_1$-ball-constraint or $\ell_2$-ball-constraint on this dataset. 
    Note that for this task we only trained on 10\% of the examples in the original FEMNIST-10 dataset because the original FEMNIST-10 has an unnecessarily large number (340k) of examples for the logistic regression model.
    \item[(II)] FEMNIST-62: full version of FEMNIST which contains 62 label classes (including 52 alphabetical classes and 10 digital classes). 
     We test a two-hidden-layer fully connected neural network model where all fully connected layers are simultaneously subject to $\ell_1$-ball-constraint. 
    Note that there is no theoretical guarantee for either of the four algorithms on non-convex objectives. We directly implement the algorithms as if the objectives were convex. 
    We defer the study of \fedmid and \feddualavg for non-convex objectives to the future work.
\end{enumerate}

\paragraph{Evaluation Metrics.}
We focused on three metrics for this task: training error, training accuracy, and test accuracy. 
Note that the constraints are always satisfied because all the trajectories of all the four algorithms are always in the feasible region.

\paragraph{Hyperparameters.}
For all algorithms, we tune only the client learning rate $\eta_{\client}$ and server learning rate $\eta_{\server}$. For each setup, we tested 25 different combinations of $\eta_{\client}$ and $\eta_{\server}$.  $\eta_{\client}$ is selected from $\{0.001, 0.003, 0.01, 0.03, 0.1\}$, and $\eta_{\server}$ is selected from $\{0.01, 0.03, 0.1, 0.3, 1\}$. 
We draw 10 clients uniformly at random at each communication round and let the selected clients run local algorithms with batch size 10 for 10 epochs (of its local dataset) for this round. 
We run 5,000 communication rounds in total and evaluate the training loss every 100 rounds.
All methods are tuned to achieve the best averaged training loss on the last 10 checkpoints.

\subsubsection{Experimental Results}
\paragraph{$\ell_1$-Constrained Logistic Regression}
We first test the $\ell_1$-regularized logistic regression. The results are shown in \cref{fig:emnist-lr-l1-1000}. 
We observe that \feddualavg outperforms the other three algorithms by a margin. 
Somewhat surprisingly, we observe that the other three algorithms behave very closely in terms of the three metrics tested. 
This seems to suggest that the client proximal step (in this case projection step) might be saved in \fedmid. 

\paragraph{$\ell_2$-Constrained Logistic Regression}
Next, we test the $\ell_2$-regularized logistic regression. The results are shown in \cref{fig:emnist-lr-l2-10}. 
We observe that \feddualavg outperforms the \fedmid and \fedmid-OSP in all three metrics (note again that \fedmid and \fedmid-OSP share very similar trajectories). 
Interestingly, the \feddualavg-OSP behaves much worse in training loss than the other three algorithms, but the training accuracy and validation accuracy are better. 
We conjecture that this effect might be attributed to the homogeneous property of $\ell_2$-constrained logistic regression which \feddualavg-OSP can benefit from.

\paragraph{$\ell_1$-Constrained Two-Hidden-Layer Neural Network}
Finally, we test on the two-hidden-layer neural network with $\ell_1$-constraints. 
The results are shown in \cref{fig:emnist-2NN-l1-1000}.
We observe that \feddualavg outperforms \fedmid and \fedmid-OSP in all three metrics (once again, note that \fedmid and \fedmid-OSP share similar trajectories). 
On the other hand, \feddualavg-OSP behaves much worse (which is out of the plotting ranges). 
This is not quite surprising because \feddualavg-OSP does not have any theoretical guarantees.
\begin{figure}
    \centering
    \includegraphics[width=\textwidth]{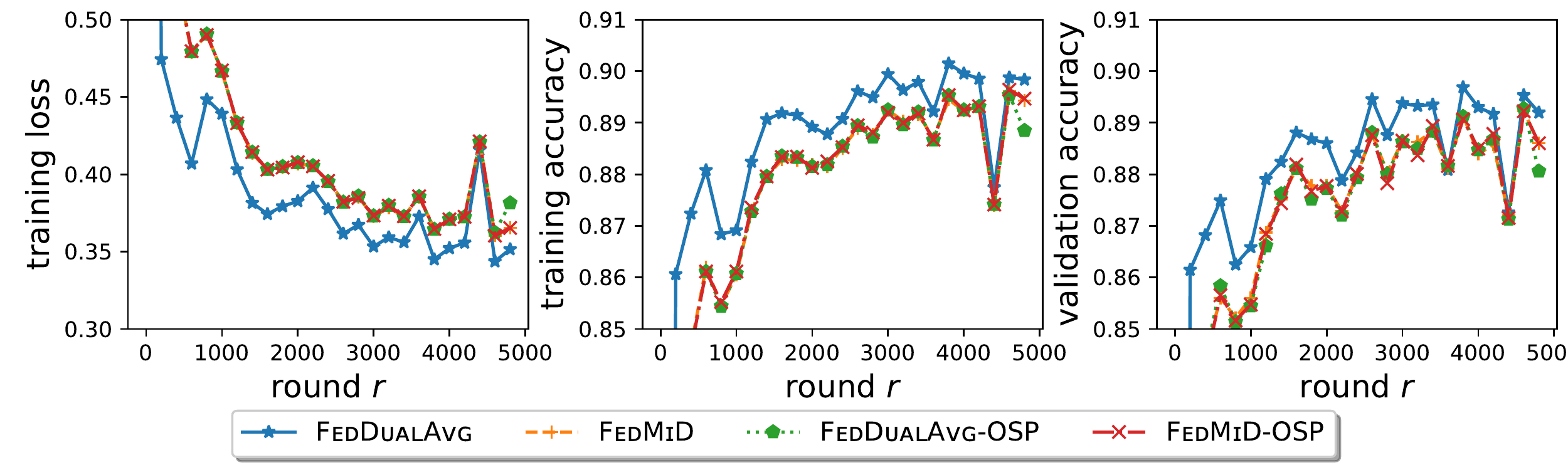}
    \caption{
    \textbf{$\ell_1$-Constrained logistic regression.}
    Dataset: FEMNIST-10.
    Constraint: $\|w\|_1 \leq 1000$.
    }
    \label{fig:emnist-lr-l1-1000}
\end{figure}
\begin{figure}
    \centering
    \includegraphics[width=\textwidth]{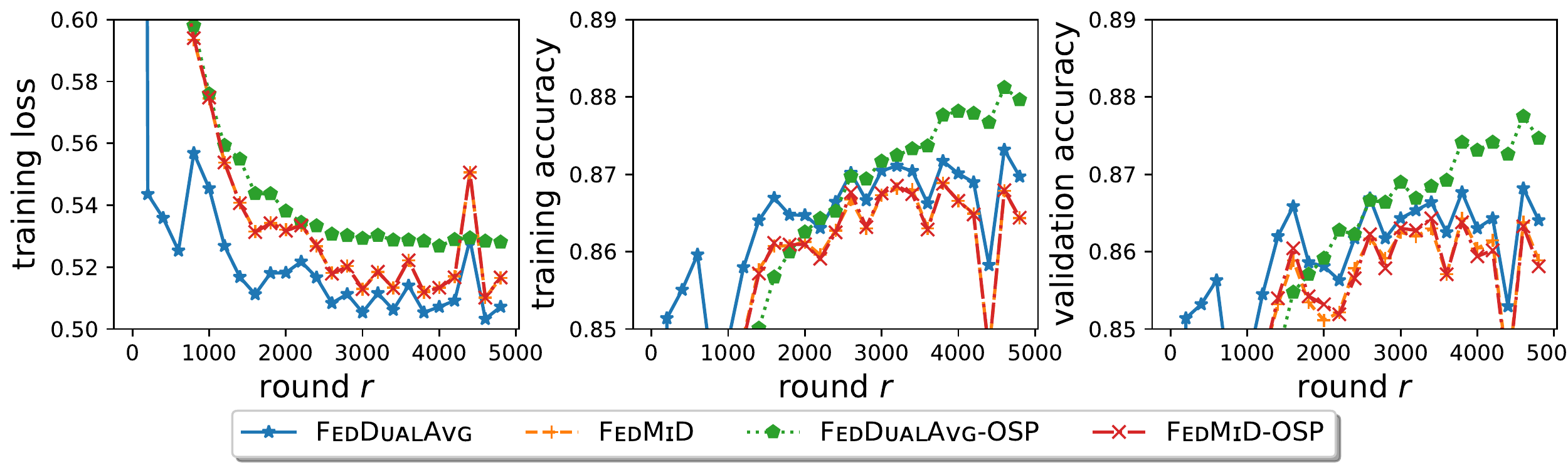}
    \caption{
    \textbf{$\ell_2$-constrained logistic regression.} 
    Dataset: FEMNIST-10.
    Constraint: $\|w\|_2 \leq 10$.
    }
    \label{fig:emnist-lr-l2-10}
\end{figure}
\begin{figure}
    \centering
    \includegraphics[width=\textwidth]{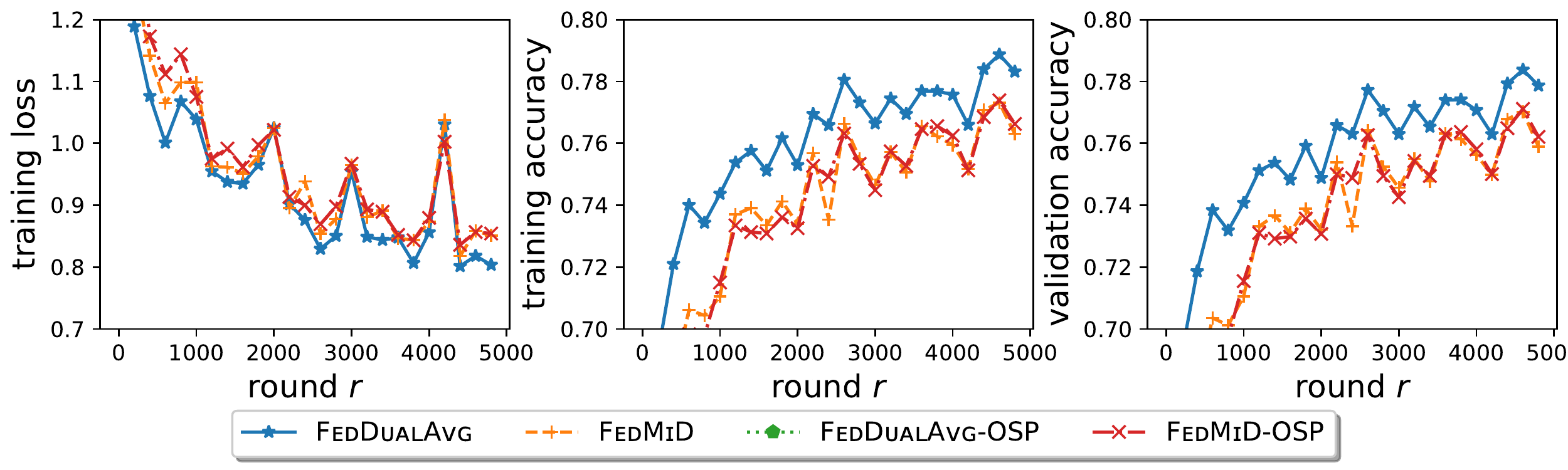}
    \caption{
    \textbf{$\ell_1$-Constrained Two-Hidden-Layer Neural Network.}
  Dataset: FEMNIST-62.
  Constraint: all three dense kernels 
    $w^{[l]}$ simultaneously satisfy $\|w^{[l]}\|_1 \leq 1000$.
    }
    \label{fig:emnist-2NN-l1-1000}
\end{figure}
\clearpage
\section{Theoretical Background and Technicalities}
\label{sec:background}
In this section, we introduce some definitions and propositions that are necessary for the proof of our theoretical results. Most of the definitions and results are standard and can be found in the classic convex analysis literature (e.g., \citealt{Rockafellar-70,Hiriart-Urruty.Lemarechal-01}), unless otherwise noted.

The following definition of the \emph{effective domain} extends the notion of \emph{domain} (of a finite-valued function) to an extended-valued convex function $\reals^d \to \reals \cup \{+ \infty\}$. 
\begin{definition}[Effective domain]
  Let $g: \reals^d \to \reals \cup \{+ \infty\}$ be an extended-valued convex function. The \textbf{effective domain} of $g$, denoted by $\dom g$, is defined by
  \begin{equation}
    \dom g := \{w \in \reals^d: g(w) < +\infty   \}.
  \end{equation}
\end{definition}
In this work we assume all extended-valued convex functions discussed are \textbf{proper}, namely the effective domain is nonempty.

Next, we formally define the concept of \emph{strict} and \emph{strong convexity}. Note that the strong convexity is parametrized by some parameter $\mu > 0$ and therefore implies strict convexity. 
\begin{definition}[Strict and Strong convexity {\citep[Definition B.1.1.1]{Hiriart-Urruty.Lemarechal-01}}]
  A convex function $g: \reals^d \to \reals \cup \{+ \infty\}$ is \textbf{strictly convex} if for any $w_1, w_2 \in \dom g$, for any $\alpha \in (0,1)$, it is the case that 
  \begin{equation}
    g(\alpha w_1 + (1- \alpha) w_2) < \alpha g (w_1) + (1-\alpha) g(w_2).
  \end{equation}
  Moreover, $g$ is $\mu$-\textbf{strongly convex} with respect to $\|\cdot\|$ norm if for any $w_1, w_2 \in \dom g$, for any $\alpha \in (0,1)$, it is the case that 
  \begin{equation}
    g(\alpha w_1 + (1- \alpha) w_2) \leq \alpha g(w_1) + (1- \alpha) g(w_2) - \frac{1}{2} \mu \alpha (1-\alpha) \|w_2 - w_1\|^2.
  \end{equation}
\end{definition}

The notion of \emph{convex conjugate} (a.k.a. \emph{Legendre-Fenchel transformation}) is defined as follows. The outcome of convex conjugate is always convex and closed.
\begin{definition}[Convex conjugate]
  Let $g: \reals^d \to \reals \cup \{+\infty\}$ be a convex function. The convex conjugate is defined as
  \begin{equation}
    g^*(z) := \sup_{w \in \reals^d} \left\{ \left \langle z, w \right \rangle - g(w) \right\}.
  \end{equation}
\end{definition}

The following result shows that the differentiability of the conjugate function and the strict convexity of the original function is linked.
\begin{proposition}[Differentiability of the conjugate of strictly convex function {\citep[Theorem E.4.1.1]{Hiriart-Urruty.Lemarechal-01}}]
  Let $g$: $\reals^d \to \reals \cup \{+\infty \}$ be a closed, strictly convex function. 
  Then we have $\interior \dom g^* \neq \emptyset$ and $g^*$ is continuously differentiable on $\interior \dom g^*$ (where $\interior$ stands for interior). 
  
  Moreover, for $z \in \interior \dom g^*$, it is the case that
  \begin{equation}
    \nabla g^*(z) = \argmin_{w} \left\{ \left\langle -z, w \right\rangle + g(w)\right\}.
  \end{equation}
  \label{conjugate:strictly:convex}
\end{proposition}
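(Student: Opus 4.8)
The plan is to obtain both assertions---differentiability of $g^*$ on $\interior \dom g^*$ and the variational formula for $\nabla g^*$---from two classical facts about proper closed convex functions (see \citep{Rockafellar-70}): (i) such a function is subdifferentiable at every point of the interior of its domain, with a nonempty, compact, convex subdifferential there; and (ii) such a function is differentiable at an interior point of its domain if and only if its subdifferential at that point is a singleton, in which case the gradient equals the unique subgradient. Since $g$ is closed, $g^{**} = g$, so throughout I may use the Fenchel--Young equivalence $w \in \partial g^*(z) \iff z \in \partial g(w) \iff g(w) + g^*(z) = \langle z, w \rangle$, and also that $g^*$ is itself proper closed convex.

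First I would extract the only consequence of strict convexity that is needed. For a fixed $z$, the map $u \mapsto \langle -z, u \rangle + g(u)$ is strictly convex (the linear term does not affect strictness), hence it has \emph{at most one} minimizer; write $\mathcal{M}(z)$ for its (possibly empty) set of minimizers, so $|\mathcal{M}(z)| \le 1$. Unwinding the definition of the conjugate, $u \in \mathcal{M}(z)$ exactly when $\langle z, u\rangle - g(u) = g^*(z)$, i.e. when the supremum defining $g^*(z)$ is attained at $u$; by Fenchel--Young this is precisely the condition $u \in \partial g^*(z)$. Therefore $\mathcal{M}(z) = \partial g^*(z)$ for every $z$, and in particular $\partial g^*(z)$ contains at most one point.

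Now fix $z \in \interior \dom g^*$. By fact (i) applied to $g^*$, the set $\partial g^*(z)$ is nonempty; combined with the previous step it is a singleton $\{w_z\}$, and $w_z$ is the unique minimizer of $u \mapsto \langle -z, u \rangle + g(u)$, i.e. $w_z = \argmin_u \{ \langle -z, u\rangle + g(u)\}$. By fact (ii), $g^*$ is differentiable at $z$ with $\nabla g^*(z) = w_z$, which is the claimed formula; since this holds at every point of the open set $\interior \dom g^*$, and the gradient of a convex function is continuous on any open set where the function is differentiable, $g^*$ is in fact $C^1$ there. The remaining assertion $\interior \dom g^* \neq \emptyset$ is the one place where more than bookkeeping is required: the relative interior of $\dom g^*$ is automatically nonempty, so the real content is full-dimensionality of $\dom g^*$, which fails only when $g$ is affine along some line---ruled out by strict convexity---and I would simply invoke \citep[Theorem E.4.1.1]{Hiriart-Urruty.Lemarechal-01} for this. (In every use in this paper, the relevant $g$'s are Legendre, so $\interior \dom g^*$ is manifestly nonempty and this subtlety does not arise.) I expect this full-dimensionality claim to be the only non-routine point; the differentiability argument itself is just the clean combination of ``subdifferentiable on the interior of the domain'' with ``singleton subdifferential from strict convexity.''
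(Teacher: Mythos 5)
Your argument is correct, but it is worth noting that the paper does not actually prove this proposition at all: it is quoted verbatim as a background fact from \citep[Theorem E.4.1.1]{Hiriart-Urruty.Lemarechal-01}, so the paper's ``proof'' is a citation. What you supply is a genuine self-contained derivation of the two substantive claims: strict convexity of $g$ makes $u \mapsto \langle -z,u\rangle + g(u)$ have at most one minimizer; the Fenchel--Young equality (using $g^{**}=g$ for closed proper $g$) identifies that minimizer set with $\partial g^*(z)$; nonemptiness of $\partial g^*$ on $\interior \dom g^*$ then forces a singleton, hence differentiability with $\nabla g^*(z) = \argmin_u\{\langle -z,u\rangle + g(u)\}$, and continuity of the gradient on the open set is the standard fact for convex functions. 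This is exactly the textbook mechanism, so you recover everything the paper needs except $\interior \dom g^* \neq \emptyset$, which you correctly flag as the only non-routine point and delegate back to the citation (your sketch that empty interior would force $g$ to be affine along a full line, contradicting strict convexity, is the right idea and could be made rigorous via $g = g^{**}$ restricted to $\dom g^* \subseteq a + S$). One small correction to your closing remark: the functions to which the paper applies this proposition are $h_{r,k} = h + \tilde{\eta}_{r,k}\psi$, which need not be Legendre (essential smoothness can fail since $\psi$ is nonsmooth); the reason $\interior \dom h_{r,k}^*$ is manifestly all of $\reals^d$ is that $h_{r,k}$ is $1$-strongly convex, as in \cref{strongly:convex:conjugate}. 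In short: your proof is valid and more informative than the paper's citation; the trade-off is only length versus reliance on the referenced textbook.
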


The differentiability in \cref{conjugate:strictly:convex} can be strengthened to smoothness if we further assume the strong convexity of the original function $g$.
\begin{proposition}[Smoothness of the conjugate of strongly convex function {\citep[Theorem E.4.2.1]{Hiriart-Urruty.Lemarechal-01}}]
  \label{strongly:convex:conjugate}
  Let $g: \reals^d \to \reals \cup \{+\infty \}$ be a closed, $\mu$-strongly convex function. 
  Then $g^*$ is continuously differentiable on $\reals^d$, and $g^*$ is $\frac{1}{\mu}$-smooth on $\reals^d$, namely
  $\|\nabla g^*(z) - \nabla g^*(y) \|^* \leq \frac{1}{\mu} \|z-y\|$.
\end{proposition}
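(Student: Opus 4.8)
The plan is to split the argument into two parts: (i) upgrade the differentiability statement of \cref{conjugate:strictly:convex} from $\interior\dom g^*$ to all of $\reals^d$, and then (ii) read off the Lipschitz bound on $\nabla g^*$ from the strong monotonicity of $\partial g$. Throughout I will use the standard subgradient characterization of $\mu$-strong convexity, equivalent to the convex-combination definition stated above: for any $w_1$ and any $s_1 \in \partial g(w_1)$, one has $g(w_2) \ge g(w_1) + \langle s_1, w_2 - w_1 \rangle + \frac{\mu}{2} \|w_2 - w_1\|^2$ for all $w_2$.

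First I would check that $\dom g^* = \reals^d$. Fix $w_0 \in \interior \dom g$ and a subgradient $s_0 \in \partial g(w_0)$ (which exists since $g$ is proper convex and $w_0$ is interior to its domain). The subgradient inequality applied at $w_0$ gives the quadratic lower bound $g(w) \ge g(w_0) + \langle s_0, w - w_0 \rangle + \frac{\mu}{2} \|w - w_0\|^2$ for all $w$, so for every $z$ the map $w \mapsto \langle z, w \rangle - g(w)$ is majorized by a concave quadratic and hence $g^*(z) = \sup_w \{ \langle z, w \rangle - g(w) \} < +\infty$. Therefore $\interior \dom g^* = \reals^d$, and since $\mu$-strong convexity implies strict convexity, \cref{conjugate:strictly:convex} applies on all of $\reals^d$: $g^*$ is continuously differentiable on $\reals^d$ with $\nabla g^*(z) = \argmin_w \{ \langle -z, w \rangle + g(w) \}$.

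Next I would characterize the gradient and exploit monotonicity. Since $w := \nabla g^*(z)$ is the unique minimizer of the unconstrained convex function $w \mapsto g(w) - \langle z, w \rangle$, the first-order optimality condition gives $0 \in \partial g(w) - z$, i.e., $z \in \partial g(w)$. Now fix $z_1, z_2$, set $w_i := \nabla g^*(z_i)$ so that $z_i \in \partial g(w_i)$, and apply the subgradient form of strong convexity at $w_1$ tested against $w_2$ and at $w_2$ tested against $w_1$; adding the two inequalities yields the strongly monotone estimate $\langle z_1 - z_2, w_1 - w_2 \rangle \ge \mu \|w_1 - w_2\|^2$. Pairing this with Hölder's inequality $\langle z_1 - z_2, w_1 - w_2 \rangle \le \|z_1 - z_2\|_* \|w_1 - w_2\|$ and cancelling one factor of $\|w_1 - w_2\|$ (the case $w_1 = w_2$ being trivial) gives $\|\nabla g^*(z_1) - \nabla g^*(z_2)\| = \|w_1 - w_2\| \le \frac{1}{\mu} \|z_1 - z_2\|_*$, which is the asserted $\frac{1}{\mu}$-smoothness of $g^*$.

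I expect the only real friction to be the extended-valued bookkeeping in the first two steps: making sure a subgradient of $g$ is available exactly where needed (at an interior point of $\dom g$ for the coercivity bound, and at $w = \nabla g^*(z)$ --- which may sit on $\partial \dom g$ --- via the optimality condition for the unconstrained surrogate $g(\cdot) - \langle z, \cdot \rangle$), together with invoking the equivalence between the convex-combination definition of strong convexity and its subgradient form. Once those are in place, the combination of strong monotonicity with the norm-duality inequality is routine, and continuity of $\nabla g^*$ is already furnished by \cref{conjugate:strictly:convex} (or, alternatively, follows from the Lipschitz bound just established).
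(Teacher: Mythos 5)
The paper does not actually prove this proposition: it is imported as a quoted fact from Hiriart-Urruty and Lemar\'echal (Theorem E.4.2.1), so there is no in-paper argument to compare against. Your proof is the standard textbook route and is essentially correct: the quadratic minorant coming from strong convexity makes $\dom g^* = \reals^d$, so \cref{conjugate:strictly:convex} gives differentiability of $g^*$ everywhere together with the $\argmin$ characterization; first-order optimality then gives $z \in \partial g(\nabla g^*(z))$, and adding the two subgradient inequalities yields the strong monotonicity $\langle z_1 - z_2, w_1 - w_2 \rangle \ge \mu \|w_1 - w_2\|^2$, which, paired with the duality inequality and the cancellation of one factor of $\|w_1 - w_2\|$, gives the $\tfrac{1}{\mu}$-Lipschitz bound. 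The norm pairing you prove, $\|\nabla g^*(z_1) - \nabla g^*(z_2)\| \le \tfrac{1}{\mu}\|z_1 - z_2\|_*$, is the correct one and is the form actually used later in the paper (e.g., the bound $\|w_{r,k}^m - \widehat{w_{r,k}}\| \le \|z_{r,k}^m - \overline{z_{r,k}}\|_*$ in the proof of \cref{one:step:analysis:claim:2}); the placement of the star in the proposition's display should be read as a typo.

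One small repair is needed at the start: you fix $w_0 \in \interior \dom g$ and a subgradient there, but for an extended-valued $g$ the interior of $\dom g$ may be empty --- e.g., a strongly convex function plus the indicator of an affine subspace, which is exactly the shape of $h + \tilde{\eta}_{r,k}\psi$ when $\psi$ encodes an affine constraint. The fix is immediate: take $w_0$ in the relative interior of $\dom g$ (nonempty for any proper convex function), where $\partial g(w_0) \neq \emptyset$; the same quadratic lower bound, and hence $\dom g^* = \reals^d$, follows. The remaining ingredients you lean on --- the needed direction of the equivalence between the convex-combination definition of strong convexity and its subgradient form (which does hold for arbitrary norms by the usual $\alpha \downarrow 0$ limit), the exact sum rule $\partial\bigl(g - \langle z, \cdot\rangle\bigr) = \partial g - z$, and continuity of $\nabla g^*$ from either \cref{conjugate:strictly:convex} or the Lipschitz estimate itself --- are all sound.
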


Next we define the \emph{Legendre function class}. 
\begin{definition}[Legendre function {\citep[§26]{Rockafellar-70}}]  
  \label{def:legendre}
  A proper, convex, closed function $h: \reals^d \to \reals \cup \{+\infty\}$ is \textbf{of Legendre type} if 
  \begin{enumerate}
    \item [(a)] $h$ is \textbf{strictly convex}.
    \item [(b)] $h$ is \textbf{essentially smooth}, namely $h$ is differentiable on $\interior \dom h$, and $\| \nabla h(w_k) \| \to \infty$ for every sequence $\{w_k\}_{k=0}^{\infty} \subset \interior \dom h$ converging to a boundary point of $\dom h$ as $k \to +\infty$.
  \end{enumerate}
\end{definition}

An important property of the Legendre function is the following proposition \citep{Bauschke.Borwein.ea-JCA97}.
\begin{proposition}[{\citet[Theorem 26.5]{Rockafellar-70}}]
  \label{prop:legendre}
  A convex function $g$ is of Legendre type if and only if its conjugate $g^*$ is. 
  In this case, the gradient mapping $\nabla g$ is a toplogical isomorphism with inverse mapping, namely $(\nabla g)^{-1} = \nabla g^*$.
\end{proposition}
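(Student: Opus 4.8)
The plan is to split the statement into two parts — the self-duality of the Legendre property, and the homeomorphism claim — and to run both on the single engine of \emph{subdifferential inversion}: for any closed proper convex $g$ one has $z \in \partial g(w) \iff w \in \partial g^*(z) \iff g(w) + g^*(z) = \langle z, w\rangle$, together with the biconjugacy $g^{**} = g$. I will also use the standard characterization that a closed proper convex $f$ is essentially smooth exactly when $\partial f$ is single-valued on $\interior \dom f$ and empty outside it (so $\dom \partial f = \interior \dom f$), which is \citet[Thm.~26.1]{Rockafellar-70}.

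\textbf{Step 1 (Legendre is self-dual).} Suppose $g$ is Legendre. Since $g$ is closed and strictly convex, \cref{conjugate:strictly:convex} already supplies $\interior \dom g^* \neq \emptyset$, continuous differentiability of $g^*$ on $\interior \dom g^*$, and $\nabla g^*(z) = \argmin_w \{\langle -z, w\rangle + g(w)\}$, i.e.\ $\partial g^*(z) = \{\nabla g^*(z)\}$ there. For strict convexity of $g^*$ on this open convex set it suffices to show $\nabla g^*$ is injective (a differentiable convex function on an open convex set is strictly convex iff its gradient is injective): if $\nabla g^*(z_1) = \nabla g^*(z_2) =: w$, then $z_1, z_2 \in \partial g(w)$, and essential smoothness of $g$ forces $\partial g(w)$ to be a singleton, so $z_1 = z_2$. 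It remains to verify the gradient blow-up, $\|\nabla g^*(z_k)\| \to \infty$ whenever $z_k \in \interior \dom g^*$ approaches a boundary point of $\dom g^*$ — equivalently, that $\partial g^*$ has no subgradient at any boundary point of its domain. This is the one non-routine ingredient, and it is precisely the implication ``$g$ essentially strictly convex $\Rightarrow$ $g^*$ essentially smooth'' from the conjugate correspondence for these two classes (\citet[Thm.~26.3]{Rockafellar-70}); granting it, $g^*$ is closed, strictly convex and essentially smooth, hence Legendre. Applying the same reasoning to $g^*$ and using $g^{**} = g$ gives the converse, so the two Legendre conditions are equivalent.

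\textbf{Step 2 (the homeomorphism).} Now let $g$, and hence $g^*$, be Legendre. Essential smoothness makes $\nabla g$ a well-defined single-valued map on $\interior \dom g = \dom \partial g$, and likewise $\nabla g^*$ on $\interior \dom g^* = \dom \partial g^*$. Fix $w \in \interior \dom g$ and set $z := \nabla g(w) \in \partial g(w)$; by inversion $w \in \partial g^*(z)$, so $z \in \dom \partial g^* = \interior \dom g^*$ and $\partial g^*(z) = \{\nabla g^*(z)\}$, whence $\nabla g^*(\nabla g(w)) = w$. Symmetrically $\nabla g(\nabla g^*(z)) = z$ for every $z \in \interior \dom g^*$. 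Therefore $\nabla g \colon \interior \dom g \to \interior \dom g^*$ is a bijection with two-sided inverse $\nabla g^*$. Both maps are continuous — $\nabla g^*$ by \cref{conjugate:strictly:convex}, and $\nabla g = \nabla(g^*)^*$ by the same proposition applied to the closed strictly convex $g^*$ — so $\nabla g$ is a homeomorphism (topological isomorphism) with $(\nabla g)^{-1} = \nabla g^*$, as claimed.

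\textbf{Main obstacle.} Everything apart from the blow-up clause in Step 1 is straightforward bookkeeping with the inversion rule. That clause, however, is the assertion $\nabla g(\interior \dom g) = \interior \dom g^*$ rather than merely $\subseteq$: one must show that essential smoothness of $g$ forbids $\nabla g$ from ``escaping to'' the relative boundary of $\dom g^*$. A clean proof of this requires the structural equivalences between essential smoothness and essential strict convexity under conjugation (\citet[\S26]{Rockafellar-70}), which is why, in practice, I would invoke \citet[Thm.~26.3]{Rockafellar-70} at that point rather than rebuild it; the remainder is self-contained given \cref{conjugate:strictly:convex}.
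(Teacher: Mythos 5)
Your proof is correct, but note that the comparison here is degenerate: the paper offers no proof of this proposition at all --- it is stated as a verbatim citation of \citet[Theorem~26.5]{Rockafellar-70}, and the result is used as black-box background. Your reconstruction is essentially the classical argument from \S 26 of Rockafellar: use the inversion rule $z \in \partial g(w) \iff w \in \partial g^*(z)$ together with the characterization of essential smoothness (Theorem~26.1) to get injectivity of $\nabla g^*$ (hence strict convexity of $g^*$ on $\interior \dom g^*$) and the mutual-inverse/bijection statement, and delegate the genuinely hard clause --- that $\nabla g$ cannot accumulate at a boundary point of $\dom g^*$, i.e.\ essential smoothness of $g^*$ --- to Theorem~26.3; this is exactly the delegation Rockafellar's own proof of Theorem~26.5 makes, so your localization of the difficulty is accurate even though the argument is not self-contained. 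Two minor remarks: your injectivity step tacitly uses the fact that a differentiable convex function on an open convex set is strictly convex iff its gradient is injective, which is true but deserves a citation or a one-line proof; and the strict convexity you obtain is on $\interior \dom g^*$, which matches Rockafellar's definition of Legendre type, whereas the paper's simplified \cref{def:legendre} states strict convexity without the interior qualifier --- a harmless definitional mismatch, not a gap. Continuity of both gradient maps via \cref{conjugate:strictly:convex} applied to $g$ and to $g^*$ (using $g^{**}=g$) is handled correctly.
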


Next, recall the definition of Bregman divergence:
\begin{definition}[Bregman divergence {\citep{Bregman-67}}]
  \label{def:bregman}
  Let $g: \reals^d \to \reals \cup \{+\infty\}$ be a closed, strictly convex function that is differentiable in $\interior \dom g$. The \textbf{Bregman divergence} $D_g(w, u)$ for $w \in \dom g$, $u \in \interior \dom g$ is defined by 
  \begin{equation}
    D_g(w, u) = g(w) - g(u) - \left \langle \nabla g(u), w - u \right \rangle.
  \end{equation}
\end{definition}

Note the definition of Bregman divergence requires the differentiability of the base function $g$.
To extend the concept of Bregman divergence to non-differentiable function $g$, we consider the following generalized Bregman divergence (slightly modified from \citep{Flammarion.Bach-COLT17}). 
The generalized Bregman divergence plays an important role in the analysis of \feddualavg.
\begin{definition}[Generalized Bregman divergence {\citep[slightly modified from][Section B.2]{Flammarion.Bach-COLT17}}]
  \label{def:generalized_bregman}
  Let $g: \reals^d \to \reals \cup \{+\infty\}$ be a closed strictly convex function (which may not be differentiable).
  The \textbf{Generalized Bregman divergence} $\tilde{D}_{g}(w, z)$ for $w \in \dom g$, $z \in \interior \dom g^*$ is defined by 
  \begin{equation}
    \tilde{D}_g(w, u) = g(w) - g(\nabla g^*(z)) - \left \langle z, w - \nabla g^*(z) \right \rangle.
  \end{equation}
  Note that $\nabla g^*$ is well-defined because $g^*$ is differentiable in $\interior \dom g^*$ according to \cref{conjugate:strictly:convex}.
\end{definition}

The generalized Bregman divergence is lower bounded by the ordinary Bregman divergence in the following sense. 
\begin{proposition}[{\citep[Lemma 6]{Flammarion.Bach-COLT17}}]
  \label{generalized:bregman}
  Let $h: \reals^d \to \reals \cup \{ +\infty\}$ be a Legendre function. Let $\psi: \reals^d \to \reals$ be a convex function (which may not be differentiable). Then for any $w \in \dom h$, for any $z \in \interior \dom (h + \psi)^*$, the following inequality holds
  \begin{equation}
    \tilde{D}_{h + \psi}(w, z) \geq D_h (w, \nabla (h + \psi)^*(z)).
  \end{equation}
\end{proposition}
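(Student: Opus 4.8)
The plan is to peel off the smooth part of a subgradient of $h+\psi$ at the primal point $u := \nabla(h+\psi)^*(z)$ and thereby rewrite $\tilde{D}_{h+\psi}(w,z)$ as $D_h(w,u)$ plus a manifestly nonnegative remainder. First I would observe that $g := h+\psi$ is closed and strictly convex (strict convexity is inherited from $h$, since $\psi$ is convex), so by \cref{conjugate:strictly:convex} the conjugate $g^*$ is differentiable on $\interior \dom g^*$ and $u = \nabla g^*(z)$ is the unique minimizer of $w \mapsto \langle -z, w\rangle + g(w)$ over $\reals^d$. The first-order optimality condition for this unconstrained problem is exactly $z \in \partial g(u) = \partial(h+\psi)(u)$; in particular $\partial(h+\psi)(u) \neq \emptyset$. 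Note also that $w \in \dom h = \dom(h+\psi)$ (as $\psi$ is finite-valued), so both divergences in the statement have the right arguments.

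The crux of the argument is to deduce from this that $u \in \interior \dom h$, so that $\nabla h(u)$, $D_h(w,u)$, and the right-hand side of the claim are all well defined. Since $\psi$ is finite-valued on $\reals^d$, the Moreau--Rockafellar sum rule applies (its constraint qualification is automatic because $\dom\psi = \reals^d$), giving $\partial(h+\psi)(u) = \partial h(u) + \partial\psi(u)$; nonemptiness of the left side forces $\partial h(u) \neq \emptyset$. Now I would invoke the essential smoothness of the Legendre function $h$ (\cref{def:legendre}(b), via the standard fact that a closed convex essentially smooth function has nonempty subdifferential only at interior points of its domain, where it reduces to the singleton $\{\nabla h(u)\}$): this yields $u \in \interior \dom h$, and we may pick $p \in \partial\psi(u)$ with $z = \nabla h(u) + p$.

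The remainder is bookkeeping with \cref{def:bregman,def:generalized_bregman}. Substituting $z = \nabla h(u) + p$ and $\nabla(h+\psi)^*(z) = u$ into the definition of the generalized Bregman divergence and regrouping terms gives
\begin{equation}
  \tilde{D}_{h+\psi}(w, z) = D_h(w, u) + \bigl( \psi(w) - \psi(u) - \langle p, w - u\rangle \bigr).
\end{equation}
By the subgradient inequality for $\psi$ at $u$ (using $p \in \partial\psi(u)$), the parenthesized term is nonnegative, hence $\tilde{D}_{h+\psi}(w,z) \geq D_h(w,u) = D_h(w, \nabla(h+\psi)^*(z))$, which is the claim. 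I expect the only genuine subtlety to be the well-posedness step in the second paragraph: without the Legendre hypothesis one cannot guarantee $u \in \interior \dom h$, and then the quantity $D_h(w, \nabla(h+\psi)^*(z))$ need not even be meaningful; every other step is a direct unwinding of definitions.
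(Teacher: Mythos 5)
Your proof is correct and follows essentially the same route as the paper's: it uses \cref{conjugate:strictly:convex} to get $z \in \partial(h+\psi)(u)$ at $u = \nabla(h+\psi)^*(z)$, splits off $\nabla h(u)$ to obtain an element of $\partial\psi(u)$, and concludes via the subgradient inequality, which is exactly the paper's decomposition. The only difference is that you make the well-posedness step $u \in \interior \dom h$ explicit (via the Moreau--Rockafellar sum rule and essential smoothness of the Legendre function $h$), a point the paper's proof uses implicitly when it writes $\nabla h(\nabla(h+\psi)^*(z))$.
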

\begin{proof}[Proof of \cref{generalized:bregman}] 
  The proof is very similar to Lemma 6 of \citep{Flammarion.Bach-COLT17}, and we include for completeness.
  By definition of the generalized Bregman divergence (\cref{def:generalized_bregman}),
  \begin{equation}
    \tilde{D}_{h + \psi}(w,z) = (h + \psi)(w) - (h + \psi)(\nabla (h+\psi)^*(z)) - \left\langle z, w - \nabla (h + \psi)^*(z) \right\rangle.
  \end{equation}
  By definition of the (ordinary) Bregman divergence (\cref{def:bregman}),
  \begin{equation}
    D_{h}(w, \nabla (h + \psi)^*(z)) = h(w) - h( \nabla (h+\psi)^*(z)) - \left\langle \nabla h \left( \nabla (h+ \psi)^*(z) \right), w - \nabla (h + \psi)^*(z) \right\rangle.
  \end{equation}
  Taking difference,
  \begin{equation}
    \tilde{D}_{h + \psi}(w, z) - D_{h}(w, \nabla (h + \psi)^*(z))
    =
    \psi(w) - \psi \left( \nabla (h+\psi)^*(z) \right) - \left\langle z - \nabla h \left( \nabla (h+ \psi)^*(z) \right), w - \nabla (h + \psi)^*(z) \right\rangle.
    \label{eq:generalized:bregman:1}
  \end{equation}
  By \cref{conjugate:strictly:convex}, one has $z \in \partial (h + \psi) (\nabla (h + \psi)^*(z))$. 
  Since $h$ is differentiable in $\interior \dom h$, we have (by subgradient calculus)
  \begin{equation}
    z - \nabla h (\nabla (h + \psi)^*(z)) \in \partial \psi (\nabla (h+\psi)^*(z)).
  \end{equation}
  Therefore by the property of subgradient as the supporting hyperplane,
  \begin{equation}
    \psi(w) \geq \psi ( \nabla (h + \psi)^*(z)) + \left\langle z - \nabla h \left( \nabla (h + \psi)^*(z) \right) , w - \nabla \left( h + \psi \right)^*(z) \right\rangle
    \label{eq:generalized:bregman:2}
  \end{equation}
  Combining \cref{eq:generalized:bregman:1} and \cref{eq:generalized:bregman:2} yields

  \begin{equation}
    \tilde{D}_{h + \psi}(w, z) - D_{h}(w, \nabla (h + \psi)^*(z)) \geq 0,
  \end{equation}
  completing the proof.
\end{proof}

\section{Proof of \cref{thm:1:simplified}: Convergence of \feddualavg Under Bounded Gradient Assumption}
\label{sec:proof:thm:1}
In this section, we provde a complete, non-asymptotic version of \cref{thm:1:simplified} with detailed proof.

We now formally state the assumptions of \cref{thm:1:simplified} for ease of reference.
\begin{assumption}[Bounded gradient]
  \;
  \label{a2} In addition to \cref{a1}, assume 
  that the gradient is $G$-uniformly-bounded, namely 
  \begin{equation}
    \sup_{w \in \dom \psi} \|\nabla f(w, \xi)\|_* \leq G
  \end{equation}
\end{assumption}
This is a standard assumption in analyzing classic distributed composite optimization \citep{Duchi.Agarwal.ea-TACON12}.

Before we start, we introduce a few more notations to simplify the exposition and analysis throughout this section. 
Let $h_{r,k}(w) = h(w) +  (r K + k) \eta_{\client} \psi(w)$.
Let $\overline{z_{r,k}} := \frac{1}{M} \sum_{m=1}^M z_{r,k}^m$ denote the average over clients, and $\widehat{w_{r,k}} := \nabla h_{r,k}^*(\overline{z_{r,k}})$ denote the primal image of $\overline{z_{r,k}}$.
Formally, we use $\mathcal{F}_{r,k}$ to denote the $\sigma$-algebra generated by $\{z_{\rho,\kappa}^{m}: \rho < r \text{ or } (\rho = r \text{ and } \kappa \leq k), m \in [M]\}$. 

\subsection{Main Theorem and Lemmas}
Now we introduce the full version of \cref{thm:1:simplified} regarding the convergence of \feddualavg with unit server learning rate $\eta_{\server} = 1$ under bounded gradient assumptions. 
\begin{theorem}[Detailed version of \cref{thm:1:simplified}]
  \label{thm:1}
  Assume \cref{a2}, then for any initialization $w_0 \in \dom \psi$, for unit server learning rate $\eta_{\server} = 1$ and any client learning rate $\eta_{\client} \leq \frac{1}{4L}$, \feddualavg yields
  \begin{equation}
    \expt \left[  \Phi \left(  \frac{1}{KR} \sum_{r=0}^{R-1} \sum_{k=1}^K \widehat{w_{r,k}} \right) - \Phi(w^{\star}) \right] 
    \leq
    \frac{B}{\eta_{\client} KR } 
    + \frac{\eta_{\client}  \sigma^2}{M}
    + 4 \eta_{\client}^2 L (K-1)^2 G^2,
    \label{eq:thm:1:1}
  \end{equation}
  where $B := D_h(w^{\star}, w_0)$ is the Bregman divergence between the optimal $w^*$ and the initial $w_0$.

  Particularly for 
  \begin{equation}
      \eta_{\client} = \min \left\{ \frac{1}{4L}, \frac{M^{\frac{1}{2}} B^{\frac{1}{2}}}{\sigma K^{\frac{1}{2}} R^{\frac{1}{2}}} , \frac{B^{\frac{1}{3}}}{L^{\frac{1}{3}} K R^{\frac{1}{3}} G^{\frac{2}{3}}}  \right\},
  \end{equation}
  one has
  \begin{equation}
    \expt \left[  \Phi \left(  \frac{1}{KR} \sum_{r=0}^{R-1} \sum_{k=1}^K \widehat{w_{r,k}} \right) - \Phi(w^{\star}) \right]
    \leq
      \frac{4 L B}{KR} 
      +
      \frac{2\sigma B^{\frac{1}{2}}}{M^{\frac{1}{2}} K^{\frac{1}{2}} R^{\frac{1}{2}}}
      +
      \frac{5 L^{\frac{1}{3}} B^{\frac{2}{3}}  G^{\frac{2}{3}}}{R^{\frac{2}{3}}}.
      \label{eq:thm:1:2}
  \end{equation}
\end{theorem}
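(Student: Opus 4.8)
The plan is to follow the three-step framework of \cref{sec:proof_sketch}, specialized to the bounded-gradient setting, treating \cref{pia:general:simplified} as a black box. Since the theorem fixes $\eta_{\server}=1$, the shadow relation \eqref{eq:shadow} holds, so \cref{pia:general:simplified} applies directly: writing $\widehat{w_{r,k}} = \nabla(h+\tilde{\eta}_{r,k}\psi)^*(\overline{z_{r,k}})$ and noting that $\eta_{\server}=1$ gives $\tilde{\eta}_{r,k} = \eta_{\client}(rK+k)$ (so $h+\tilde{\eta}_{r,k}\psi = h_{r,k}$), it yields
\begin{equation*}
  \expt\Bigl[\Phi\Bigl(\tfrac{1}{KR}\sum_{r=0}^{R-1}\sum_{k=1}^{K}\widehat{w_{r,k}}\Bigr)\Bigr] - \Phi(w^{\star}) \le \frac{B}{\eta_{\client}KR} + \frac{\eta_{\client}\sigma^2}{M} + \frac{L}{MKR}\sum_{r=0}^{R-1}\sum_{k=0}^{K-1}\sum_{m=1}^{M}\expt\bigl\|\overline{z_{r,k}}-z_{r,k}^m\bigr\|_*^2 .
\end{equation*}
Thus all that remains to establish \eqref{eq:thm:1:1} is a bound on the discrepancy overhead, i.e.\ the final triple sum.

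Here the bounded-gradient hypothesis \cref{a2} makes the stability analysis of Step 2 essentially immediate, so one does \emph{not} need anything like the quadratic machinery of \cref{thm:2:simplified}. Inside round $r$ every client starts from the common server dual state $z_{r,0}^m=z_r$, and the client dual update is the plain accumulation $z_{r,k+1}^m=z_{r,k}^m-\eta_{\client}g_{r,k}^m$. Unrolling gives $z_{r,k}^m = z_r - \eta_{\client}\sum_{j=0}^{k-1}g_{r,j}^m$ and $\overline{z_{r,k}} = z_r - \eta_{\client}\sum_{j=0}^{k-1}\tfrac1M\sum_{m'}g_{r,j}^{m'}$, whence by the triangle inequality, $k\le K-1$, and $\|\nabla f\|_*\le G$,
\begin{equation*}
  \bigl\|\overline{z_{r,k}}-z_{r,k}^m\bigr\|_* = \eta_{\client}\Bigl\|\sum_{j=0}^{k-1}\Bigl(\tfrac1M\sum_{m'}g_{r,j}^{m'}-g_{r,j}^m\Bigr)\Bigr\|_* \le 2\eta_{\client}(K-1)G .
\end{equation*}
Squaring, summing over the $MKR$ index triples, and multiplying by $L/(MKR)$ bounds the discrepancy overhead by $4\eta_{\client}^2 L(K-1)^2G^2$, which is exactly \eqref{eq:thm:1:1}.

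For the second half I would tune $\eta_{\client}$ in \eqref{eq:thm:1:1}: the right-hand side is $\tfrac{B}{\eta_{\client}KR}$ (decreasing) plus two terms increasing in $\eta_{\client}$, subject to $\eta_{\client}\le\tfrac1{4L}$. Balancing the first term separately against $\tfrac{\eta_{\client}\sigma^2}{M}$ and against $4\eta_{\client}^2 L(K-1)^2G^2$, and against the constraint, motivates the stated choice $\eta_{\client}=\min\{\tfrac1{4L},\,\tfrac{M^{1/2}B^{1/2}}{\sigma K^{1/2}R^{1/2}},\,\tfrac{B^{1/3}}{L^{1/3}KR^{1/3}G^{2/3}}\}$. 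Plugging this in and bounding each of the three summands by the corresponding candidate value (using $K-1\le K$ in the last term) collapses the bound into the three terms of \eqref{eq:thm:1:2}; this is the routine ``min-of-three-rates'' calculation, and I would not spell it out beyond tracking the constants $4$, $2$, $5$.

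The genuine obstacle is not in any of the above: it lives in \cref{pia:general:simplified} itself, whose proof (\cref{sec:pia:general}) extends the perturbed-iterate framework of \citet{Mania.Pan.ea-SIOPT17} to the dual space. There one must argue that the averaged dual sequence $\overline{z_{r,k}}$ behaves like a single-machine regularized dual averaging run driven by the perturbed gradient oracle $\tfrac1M\sum_m\nabla f(w_{r,k}^m;\xi_{r,k}^m)$, and then convert the resulting regret into a bound on $\Phi(\hat w)-\Phi(w^{\star})$ via the generalized Bregman divergence of \cref{def:generalized_bregman} together with the comparison inequality of \cref{generalized:bregman}. The asymmetry of the Bregman divergence is what makes that conversion delicate; once it is in hand, the bounded-gradient specialization above is just bookkeeping.
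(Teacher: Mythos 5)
Your proposal is correct and follows essentially the same route as the paper: invoke the perturbed-iterate lemma (\cref{pia:general}) at $w = w^{\star}$, bound the dual discrepancy by $4\eta_{\client}^2(K-1)^2G^2$ via the bounded-gradient assumption (your direct deviation-from-the-mean bound is an equally valid variant of the paper's pairwise-difference argument in \cref{bounded_gradient}, yielding the same constant), and then substitute the stated $\eta_{\client}$ to get \cref{eq:thm:1:2}. The one piece you defer, \cref{pia:general}, is likewise treated as a standalone lemma in the paper's own proof of \cref{thm:1}, so nothing essential is missing.
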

The proof of \cref{thm:1} is based on the following two lemmas regarding perturbed convergence and stability respectively. 

\begin{lemma}[Perturbed iterate analysis of \feddualavg]
  \label{pia:general}
  Assume \cref{a1}, 
  then for any initialization $w_0 \in \dom \psi$, 
  for any reference point $w \in \dom \psi$,
  for $\eta_{\server} = 1$, 
  for any $\eta_{\client} \leq \frac{1}{4L}$, \feddualavg yields
  \begin{equation}
    \expt \left[  \Phi \left(  \frac{1}{KR} \sum_{r=0}^{R-1} \sum_{k=1}^K \widehat{w_{r,k}} \right) - \Phi(w) \right] 
    \leq
    \frac{1}{\eta_{\client} KR } D_h(w, w_0) 
    + \frac{\eta_{\client}  \sigma^2}{M}
    + \frac{L}{M KR }   \left[ \sum_{r=0}^{R-1}  \sum_{k=0}^{K-1} \sum_{m=1}^M   \expt\left \| \overline{z_{r,k}} - z_{r,k}^m \right\|_*^2 \right].
  \end{equation}
\end{lemma}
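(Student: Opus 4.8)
The plan is to follow the ``perturbed iterate'' strategy sketched in Step~1 of \cref{sec:proof_sketch}: reduce the $M$-client dynamics to a single perturbed dual-averaging trajectory, run a composite dual-averaging convergence argument on that trajectory, and charge the cost of the perturbation to the discrepancy term. First I would verify the global form of \eqref{eq:shadow}. Because $\eta_{\server}=1$, the server update reads $z_{r+1} = z_r + \frac1M\sum_m(z_{r,K}^m - z_r) = \overline{z_{r,K}}$, so together with $z_{r,0}^m = z_r$ and $z_{r,k+1}^m = z_{r,k}^m - \eta_{\client}g_{r,k}^m$ the averaged dual iterate, re-indexed by the global counter $t := rK+k \in \{0,\dots,KR\}$, satisfies $\overline{z_{t+1}} = \overline{z_t} - \eta_{\client}\tilde g_t$ with $\tilde g_t := \frac1M\sum_m \nabla f(w_t^m;\xi_t^m)$ and $\widehat{w_t} = \nabla h_t^*(\overline{z_t})$, $h_t := h + t\eta_{\client}\psi$. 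Since $\overline{z_0} = z_0 = \nabla h(w_0)$ we get $\widehat{w_0}=w_0$, and the identities $h_{r,K} = h_{r+1,0}$, $\overline{z_{r,K}} = z_{r+1}$ show $\widehat{w_{r,K}}$ is the server primal state $w_{r+1}$; hence $\sum_{r=0}^{R-1}\sum_{k=1}^{K}\widehat{w_{r,k}} = \sum_{t=1}^{KR}\widehat{w_t}$, which is exactly the range produced by telescoping a one-step bound indexed by $t+1$.

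The core is a one-step composite dual-averaging inequality, with the generalized Bregman divergence $\tilde D_{h_t}(w,\overline{z_t})$ (\cref{def:generalized_bregman}) as Lyapunov potential. Expanding $\tilde D_{h_{t+1}}(w,\overline{z_{t+1}})$, the change of base function from $h_t$ to $h_{t+1}=h_t+\eta_{\client}\psi$ produces the composite terms that make $\psi(\widehat{w_{t+1}})$ appear on the left, while the change of dual point by $-\eta_{\client}\tilde g_t$ is controlled via $1$-smoothness of $h_{t+1}^*$ (\cref{strongly:convex:conjugate}, valid since $h_{t+1}$ is $1$-strongly convex), yielding a linear term in $\tilde g_t$ and an $O(\eta_{\client}^2\|\tilde g_t\|_*^2)$ remainder; the optimality of $\widehat{w_{t+1}}$ (\cref{conjugate:strictly:convex}) and \cref{generalized:bregman} keep $\tilde D_{h_{t+1}}(w,\overline{z_{t+1}}) \ge D_h(w,\widehat{w_{t+1}}) \ge 0$. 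To convert the linear term $\langle \tilde g_t, \widehat{w_t} - w\rangle$ into $F(\widehat{w_t}) - F(w)$, I would use convexity of each $F_m$ at the point $w_t^m$ and $L$-smoothness to absorb $\langle\nabla F_m(w_t^m), \widehat{w_t} - w_t^m\rangle$, paying a penalty $\frac{L}{2M}\sum_m \|\widehat{w_t} - w_t^m\|^2$; since $\widehat{w_t} = \nabla h_t^*(\overline{z_t})$ and $w_t^m = \nabla h_t^*(z_t^m)$ are images of the same $1$-smooth map, this is at most $\frac{L}{2M}\sum_m\|\overline{z_t} - z_t^m\|_*^2$, i.e. the discrepancy overhead. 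The outcome is a per-step bound of the form
\[
\eta_{\client}\big(\Phi(\widehat{w_{t+1}}) - \Phi(w)\big) \le \tilde D_{h_t}(w,\overline{z_t}) - \tilde D_{h_{t+1}}(w,\overline{z_{t+1}}) + \eta_{\client}\langle \bar g_t - \tilde g_t,\ \widehat{w_t}-w\rangle + O\!\big(\eta_{\client}^2\|\tilde g_t-\bar g_t\|_*^2\big) + \tfrac{\eta_{\client} L}{M}\sum_m\|\overline{z_t}-z_t^m\|_*^2,
\]
where $\bar g_t := \frac1M\sum_m\nabla F_m(w_t^m)$.

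Then it remains to take expectations and telescope. Since $w_t^m$ and $\widehat{w_t}$ are $\mathcal F_{r,k}$-measurable and $\xi_t^m$ is drawn fresh, $\expt\langle \bar g_t - \tilde g_t, \widehat{w_t}-w\rangle = 0$; using independence of the stochastic gradients across the $M$ clients and \cref{a1}(d), $\expt\|\tilde g_t - \bar g_t\|_*^2 \le \sigma^2/M$. Summing $t=0,\dots,KR-1$, the potentials telescope to $\tilde D_{h_0}(w,\overline{z_0}) - \tilde D_{h_{KR}}(w,\overline{z_{KR}}) \le D_h(w,w_0)$ (the first term equals $D_h(w,w_0)$ because $\overline{z_0}=\nabla h(w_0)$, the last is nonnegative); dividing by $\eta_{\client}KR$ and applying Jensen's inequality to move the average over $\widehat{w_{r,k}}$ inside $\Phi$ yields the stated bound.

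The crux is the one-step inequality: performing a composite dual-averaging descent step with a \emph{time-varying} regularizer $h_t$ in non-Euclidean geometry, keeping the generalized-Bregman bookkeeping honest so that exactly the right quantities stay nonnegative, and --- crucially --- extracting the discrepancy as $\|\overline{z_t}-z_t^m\|_*^2$ rather than a primal deviation, which works only because $\widehat{w_t}$ and $w_t^m$ are images of $\overline{z_t}$ and $z_t^m$ under the common $1$-smooth map $\nabla h_t^*$ (precisely the structural advantage of \feddualavg over \fedmid, whose proximal steps destroy this). A secondary subtlety is the $\sigma^2/M$ bound for the averaged noise under a general dual norm, which relies on the cross-client independence.
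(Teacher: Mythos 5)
Your proposal is correct and takes essentially the same route as the paper: the exact generalized-Bregman potential expansion (the paper's \cref{one:step:analysis:claim:1}), convexity and $L$-smoothness of $F_m$ evaluated at the client points $w_{r,k}^m$ with the deviation converted to the dual discrepancy via $1$-smoothness of $\nabla h_{r,k}^*$ (\cref{one:step:analysis:claim:2}), absorption of $\eta_{\client} L\|\widehat{w_{r,k+1}}-\widehat{w_{r,k}}\|^2$ into the negative Bregman increment using $\eta_{\client}\le\frac{1}{4L}$, telescoping across rounds via $\eta_{\server}=1$, the identity $\tilde D_{h_{0,0}}(w,\nabla h(w_0))=D_h(w,w_0)$, and Jensen. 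The only (cosmetic) deviation is the martingale term: you split it at the $\mathcal F_{r,k}$-measurable point $\widehat{w_{r,k}}$ and apply Young's inequality against the leftover negative term (your prose momentarily mixes this up by speaking of $F(\widehat{w_t})$ and an $O(\eta_{\client}^2\|\tilde g_t\|_*^2)$ remainder, but your displayed one-step bound with $\Phi(\widehat{w_{t+1}})$ and a noise-only remainder is the correct and derivable one), whereas the paper keeps $\widehat{w_{r,k+1}}-w$, reduces to $\widehat{w_{r,k+1}}-\widehat{w_{r,k}}$ in conditional expectation, and uses Cauchy--Schwarz with the quadratic maximum; both yield the same $\frac{\eta_{\client}\sigma^2}{M}$ contribution.
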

\cref{pia:general} decomposes the convergence of \feddualavg into two parts: the first part $\frac{1}{\eta_{\client} KR} D_{h}(w, w_0) + \frac{\eta_{\client} \sigma^2}{2M} + \frac{L}{MKR}$ corresponds to the convergence rate if all clients were synchronized every iteration. The second part $\frac{L}{MKR} \sum_{r=0}^{R-1} \sum_{k=0}^{K-1} \sum_{m=1}^M \expt \| z_{r,k}^m - \overline{z_{r,k}} \|_*^2$ characterizes the stability of the algorithm. 
Note that \cref{pia:general} only assumes the blanket \cref{a1}.
We defer the proof of \cref{pia:general} to \cref{sec:pia:general}. 

The following \cref{bounded_gradient}  bounds the stability term under the additional bounded gradient assumptions.
\begin{lemma}[Stability of \feddualavg under bounded gradient assumption]
  \label{bounded_gradient}
  In the same settings of \cref{thm:1}, it is the case that
  \begin{equation}
    \frac{1}{M}\sum_{m=1}^M \expt \left\| z_{r,k}^m - \overline{z_{r,k}} \right\|_*^2 \leq 4 \eta_{\client}^2 (K-1)^2 G^2.
  \end{equation}
\end{lemma}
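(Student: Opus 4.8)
The plan is to exploit the fact that the dual update of \feddualavg is simply a running sum of stochastic gradients, so the per-client dual state and its average differ only by partial sums of gradients accumulated \emph{within the current round}. First I would observe that, because of the broadcast step $z_{r,0}^m \gets z_r$ (Line 6 of \cref{alg:feddualavg}), all clients share the same dual initialization at the start of round $r$: $z_{r,0}^m = z_r = \overline{z_{r,0}}$ for every $m$. Then, unrolling the client dual update $z_{r,k+1}^m \gets z_{r,k}^m - \eta_{\client} g_{r,k}^m$ gives
\begin{equation}
  z_{r,k}^m = z_r - \eta_{\client} \sum_{j=0}^{k-1} g_{r,j}^m,
  \qquad
  \overline{z_{r,k}} = z_r - \eta_{\client} \cdot \frac{1}{M}\sum_{m'=1}^M \sum_{j=0}^{k-1} g_{r,j}^{m'},
\end{equation}
so that
\begin{equation}
  z_{r,k}^m - \overline{z_{r,k}} = -\eta_{\client} \sum_{j=0}^{k-1}\left( g_{r,j}^m - \frac{1}{M}\sum_{m'=1}^M g_{r,j}^{m'} \right).
\end{equation}

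Next I would bound the dual norm of this difference. Since $k \le K-1$ inside the sum over $k=0,\dots,K-1$ appearing in \cref{pia:general}, there are at most $K-1$ terms; each term is a difference $g_{r,j}^m - \frac{1}{M}\sum_{m'} g_{r,j}^{m'}$, which is an average-subtracted gradient. Using the triangle inequality for $\|\cdot\|_*$ together with the uniform bound $\|\nabla f(w,\xi)\|_* \le G$ from \cref{a2} (note $w_{r,j}^m \in \dom\psi$ since it is the image of $\nabla(h+\tilde\eta_{r,j}\psi)^*$), each term has dual norm at most $G + \frac{1}{M}\sum_{m'} G = 2G$. Hence $\|z_{r,k}^m - \overline{z_{r,k}}\|_* \le \eta_{\client}(k-1)\cdot 2G \le 2\eta_{\client}(K-1)G$ pointwise (a minor care: for $k=0$ the difference is $0$, and for $k\ge 1$ there are exactly $k$ terms, bounded by $k \le K-1$), so squaring gives $\|z_{r,k}^m - \overline{z_{r,k}}\|_*^2 \le 4\eta_{\client}^2(K-1)^2 G^2$ deterministically, and taking expectations and averaging over $m$ preserves the bound. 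Actually, since the bound already holds pointwise, the expectation and the $\frac1M\sum_m$ are trivial.

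There is essentially no hard step here — this is a routine consequence of the dual-space structure. The only thing to be careful about is the bookkeeping of indices: the sum $\sum_{j=0}^{k-1}$ has $k$ terms, and one must confirm that $k$ ranges over values for which $k \le K-1$ (which is exactly the range appearing in \cref{pia:general}), so that the crude bound $k \le K-1$ applies; alternatively one can state the sharper pointwise bound $\|z_{r,k}^m-\overline{z_{r,k}}\|_*^2 \le 4\eta_{\client}^2 k^2 G^2$ and then relax $k \le K-1$ when plugging into the discrepancy-overhead term. If one wanted a variance-sensitive refinement (as in \cref{quad:stability:simplified}) one would instead split $g_{r,j}^m = \nabla F_m(w_{r,j}^m) + (\text{noise})$ and use independence of the noise across $j$; but under the blunt bounded-gradient assumption of this lemma that refinement is unnecessary, and the triangle-inequality argument above suffices.
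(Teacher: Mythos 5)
Your proof is correct and follows essentially the same route as the paper: both exploit the shared dual initialization $z_{r,0}^m = z_r$, unroll the dual updates into a sum of at most $K-1$ gradient terms, and apply the triangle inequality with the uniform bound $G$. The only cosmetic difference is that you bound the deviation from the mean $\|z_{r,k}^m - \overline{z_{r,k}}\|_*$ directly, whereas the paper bounds the pairwise difference $\|z_{r,k}^{m_1}-z_{r,k}^{m_2}\|_*$ and then passes to the average via convexity; your bookkeeping of the index range $k \le K-1$ is, if anything, slightly more careful than the paper's.
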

We defer the proof of \cref{bounded_gradient} to \cref{sec:bounded_gradient}.
With \cref{pia:general,bounded_gradient} at hands the proof of \cref{thm:1} is immediate.
\begin{proof}[Proof of \cref{thm:1}]
  \cref{eq:thm:1:1} follows immediately from \cref{pia:general,bounded_gradient} by putting $w = w^{\star}$ in \cref{pia:general}. 

  Now put 
  \begin{equation}
    \eta_{\client} =  \min \left\{ \frac{1}{4L}, 
    \frac{M^{\frac{1}{2}} B^{\frac{1}{2}}}{\sigma K^{\frac{1}{2}} R^{\frac{1}{2}}}, 
    \frac{B^{\frac{1}{3}}}{L^{\frac{1}{3}} K R^{\frac{1}{3}} G^{\frac{2}{3}}}  \right\},
  \end{equation}
  which yields
  \begin{equation}
    \frac{B}{\eta_{\client} KR} = \max \left\{ \frac{4LB }{KR},  
    \frac{\sigma B^{\frac{1}{2}}}{M^{\frac{1}{2}} K^{\frac{1}{2}} R^{\frac{1}{2}}},  
    \frac{L^{\frac{1}{3}} B^{\frac{2}{3}} G^{\frac{2}{3}}}{R^{\frac{2}{3}}}
       \right\}
    \leq
    \frac{4LB }{KR} 
    + \frac{\sigma B^{\frac{1}{2}}}{M^{\frac{1}{2}} K^{\frac{1}{2}} R^{\frac{1}{2}}}
    +   \frac{L^{\frac{1}{3}} B^{\frac{2}{3}} G^{\frac{2}{3}}}{R^{\frac{2}{3}}},
  \end{equation}
  and
  \begin{equation}
    \frac{\eta_{\client} \sigma^2}{2M} \leq {\frac{M^{\frac{1}{2}} B^{\frac{1}{2}}}{\sigma T^{\frac{1}{2}}}} \cdot \frac{\sigma^2}{2M} 
    =
    \frac{\sigma B^{\frac{1}{2}}}{2 M^{\frac{1}{2}} K^{\frac{1}{2}} R^{\frac{1}{2}} } , 
    \qquad
     4 \eta_{\client}^2 L K^2 G^2
    \leq
    4 \left( \frac{B^{\frac{1}{3}}}{L^{\frac{1}{3}} K R^{\frac{1}{3}} G^{\frac{2}{3}}}   \right)^2 LK^2 G^2
    =
    \frac{4 L^{\frac{1}{3}} B^{\frac{2}{3}} G^{\frac{2}{3}}}{R^{\frac{2}{3}}}.
  \end{equation}
  Summarizing the above three inequalities completes the proof of \cref{thm:1}.
\end{proof}

\subsection{Perturbed Iterate Analysis of \feddualavg: Proof of \cref{pia:general}}
\label{sec:pia:general}
In this subsection, we prove \cref{pia:general}. We start by showing the following \cref{one:step:analysis} regarding the one step improvement of the shadow sequence $\overline{z_{r,k}}$. 
\begin{proposition}[One step analysis of \feddualavg]
  \label{one:step:analysis}
  Under the same assumptions of \cref{pia:general}, the following inequality holds
  \begin{align}
    \expt \left[ \tilde{D}_{h_{r,k+1}}(w, \overline{z_{r,k+1}}) \middle| \mathcal{F}_{r,k} \right] 
    \leq & \tilde{D}_{h_{r,k}} (w, \overline{z_{r,k}}) - \eta_{\client} \expt \left[  \Phi(\widehat{w_{r,k+1}}) - \Phi(w) \middle| \mathcal{F}_{r,k} \right] 
    \\ 
    & 
    + \eta_{\client} L \cdot \expt \left[ \frac{1}{M}  \sum_{m=1}^M \left \| \overline{z_{r,k}} - z_{r,k}^m \right\|_*^2 \middle| \mathcal{F}_{r,k} \right] 
    + \frac{\eta_{\client}^2 \sigma^2}{M},
  \end{align}
  where $\tilde{D}$ is the generalized Bregman divergence defined in \cref{def:generalized_bregman}.
\end{proposition}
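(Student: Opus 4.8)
The plan is to treat the dual shadow sequence as a composite dual-averaging recursion driven by a perturbed gradient, and to run a one-step potential argument with potential $\tilde{D}_{h_{r,k}}(w,\overline{z_{r,k}})$. Write $\bar{g}_{r,k} := \frac{1}{M}\sum_{m=1}^M \nabla f(w_{r,k}^m;\xi_{r,k}^m)$. Because $\eta_{\server}=1$, the server averaging is exact, so \eqref{eq:shadow} reads $\overline{z_{r,k+1}} = \overline{z_{r,k}} - \eta_{\client}\bar{g}_{r,k}$, and by construction $h_{r,k+1} = h_{r,k} + \eta_{\client}\psi$. The computation rests on the identity $\tilde{D}_g(w,z) = g(w) + g^*(z) - \langle z, w\rangle$, a consequence of the Fenchel equality at the maximizer $\nabla g^*(z)$ (\cref{conjugate:strictly:convex}); I would first record that $\overline{z_{r,k}}\in\interior\dom h_{r,k}^*$, which is read off the algorithm together with the Legendre hypothesis on $h$, so that this identity and the differentiability statements apply.

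Subtracting the identity at $(r,k)$ from the one at $(r,k{+}1)$, the non-conjugate terms collapse to $\eta_{\client}\psi(w) + \eta_{\client}\langle \bar{g}_{r,k}, w\rangle$, so the task reduces to bounding $h_{r,k+1}^*(\overline{z_{r,k+1}}) - h_{r,k}^*(\overline{z_{r,k}})$. For the first term I would use \cref{conjugate:strictly:convex} to write it exactly as $\langle\overline{z_{r,k+1}},\widehat{w_{r,k+1}}\rangle - h_{r,k}(\widehat{w_{r,k+1}}) - \eta_{\client}\psi(\widehat{w_{r,k+1}})$; then, since $\overline{z_{r,k}}\in\partial h_{r,k}(\widehat{w_{r,k}})$ and $h_{r,k}$ is $1$-strongly convex, bound $-h_{r,k}(\widehat{w_{r,k+1}}) \le -h_{r,k}(\widehat{w_{r,k}}) - \langle\overline{z_{r,k}},\widehat{w_{r,k+1}}-\widehat{w_{r,k}}\rangle - \tfrac12\|\widehat{w_{r,k+1}}-\widehat{w_{r,k}}\|^2$. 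The second term equals $\langle\overline{z_{r,k}},\widehat{w_{r,k}}\rangle - h_{r,k}(\widehat{w_{r,k}})$ (Fenchel equality again), and the $h_{r,k}(\widehat{w_{r,k}})$ contributions cancel, leaving
\[
  \tilde{D}_{h_{r,k+1}}(w,\overline{z_{r,k+1}}) - \tilde{D}_{h_{r,k}}(w,\overline{z_{r,k}}) \le \eta_{\client}\big(\psi(w)-\psi(\widehat{w_{r,k+1}})\big) + \eta_{\client}\langle\bar{g}_{r,k},\, w - \widehat{w_{r,k+1}}\rangle - \tfrac12\big\|\widehat{w_{r,k+1}}-\widehat{w_{r,k}}\big\|^2 .
\]

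It remains to process the linear term, splitting $\bar{g}_{r,k}$ into its conditional mean $\frac1M\sum_m\nabla F_m(w_{r,k}^m)$ and a mean-zero noise, and splitting $w-\widehat{w_{r,k+1}} = (w-\widehat{w_{r,k}}) + (\widehat{w_{r,k}}-\widehat{w_{r,k+1}})$. On the mean part, convexity of each $F_m$ (to move from $w_{r,k}^m$ to $w$) and $L$-smoothness of each $F_m$ (to move from $w_{r,k}^m$ to $\widehat{w_{r,k+1}}$) produce $\eta_{\client}(F(w)-F(\widehat{w_{r,k+1}}))$ plus an error $\tfrac{\eta_{\client}L}{2M}\sum_m\|w_{r,k}^m-\widehat{w_{r,k+1}}\|^2$; combined with the $\psi$ difference this is exactly $-\eta_{\client}(\Phi(\widehat{w_{r,k+1}})-\Phi(w))$. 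On the noise part, the $\mathcal{F}_{r,k}$-measurable piece $\langle\mathrm{noise},\, w-\widehat{w_{r,k}}\rangle$ has zero conditional mean, while the cross term $\langle\mathrm{noise},\, \widehat{w_{r,k}}-\widehat{w_{r,k+1}}\rangle$ is absorbed via Young's inequality into one quarter of the $\tfrac12\|\widehat{w_{r,k+1}}-\widehat{w_{r,k}}\|^2$ reserve, leaving $\eta_{\client}^2\|\mathrm{noise}\|_*^2$, whose conditional expectation is at most $\eta_{\client}^2\sigma^2/M$ by \cref{a1}(d) and independence of the stochastic gradients across the $M$ clients. Finally $\|w_{r,k}^m-\widehat{w_{r,k+1}}\|^2 \le 2\|w_{r,k}^m-\widehat{w_{r,k}}\|^2 + 2\|\widehat{w_{r,k}}-\widehat{w_{r,k+1}}\|^2$; the $1$-Lipschitzness of $\nabla h_{r,k}^*$ (\cref{strongly:convex:conjugate}, since $h_{r,k}$ is $1$-strongly convex) gives $\|w_{r,k}^m-\widehat{w_{r,k}}\| \le \|z_{r,k}^m-\overline{z_{r,k}}\|_*$, which reproduces the discrepancy term; and $\eta_{\client}L\|\widehat{w_{r,k}}-\widehat{w_{r,k+1}}\|^2$ is absorbed into the remaining quarter of the reserve using $\eta_{\client}\le\frac{1}{4L}$. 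Taking conditional expectation collects all the pieces into the claimed inequality.

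The main obstacle is the accounting of the single $-\tfrac12\|\widehat{w_{r,k+1}}-\widehat{w_{r,k}}\|^2$ term extracted from strong convexity, which must simultaneously dominate the stochastic cross term and the $L$-smoothness displacement error; it is precisely this double duty that pins down the step-size condition $\eta_{\client}\le\frac{1}{4L}$. A secondary technical point is verifying $\overline{z_{r,k}}\in\interior\dom h_{r,k}^*$ so that the generalized-Bregman identities are legitimate, and carrying out the across-client variance aggregation with a possibly non-Euclidean dual norm.
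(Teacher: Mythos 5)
Your proposal is correct and takes essentially the same route as the paper's proof: your conjugate-form expansion of $\tilde{D}$ together with $1$-strong convexity of $h_{r,k}$ at the subgradient $\overline{z_{r,k}}$ reproduces \cref{one:step:analysis:claim:1} plus the $\frac{1}{2}\|\widehat{w_{r,k+1}}-\widehat{w_{r,k}}\|^2$ reserve, and your convexity/smoothness treatment of the averaged gradient, the $1$-Lipschitzness of $\nabla h_{r,k}^*$, and the absorption of both the noise cross term and $\eta_{\client}L\|\widehat{w_{r,k+1}}-\widehat{w_{r,k}}\|^2$ under $\eta_{\client}\le\frac{1}{4L}$ mirror \cref{one:step:analysis:claim:2} and the paper's conditional-expectation step, yielding the same $\eta_{\client}^2\sigma^2/M$ constant. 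The only cosmetic difference is that you apply Young's inequality pointwise to the noise term where the paper uses a quadratic-maximum argument after taking conditional expectations.
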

The proof of \cref{one:step:analysis} relies on the following two claims regarding the deterministic analysis of \feddualavg. We defer the proof of \cref{one:step:analysis:claim:1,one:step:analysis:claim:2} to \cref{sec:one:step:analysis:claim:1,sec:one:step:analysis:claim:2}, respectively.
\begin{claim}
  \label{one:step:analysis:claim:1}
  Under the same assumptions of \cref{pia:general}, the following inequality holds
  \begin{align}
    & \tilde{D}_{h_{r,k+1}}(w, \overline{z_{r,k+1}}) 
    \\
    = &
    \tilde{D}_{h_{r,k}}(w, \overline{z_{r,k}}) - \tilde{D}_{h_{r,k}}(\widehat{w_{r,k+1}}, \overline{z_{r,k}}) - \eta_{\client} (\psi(\widehat{w_{r,k+1}}) - \psi(w))) - \eta_{\client} \left\langle \frac{1}{M} \sum_{m=1}^{M} \nabla f(w_{r,k}^m; \xi_{r,k}^m), \widehat{w_{r,k+1}} - w  \right\rangle.
    \label{eq:one:step:analysis:claim:1}
  \end{align}
\end{claim}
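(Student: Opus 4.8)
The plan is to prove \cref{one:step:analysis:claim:1} by a direct algebraic expansion of the three generalized Bregman divergences, so nothing beyond the definitions recalled in \cref{sec:background} and the dual shadow recursion \eqref{eq:shadow} is needed. Throughout I abbreviate $\bar g_{r,k} := \frac{1}{M}\sum_{m=1}^M \nabla f(w_{r,k}^m;\xi_{r,k}^m)$, so that \eqref{eq:shadow} reads $\overline{z_{r,k+1}} = \overline{z_{r,k}} - \eta_{\client}\bar g_{r,k}$; this identity is a purely deterministic consequence of the client dual update $z_{r,k+1}^m \gets z_{r,k}^m - \eta_{\client}g_{r,k}^m$ together with the common initialization $z_{r,0}^m = z_r$, and it holds for every $k \in \{0,\dots,K-1\}$ without crossing a round boundary. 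I will also repeatedly use the elementary fact $h_{r,k+1} = h_{r,k} + \eta_{\client}\psi$, which is immediate from $h_{r,k}(w) = h(w) + (rK+k)\eta_{\client}\psi(w)$.

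First I would expand the left-hand side using \cref{def:generalized_bregman} with base function $h_{r,k+1}$ and dual point $\overline{z_{r,k+1}}$; since $\nabla h_{r,k+1}^*(\overline{z_{r,k+1}}) = \widehat{w_{r,k+1}}$ by definition, this gives
\[
\tilde{D}_{h_{r,k+1}}(w,\overline{z_{r,k+1}})
= h_{r,k+1}(w) - h_{r,k+1}(\widehat{w_{r,k+1}}) - \langle \overline{z_{r,k+1}},\, w - \widehat{w_{r,k+1}}\rangle .
\]
Next, substitute $h_{r,k+1} = h_{r,k} + \eta_{\client}\psi$ into the two function evaluations and $\overline{z_{r,k+1}} = \overline{z_{r,k}} - \eta_{\client}\bar g_{r,k}$ into the linear term. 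The two $\psi$-pieces collect into exactly $-\eta_{\client}\bigl(\psi(\widehat{w_{r,k+1}}) - \psi(w)\bigr)$, and the $\bar g_{r,k}$-piece collects into $\eta_{\client}\langle \bar g_{r,k}, w - \widehat{w_{r,k+1}}\rangle = -\eta_{\client}\langle \bar g_{r,k}, \widehat{w_{r,k+1}} - w\rangle$; these are precisely the last two terms on the right-hand side of \eqref{eq:one:step:analysis:claim:1}. What remains is the quantity $h_{r,k}(w) - h_{r,k}(\widehat{w_{r,k+1}}) - \langle \overline{z_{r,k}},\, w - \widehat{w_{r,k+1}}\rangle$.

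Finally I would recognize this leftover as a difference of two generalized Bregman divergences sharing the \emph{same} base function $h_{r,k}$ and dual point $\overline{z_{r,k}}$: expanding $\tilde{D}_{h_{r,k}}(w,\overline{z_{r,k}})$ and $\tilde{D}_{h_{r,k}}(\widehat{w_{r,k+1}},\overline{z_{r,k}})$ via \cref{def:generalized_bregman} and using $\nabla h_{r,k}^*(\overline{z_{r,k}}) = \widehat{w_{r,k}}$, the $h_{r,k}(\widehat{w_{r,k}})$ terms and the $\langle\overline{z_{r,k}},\widehat{w_{r,k}}\rangle$ terms cancel in the subtraction, leaving exactly $h_{r,k}(w) - h_{r,k}(\widehat{w_{r,k+1}}) - \langle \overline{z_{r,k}},\, w - \widehat{w_{r,k+1}}\rangle$. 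Chaining the three observations yields \eqref{eq:one:step:analysis:claim:1} as an identity.

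I expect the only genuine subtlety to be bookkeeping: to invoke \cref{def:generalized_bregman} for each of the three divergences one needs $w,\widehat{w_{r,k+1}}\in\dom h_{r,k+1}=\dom h_{r,k}=\dom\psi$ (which holds since $w\in\dom\psi$ by hypothesis and $\widehat{w_{r,k+1}}$ is the image of $\nabla h_{r,k+1}^*$) and $\overline{z_{r,k}},\overline{z_{r,k+1}}$ lying in the interiors of $\dom h_{r,k}^*,\dom h_{r,k+1}^*$ respectively, so that the maps $\nabla h_{r,k}^*,\nabla h_{r,k+1}^*$ are well defined there by \cref{conjugate:strictly:convex}; this is guaranteed by the standing regularity of the iterates (each $\overline{z_{r,k}}$ is reached from $z_0=\nabla h(w_0)$ by finitely many finite dual increments, and $h$, hence each $h_{r,k}$, is closed and strictly convex). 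Modulo these one- or two-sentence regularity checks, the claim involves no inequalities or estimates at all — it is a pure identity — so there is essentially no hard step here; the content is entirely in having set up $h_{r,k}$ and the shadow sequence so that this decomposition goes through cleanly.
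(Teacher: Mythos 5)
Your proposal is correct and follows essentially the same route as the paper's proof: expand $\tilde{D}_{h_{r,k+1}}(w,\overline{z_{r,k+1}})$ via the definition of the generalized Bregman divergence, substitute the averaged dual recursion $\overline{z_{r,k+1}} = \overline{z_{r,k}} - \eta_{\client}\cdot\frac{1}{M}\sum_m \nabla f(w_{r,k}^m;\xi_{r,k}^m)$ together with $h_{r,k+1}=h_{r,k}+\eta_{\client}\psi$, and regroup the leftover terms as $\tilde{D}_{h_{r,k}}(w,\overline{z_{r,k}})-\tilde{D}_{h_{r,k}}(\widehat{w_{r,k+1}},\overline{z_{r,k}})$. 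The only difference is cosmetic (you identify the leftover after collecting the $\psi$ and gradient pieces, while the paper regroups first), plus your added remarks on domain and interior regularity, which the paper leaves implicit.
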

\begin{claim}
  \label{one:step:analysis:claim:2}
  Under the same assumptions of \cref{pia:general}, it is the case that 
  \begin{align}
     & F(\widehat{w_{r,k+1}}) - F(w) \leq \left\langle \frac{1}{M} \sum_{m=1}^M \nabla f(w_{r,k}^m; \xi_{r,k}^m), \widehat{w_{r,k+1}} - w  \right\rangle  
     \\
        & \qquad \qquad + \left\langle \frac{1}{M} \sum_{m=1}^M \left( \nabla F_m(w_{r,k}^m) - \nabla f(w_{r,k}^m; \xi_{r,k}^m) \right), \widehat{w_{r,k+1}} - w  \right\rangle  
        + L \| \widehat{w_{r,k+1}} - \widehat{w_{r,k}} \|^2  
         +  \frac{L}{M} \sum_{m=1}^M \left \| \overline{z_{r,k}} - z_{r,k}^m \right\|_*^2.
         \label{eq:one:step:analysis:claim:2}
  \end{align}
\end{claim}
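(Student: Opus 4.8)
The plan is to establish \cref{one:step:analysis:claim:2} by a purely deterministic argument, combining convexity and $L$-smoothness of each $F_m$ with a conversion of a primal discrepancy into the dual discrepancy $\|\overline{z_{r,k}} - z_{r,k}^m\|_*$. First I would observe that since $f(\cdot;\xi)$ is convex and $L$-smooth on $\dom\psi$ for every fixed $\xi$ (\cref{a1}(c)), taking $\expt_{\xi^m\sim\mathcal{D}_m}$ shows each $F_m$ is itself convex and $L$-smooth on $\dom\psi$ (with $\nabla F_m(w)=\expt_{\xi^m}\nabla f(w;\xi^m)$). Both $w_{r,k}^m$ and $\widehat{w_{r,k+1}}$ lie in $\dom\psi$ because they are images under $\nabla(h+c\psi)^*$ for suitable $c\ge 0$, so these two properties may be applied at the local iterate $w_{r,k}^m$. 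Convexity of $F_m$ at $w_{r,k}^m$ evaluated at $w$ gives $-F_m(w)\le -F_m(w_{r,k}^m)-\langle\nabla F_m(w_{r,k}^m),w-w_{r,k}^m\rangle$; $L$-smoothness of $F_m$ at $w_{r,k}^m$ evaluated at $\widehat{w_{r,k+1}}$ gives $F_m(\widehat{w_{r,k+1}})\le F_m(w_{r,k}^m)+\langle\nabla F_m(w_{r,k}^m),\widehat{w_{r,k+1}}-w_{r,k}^m\rangle+\tfrac{L}{2}\|\widehat{w_{r,k+1}}-w_{r,k}^m\|^2$. Adding the two, averaging over $m\in[M]$, and splitting $\nabla F_m(w_{r,k}^m)=\nabla f(w_{r,k}^m;\xi_{r,k}^m)+\bigl(\nabla F_m(w_{r,k}^m)-\nabla f(w_{r,k}^m;\xi_{r,k}^m)\bigr)$ produces the first two inner-product terms on the right-hand side of \cref{eq:one:step:analysis:claim:2}, together with the residual $\tfrac{L}{2M}\sum_{m=1}^M\|\widehat{w_{r,k+1}}-w_{r,k}^m\|^2$.

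It then remains to bound this residual by $L\|\widehat{w_{r,k+1}}-\widehat{w_{r,k}}\|^2+\tfrac{L}{M}\sum_{m=1}^M\|\overline{z_{r,k}}-z_{r,k}^m\|_*^2$. I would apply $\|a+b\|^2\le 2\|a\|^2+2\|b\|^2$ with $a=\widehat{w_{r,k+1}}-\widehat{w_{r,k}}$ and $b=\widehat{w_{r,k}}-w_{r,k}^m$; since the first part has no $m$-dependence, averaging reduces the task to showing $\tfrac{1}{M}\sum_{m=1}^M\|\widehat{w_{r,k}}-w_{r,k}^m\|^2\le\tfrac{1}{M}\sum_{m=1}^M\|\overline{z_{r,k}}-z_{r,k}^m\|_*^2$. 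For this, note that with $\eta_{\server}=1$ one has $\tilde\eta_{r,k}=(rK+k)\eta_{\client}$, hence $h+\tilde\eta_{r,k}\psi=h_{r,k}$, so from line 9 of \cref{alg:feddualavg} and the definition of $\widehat{w_{r,k}}$ we get $w_{r,k}^m=\nabla h_{r,k}^*(z_{r,k}^m)$ and $\widehat{w_{r,k}}=\nabla h_{r,k}^*(\overline{z_{r,k}})$. Because $h$ is $1$-strongly convex w.r.t. $\|\cdot\|$ (\cref{a1}(b)) and $\psi$ is convex, $h_{r,k}$ is $1$-strongly convex, so by \cref{strongly:convex:conjugate} its conjugate $h_{r,k}^*$ is $1$-smooth w.r.t. $\|\cdot\|_*$, yielding $\|\widehat{w_{r,k}}-w_{r,k}^m\|\le\|\overline{z_{r,k}}-z_{r,k}^m\|_*$ for each $m$; squaring and averaging then closes the argument.

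The hard part here is really only bookkeeping: making sure every point at which $L$-smoothness of $f$ is invoked lies in $\dom\psi$ (immediate from the structure of the proximal maps $\nabla(h+c\psi)^*$), and correctly identifying the algorithm's coefficient $\tilde\eta_{r,k}$ with the shifted distance-generating function $h_{r,k}$ in the regime $\eta_{\server}=1$. Once these identifications are in place, the proof reduces to the two elementary inequalities above, so I do not anticipate a genuine obstacle in \cref{one:step:analysis:claim:2} itself — the substantive work of the one-step analysis lies in \cref{one:step:analysis:claim:1} (the exact generalized-Bregman identity) and in later bounding the dual discrepancy, both handled separately.
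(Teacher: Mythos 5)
Your proposal is correct and follows essentially the same route as the paper's own proof: convexity plus $L$-smoothness of $F_m$ at the local iterate $w_{r,k}^m$, the split $\|\widehat{w_{r,k+1}}-w_{r,k}^m\|^2\le 2\|\widehat{w_{r,k+1}}-\widehat{w_{r,k}}\|^2+2\|\widehat{w_{r,k}}-w_{r,k}^m\|^2$, and the conversion to the dual discrepancy via $1$-smoothness of $h_{r,k}^*$ (\cref{strongly:convex:conjugate}). The constants and domain bookkeeping also match, so no gap remains.
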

With \cref{one:step:analysis:claim:1,one:step:analysis:claim:2} at hand we are ready to prove the one step analysis \cref{one:step:analysis}.
\begin{proof}[Proof of \cref{one:step:analysis}]
  Applying \cref{one:step:analysis:claim:1,one:step:analysis:claim:2} yields 
  (summating \cref{eq:one:step:analysis:claim:1} with $\eta_c$ times of \cref{eq:one:step:analysis:claim:2}),
  \begin{align}
    \tilde{D}_{h_{r,k+1}}(w, \overline{z_{r,k+1}})
    \leq &
    \tilde{D}_{h_{r,k}}(w, \overline{z_{r,k}}) - \tilde{D}_{h_{r,k}}(\widehat{w_{r,k+1}}, \overline{z_{r,k}})
    + \eta_{\client} L \| \widehat{w_{r,k+1}} - \widehat{w_{r,k}} \|^2 
    - \eta_{\client} \left( \Phi(\widehat{w_{r,k+1}}) - \Phi(w) \right) 
    \\
    & + \eta_{\client}  \left\langle \frac{1}{M} \sum_{m=1}^M \left( \nabla F_m(w_{r,k}^m) - \nabla f(w_{r,k}^m; \xi_{r,k}^m) \right), \widehat{w_{r,k+1}} - w  \right\rangle \\
    & + \eta_{\client} L \cdot \frac{1}{M} \sum_{m=1}^M \left \| \overline{z_{r,k}} - z_{r,k}^m \right\|_*^2.
      \label{eq:proof:one:step:analysis:1}
  \end{align}
  Note that 
  \begin{equation}
    \tilde{D}_{h_{r,k}}(\widehat{w_{r,k+1}}, \overline{z_{r,k}}) \geq D_h(\widehat{w_{r,k+1}}, \nabla h_{r,k}^* ( \overline{z_{r,k}})) = D_h(\widehat{w_{r,k+1}},  \widehat{w_{r,k}}) \geq \frac{1}{2}\| \widehat{w_{r,k+1}} -  \widehat{w_{r,k}}\|^2,
  \end{equation}
  and 
  \begin{equation}
    \eta_{\client} L \| \widehat{w_{r,k+1}} - \widehat{w_{r,k}} \|^2 
    \leq
    \frac{1}{4} \| \widehat{w_{r,k+1}} - \widehat{w_{r,k}} \|^2,
  \end{equation}
  since $\eta_{\client} \leq \frac{1}{4L}$ by assumption. Therefore,
  \begin{equation}
    - \tilde{D}_{h_{r,k}}(\widehat{w_{r,k+1}}, \overline{z_{r,k}}) 
    +
    \eta_{\client} L \| \widehat{w_{r,k+1}} - \widehat{w_{r,k}} \|^2 
    \leq
    -\frac{1}{4}  \| \widehat{w_{r,k+1}} - \widehat{w_{r,k}} \|^2.
    \label{eq:proof:one:step:analysis:2}
  \end{equation}
  Plugging \cref{eq:proof:one:step:analysis:2} to \cref{eq:proof:one:step:analysis:1} gives
  \begin{align}
    \tilde{D}_{h_{r,k+1}}(w, \overline{z_{r,k+1}})
    \leq &
    \tilde{D}_{h_{r,k}}(w, \overline{z_{r,k}}) - \frac{1}{4} \| \widehat{w_{r,k+1}} - \widehat{w_{r,k}} \|^2 
    - \eta_{\client} \left( \Phi(\widehat{w_{r,k+1}}) - \Phi(w) \right) 
    \\
    & + \eta_{\client}  \left\langle \frac{1}{M} \sum_{m=1}^M \left( \nabla F_m(w_{r,k}^m) - \nabla f(w_{r,k}^m; \xi_{r,k}^m) \right), \widehat{w_{r,k+1}} - w  \right\rangle \\
    & + \eta_{\client} L \cdot \frac{1}{M} \sum_{m=1}^M \left \| \overline{z_{r,k}} - z_{r,k}^m \right\|_*^2.
      \label{eq:proof:one:step:analysis:3}
  \end{align}

  Now we take conditional expectation. Note that
  \begin{align}
        &   \expt \left[ \left \langle \frac{1}{M} \sum_{m=1}^M \nabla F_m(w_{r,k}^m) - \nabla f(w_{r,k}^m; \xi_{r,k}^m), \widehat{w_{r,k+1}} - w  \right\rangle  \middle| \mathcal{F}_{r,k} \right]
        \\
      = &  \expt \left[ \left \langle \frac{1}{M}  \sum_{m=1}^M \nabla F_m(w_{r,k}^m) - \nabla f(w_{r,k}^m; \xi_{r,k}^m), \widehat{w_{r,k+1}} - \widehat{w_{r,k}}  \right\rangle  \middle| \mathcal{F}_{r,k} \right]
      \tag{since $\expt_{\xi_{r,k}^m \sim \mathcal{D}_m} [\nabla f(w_{r,k}^m; \xi_{r,k}^m)] = \nabla F_m(w_{r,k}^m)$ }
      \\
      \leq & \expt \left[ \left\| \frac{1}{M}  \sum_{m=1}^M \nabla F_m(w_{r,k}^m) - \nabla f(w_{r,k}^m; \xi_{r,k}^m) \right\|_* \middle| \mathcal{F}_{r,k} \right]
      \cdot \expt \left[  \left\| \widehat{w_{r,k+1}} - \widehat{w_{r,k}}   \right\| \middle| \mathcal{F}_{r,k} \right]
      \tag{by definition of dual norm $\|\cdot\|_*$}
      \\
      \leq & \frac{\sigma}{\sqrt{M}}  \expt \left[ \left\| \widehat{w_{r,k+1}} - \widehat{w_{r,k}}   \right\| \middle| \mathcal{F}_{r,k} \right].
      \tag{by bounded variance assumption and independence}
  \end{align}
  Plugging the above inequality to \cref{eq:proof:one:step:analysis:3} gives
  \begin{align}
      & \expt \left[ \tilde{D}_{h_{r,k+1}}(w, \overline{z_{r,k+1}}) \middle| \mathcal{F}_{r,k} \right] 
     \\
     \leq & 
     \tilde{D}_{h_{r,k}} (w, \overline{z_{r,k}}) - \eta_{\client} \expt \left[  \Phi(\widehat{w_{r,k+1}}) - \Phi(w) \middle| \mathcal{F}_{r,k} \right] 
     + \eta_{\client} L \cdot \expt \left[ \frac{1}{M}  \sum_{m=1}^M \left \| \overline{z_{r,k}} - z_{r,k}^m \right\|_*^2 \middle| \mathcal{F}_{r,k} \right]
     \\ 
     & + \frac{\eta_{\client} \sigma}{\sqrt{M}}   \expt \left[ \left\| \widehat{w_{r,k+1}} - \widehat{w_{r,k}}   \right\| \middle| \mathcal{F}_{r,k} \right] - \frac{1}{4} \expt \left[ \| \widehat{w_{r,k+1}} - \widehat{w_{r,k}} \|^2 \middle| \mathcal{F}_{r,k} \right]
     \\
     \leq & \tilde{D}_{h_{r,k}} (w, \overline{z_{r,k}}) - \eta_{\client} \expt \left[  \Phi(\widehat{w_{r,k+1}}) - \Phi(w) \middle| \mathcal{F}_{r,k} \right] 
     + \eta_{\client} L \cdot \expt \left[ \frac{1}{M}  \sum_{m=1}^M  \left \| \overline{z_{r,k}} - z_{r,k}^m \right\|_*^2 \middle| \mathcal{F}_{r,k} \right] 
     + \frac{\eta_{\client}^2 \sigma^2}{M},
     \tag{by quadratic maximum}
  \end{align}
  completing the proof of \cref{one:step:analysis}.
\end{proof}

The \cref{pia:general} then follows by telescoping the one step analysis \cref{one:step:analysis}.
\begin{proof}[Proof of \cref{pia:general}]
  Let us first telescope \cref{one:step:analysis} within the same round $r$, from $k=0$ to $K$, which gives
  \begin{align}
    \expt \left[ \tilde{D}_{h_{r,K}}(w, \overline{z_{r,K}}) \middle| \mathcal{F}_{r,0} \right] 
    \leq & \tilde{D}_{h_{r,0}} (w, \overline{z_{r,0}}) - \eta_{\client} \sum_{k=1}^{K} \expt \left[  \Phi(\widehat{w_{r,k}}) - \Phi(w) \middle| \mathcal{F}_{r,0} \right] 
    \\
    &   + \eta_{\client} L \cdot \expt \left[ \frac{1}{M} \sum_{k=0}^{K-1} \sum_{m=1}^M  \left \| \overline{z_{r,k}} - z_{r,k}^m \right\|_*^2\middle| \mathcal{F}_{r,0} \right] 
    + \frac{\eta_{\client}^2 K \sigma^2}{M}.
  \end{align}
  Since server learning rate $\eta_{\server}=1$ we have $\overline{z_{r,K}} = \overline{z_{r+1,0}}$. Therefore we can telescope the round from $r = 0$ to $R$, which gives
  \begin{align}
    \expt \left[ \tilde{D}_{h_{R,0}}(w, \overline{z_{R,0}}) \right] 
    \leq &
    \tilde{D}_{h_{0,0}}(w,  \overline{z_{0,0}}) - \eta_{\client} \sum_{r=0}^{R-1} \sum_{k=1}^K \expt \left[  \Phi(\widehat{w_{r,k}}) - \Phi(w) \right] \\
    & + \eta_c L \cdot \expt \left[ \frac{1}{M}\sum_{r=0}^{R-1}  \sum_{k=0}^{K-1} \sum_{m=1}^M  \left \| \overline{z_{r,k}} - z_{r,k}^m \right\|_*^2 \right] 
    + \frac{\eta_{\client}^2 K  R \sigma^2}{M}.
  \end{align}
  Dividing both sides by $\eta_{\client} \cdot KR$ and rearranging 
  \begin{align}
    \frac{1}{KR }\sum_{r=0}^{R-1} \sum_{k=1}^K \expt \left[  \Phi(\widehat{w_{r,k}}) - \Phi(w) \right] 
    \leq &
    \frac{1}{\eta_{\client} KR }\left( \tilde{D}_{h_{0,0}}(w,  \overline{z_{0,0}}) - \expt \left[ \tilde{D}_{h_{R,0}}(w, \overline{z_{R,0}}) \right]  \right)
    \\
    & + L \cdot \expt \left[ \frac{1}{M KR }\sum_{r=0}^{R-1}  \sum_{k=0}^{K-1} \sum_{m=1}^M  \left \| \overline{z_{r,k}} - z_{r,k}^m \right\|_*^2 \right] 
    + \frac{\eta_{\client}  \sigma^2}{M}.
  \end{align}
  Applying Jensen's inequality on the LHS and dropping the negative term on the RHS yield
  \begin{equation}
    \expt \left[  \Phi \left(  \frac{1}{KR} \sum_{r=0}^{R-1} \sum_{k=1}^K \widehat{w_{r,k}} \right) - \Phi(w) \right] 
    \leq
    \frac{1}{\eta_{\client} KR } \tilde{D}_{h_{0,0}}(w,  \overline{z_{0,0}}) 
    + \frac{L}{M KR }   \left[ \sum_{r=0}^{R-1}  \sum_{k=0}^{K-1} \sum_{m=1}^M   \expt\left \| \overline{z_{r,k}} - z_{r,k}^m \right\|_*^2 \right] 
    + \frac{\eta_{\client}  \sigma^2}{M}.
    \label{eq:pia:1}
  \end{equation}

  Since $\overline{z_{0,0}} = \nabla h (w_0)$ and $w_0 \in \dom \psi$, we have $\nabla h_{0,0}^*(\nabla h(w_0)) = w_0$ by \cref{prop:legendre} since $h$ is assumed to be of Legendre type. Consequently
  \begin{align}
    \tilde{D}_{h_{0,0}}(w, \overline{z_0}) & = h(w) - h(\nabla h_{0,0}^*(\nabla h(w_0))) - \left\langle z_0, w - \nabla h_{0,0}^*(\nabla h(w_0)) \right\rangle
    \\
    & = h(w) - h(w_0) - \left\langle \nabla h(w_0), w - w_0 \right\rangle = D_h(w,w_0).
    \label{eq:pia:2}
  \end{align}
  Plugging \cref{eq:pia:2} back to  \cref{eq:pia:1} completes the proof of \cref{pia:general}.
\end{proof} 

\subsubsection{Deferred Proof of  \cref{one:step:analysis:claim:1}}
\label{sec:one:step:analysis:claim:1}
\begin{proof}[Proof of \cref{one:step:analysis:claim:1}]
  By definition of \feddualavg procedure, for all $m \in [M]$, $k \in \{0, 1, \ldots, K-1\}$, we have
  \begin{equation}
    z_{r,k+1}^m = z_{r,k}^m - \eta_{\client} \nabla f(w_{r,k}^m; \xi_{r,k}^m).
  \end{equation}
  Taking average over $m \in [M]$ gives (recall the overline denotes the average over clients)
  \begin{equation}
    \overline{z_{r,k+1}} = \overline{z_{r,k}} - \eta_{\client} \cdot \frac{1}{M} \sum_{m=1}^{M} \nabla f(w_{r,k}^m; \xi_{r,k}^m).
    \label{eq:one:step:analysis:1}
  \end{equation}
  
  Now we study generalized Bregman divergence $\tilde{D}_{h,k+1}(w, \overline{z_{r,k+1}})$ for any arbitrary pre-fixed $w \in \dom h_{r,k}$ 
  \begin{align}
        & \tilde{D}_{h_{r,k+1}}(w, \overline{z_{r,k+1}})
      \\
      = &
      h_{r,k+1}(w) - h_{r,k+1} \left( \nabla h_{r,k+1}^*(\overline{z_{r,k+1}}) \right) - \left\langle \overline{z_{r,k+1}}, w - \nabla h_{r,k+1}^*(\overline{z_{r,k+1}}) \right\rangle \tag{By definition of $\tilde{D}$}
      \\
      = &
      h_{r,k+1}(w) - h_{r,k+1} \left( \widehat{w_{r,k+1}} \right) - \left\langle \overline{z_{r,k+1}}, w - \widehat{w_{r,k+1}} \right\rangle
      \tag{By definition of $\widehat{w_{r,k+1}}$}
      \\
      = &
      h_{r,k+1}(w) - h_{r,k+1} \left( \widehat{w_{r,k+1}} \right) - \left\langle \overline{z_{r,k}} - \eta_{\client} \cdot \frac{1}{M} \sum_{m=1}^{M} \nabla f(w_{r,k}^m; \xi_{r,k}^m), w - \widehat{w_{r,k+1}} \right\rangle
      \tag{By \cref{eq:one:step:analysis:1}}
      \\
      = & (h_{r,k}(w) + \eta_{\client} \psi(w)) - (h_{r,k}(\widehat{w_{r,k+1}}) + \eta_{\client} \psi(\widehat{w_{r,k+1}}))  - \left\langle \overline{z_{r,k}} - \eta_{\client} \cdot \frac{1}{M} \sum_{m=1}^{M} \nabla f(w_{r,k}^m; \xi_{r,k}^m), w - \widehat{w_{r,k+1}} \right\rangle
      \tag{Since $h_{r,k+1} = h_{r,k} + \eta_{\client}\psi$ by definition of $h_{r,k+1}$} 
      \\
      = & \left[ h_{r,k}(w) - h_{r,k}(\widehat{w_{r,k}}) - \left\langle \overline{z_{r,k}}, w - \widehat{w_{r,k}} \right\rangle \right] - 
      \left[ h_{r,k}(\widehat{w_{r,k+1}}) - h_{r,k}(\widehat{w_{r,k}}) - \left\langle \overline{z_{r,k}}, \widehat{w_{r,k+1}} - \widehat{w_{r,k}}  \right\rangle \right]
      \\
      & - \eta_{\client} \left( \psi(\widehat{w_{r,k+1}}) - \psi(w) \right) - \eta_{\client} \left\langle \frac{1}{M} \sum_{m=1}^{M} \nabla f(w_{r,k}^m; \xi_{r,k}^m), \widehat{w_{r,k+1}} - w  \right\rangle
      \tag{Rearranging}
      \\
      = & \tilde{D}_{h_{r,k}}(w, \overline{z_{r,k}}) - \tilde{D}_{h_{r,k}}(\widehat{w_{r,k+1}}, \overline{z_{r,k}}) - \eta_{\client} (\psi(\widehat{w_{r,k+1}}) - \psi(w))) - \eta_{\client} \left\langle \frac{1}{M} \sum_{m=1}^{M} \nabla f(w_{r,k}^m; \xi_{r,k}^m), \widehat{w_{r,k+1}} - w  \right\rangle,
      \label{eq:feddualavg:2}
  \end{align} 
  where the last equality is by definition of $\tilde{D}$.
  \end{proof}

\subsubsection{Deferred Proof of  \cref{one:step:analysis:claim:2}}
\label{sec:one:step:analysis:claim:2}
\begin{proof}[Proof of \cref{one:step:analysis:claim:2}]
  By smoothness and convexity of $F_m$, we know
    \begin{align}
      F_m(\widehat{w_{r,k+1}}) & \leq F_m(w_{r,k}^m) + \left\langle \nabla F_m(w_{r,k}^m),  \widehat{w_{r,k+1}} - w_{r,k}^m \right\rangle + \frac{L}{2} \| \widehat{w_{r,k+1}} - w_{r,k}^m \|^2 
      \tag{smoothness}
      \\
      & \leq F_m(w) + \left\langle \nabla F_m(w_{r,k}^m),  \widehat{w_{r,k+1}} - w \right\rangle + \frac{L}{2} \| \widehat{w_{r,k+1}} - w_{r,k}^m \|^2.
      \tag{convexity}
  \end{align}
  Taking summation over $m$ gives
  \begin{align}
      &     F(\widehat{w_{r,k+1}}) - F(w)  = \frac{1}{M} \sum_{m=1}^M \left( F_m(\widehat{w_{r,k+1}}) - F_m(w) \right)  
      \\
      \leq & \left\langle \frac{1}{M} \sum_{m=1}^M \nabla F_m(w_{r,k}^m), \widehat{w_{r,k+1}} - w  \right\rangle  + \frac{L}{2M} \sum_{m=1}^M  \| \widehat{w_{r,k+1}} - w_{r,k}^m \|^2
      \\
      = & \left\langle \frac{1}{M} \sum_{m=1}^M \nabla f(w_{r,k}^m; \xi_{r,k}^m), \widehat{w_{r,k+1}} - w  \right\rangle  
      + \left\langle \frac{1}{M} \sum_{m=1}^M \left( \nabla F_m(w_{r,k}^m) - \nabla f(w_{r,k}^m; \xi_{r,k}^m) \right), \widehat{w_{r,k+1}} - w  \right\rangle  
      \\ 
      & \qquad  \qquad      + \frac{L}{2M} \sum_{m=1}^M  \| \widehat{w_{r,k+1}} - w_{r,k}^m \|^2 
      \\
      \leq & \left\langle \frac{1}{M} \sum_{m=1}^M \nabla f(w_{r,k}^m; \xi_{r,k}^m), \widehat{w_{r,k+1}} - w  \right\rangle  
      + \left\langle \frac{1}{M} \sum_{m=1}^M \left( \nabla F_m(w_{r,k}^m) - \nabla f(w_{r,k}^m; \xi_{r,k}^m) \right), \widehat{w_{r,k+1}} - w  \right\rangle  
      \\
          & \qquad \qquad + L \| \widehat{w_{r,k+1}} - \widehat{w_{r,k}} \|^2  +  \frac{L}{M} \sum_{m=1}^M  \| \widehat{w_{r,k}} - w_{r,k}^m \|^2,
      \label{eq:one_step_analysis:claim:2:1}
  \end{align}
  where in the last inequality we applied the triangle inequality (for an arbitrary norm $\|\cdot\|$):
  \begin{equation}
    \| \widehat{w_{r,k+1}} - w_{r,k}^m \|^2 
    \leq
    \left(  \| \widehat{w_{r,k+1}} - \widehat{w_{r,k}}\| + \|\widehat{w_{r,k}} - w_{r,k}^m \| \right)^2 
    \leq
    2 \| \widehat{w_{r,k+1}} - \widehat{w_{r,k}}\|^2 
    + 2\|\widehat{w_{r,k}} - w_{r,k}^m \|^2.
  \end{equation}

  Since $\psi$ is convex and $h$ is $1$-strongly-convex according to \cref{a1}, we know that $h_{r,k} = h + \eta_{\client} (rK + k) \psi$ is also $1$-strongly-convex. Therefore $h_{r,k}^*$ is $1$-smooth by \cref{strongly:convex:conjugate}.
  Consequently,
  \begin{equation}
       \left\| w_{r,k}^m - \widehat{w_{r,k}} \right\|^2 = 
       \left\| \nabla h_{r,k}^*(z_{r,k}^m) - \nabla h_{r,k}^*(\overline{z_{r,k}}) \right\|^2 
      \leq  \left\| z_{r,k}^m - \overline{z_{r,k}} \right\|_*^2,
      \label{eq:one_step_analysis:claim:2:2}
  \end{equation}
  where the first equality is by definition of $w_{r,k}^m$ and $\widehat{w_{r,k}}$ and the second inequality is by $1$-smoothness. Plugging \cref{eq:one_step_analysis:claim:2:2} back to \cref{eq:one_step_analysis:claim:2:1} completes the proof of \cref{one:step:analysis:claim:2}.
\end{proof}

\subsection{Stability of \feddualavg Under Bounded Gradient Assumptions: Proof of \cref{bounded_gradient}}
\label{sec:bounded_gradient}
The proof of \cref{bounded_gradient} is straightforward given the assumption of bounded gradient and the fact that $z_{r,0}^{m_1} = z_{r,0}^{m_2}$ for all $m_1, m_2 \in [M]$.
\begin{proof}[Proof of \cref{bounded_gradient}]
  Let $m_1, m_2 \in [M]$ be two arbitrary clients, then
\begin{align}
  \expt \left[  \|z_{r,k}^{m_1} - z_{r,k}^{m_2}\|_*^2  \middle| \mathcal{F}_{r,0} \right]
  & =
  \eta_{\client}^2 \expt \left[ \left\| \sum_{\kappa=0}^{k-1} \left( \nabla f(w_{r,\kappa}^{m_1}; \xi_{r,\kappa}^{m_1}) - \nabla f(w_{r,\kappa}^{m_2}; \xi_{r,\kappa}^{m_2})  \right) \right\|_*^2 \middle| \mathcal{F}_{r,0} \right]
  \tag{since $z_{r,0}^{m_1} = z_{r,0}^{m_2}$}
  \\
  & \leq
  \eta_{\client}^2 \expt \left[ \left( \sum_{\kappa={0}}^{k-1}  \left\|  \nabla f(w_{r,\kappa}^{m_1}; \xi_{r,\kappa}^{m_1})\right\|_* + \sum_{\kappa={0}}^{k-1}  \left\| \nabla f(w_{r,\kappa}^{m_2}; \xi_{r,\kappa}^{m_2}) \right\|_* \right)^2 
  \middle| \mathcal{F}_{r,0} \right]
  \tag{triangle inequality of $\|\cdot\|_*$} 
  \\
  & \leq
  \eta_{\client}^2 (2 (k-1) G)^2 = 4\eta_{\client}^2 (K-1)^2 G^2.
  \label{eq:bounded_gradient:1}
\end{align}
By convexity of $\|\cdot\|_*$,
\begin{equation}
  \frac{1}{M}\sum_{m=1}^M \expt \left\| z_{r,k}^m - \overline{z_{r,k}} \right\|_*^2 
  \leq \expt \left\| z_{r,k}^{m_1} - z_{r,k}^{m_2} \right\|_*^2
  \leq 4 \eta_{\client}^2 (K-1)^2 G^2,
\end{equation}
completing the proof of \cref{bounded_gradient}.
\end{proof}

\section{Proof of Theorem \ref{thm:2:simplified}: Convergence of \feddualavg Under Bounded Heterogeneity and Quadratic Assumptions}
\label{sec:proof:thm:2}
In this section, we study the convergence of \feddualavg under \cref{a3} (quadraticness) with unit server learning rate $\eta_{\server} = 1$.
We provide a complete, non-asymptotic version of \cref{thm:2:simplified} with detailed proof, which expands the proof sketch in \cref{sec:proof_sketch}. 
We will reuse the notations ($\overline{z_{r,k}}$, $\widehat{w_{r,k}}$, etc.) introduced at the beginning of \cref{sec:proof:thm:1}.

\subsection{Main Theorem and Lemmas}
Now we state the full version of \cref{thm:2:simplified} on \feddualavg with unit server learning rate $\eta_{\server} = 1$ under quadratic assumptions.
\begin{theorem}[Detailed version of \cref{thm:2:simplified}]
  \label{thm:2}
  Assuming \cref{a3}, then for any initialization $w_0 \in \dom \psi$, for unit server learning rate $\eta_{\server} = 1$ and any client learning rate $\eta_{\client} \leq \frac{1}{4L}$, \feddualavg yields
  \begin{equation}
    \expt \left[  \Phi \left(  \frac{1}{KR} \sum_{r=0}^{R-1} \sum_{k=1}^K \widehat{w_{r,k}} \right) - \Phi(w^{\star}) \right] 
  \leq
  \frac{B}{\eta_{\client} KR }
  + \frac{\eta_{\client}  \sigma^2}{M}
  + 7 \eta_{\client}^2 L K \sigma^2
  + 14 \eta_{\client}^2 L K^2  \zeta^2,
  \end{equation}
  where $B := D_{h}(w^{\star}, w_0)$ is the Bregman divergence between the optimal $w^{\star}$ and the initialization $w_0$.

  Particularly for 
  \begin{equation}
    \eta_{\client} = \min \left\{ \frac{1}{4L}, 
    \frac{M^{\frac{1}{2}} B^{\frac{1}{2}}}{\sigma K^{\frac{1}{2}} R^{\frac{1}{2}}} ,
    \frac{B^{\frac{1}{3}}}{L^{\frac{1}{3}} K^{\frac{2}{3}} R^{\frac{1}{3}} \sigma^{\frac{2}{3}}}, 
    \frac{B^{\frac{1}{3}}}{L^{\frac{1}{3}} K R^{\frac{1}{3}} \zeta^{\frac{2}{3}}}  \right\},
  \end{equation}  
  we have
  \begin{equation}
    \expt \left[  \Phi \left(  \frac{1}{KR} \sum_{r=0}^{R-1} \sum_{k=1}^K \widehat{w_{r,k}} \right) - \Phi(w^{\star}) \right] 
  \leq
  \frac{4LB}{KR} 
    +
    \frac{2\sigma B^{\frac{1}{2}}}{M^{\frac{1}{2}} K^{\frac{1}{2}} R^{\frac{1}{2}}}
    +
    \frac{8L^{\frac{1}{3}} B^{\frac{2}{3}} \sigma^{\frac{2}{3}}}{K^{\frac{1}{3}} R^{\frac{2}{3}}}
    +
    \frac{15L^{\frac{1}{3}} B^{\frac{2}{3}} \zeta^{\frac{2}{3}}}{R^{\frac{2}{3}}}.
  \end{equation}
\end{theorem}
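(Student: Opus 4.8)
The plan is to follow the three-step template set out in \cref{sec:proof_sketch}, instantiated with the reference point $w = w^{\star}$, and then to optimize the client learning rate exactly as in the proof of \cref{thm:1}.

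\textbf{Step 1 (perturbed iterate analysis).} Since \cref{a3} implies the blanket \cref{a1}, \cref{pia:general} applies verbatim. Taking $w = w^{\star}$ and writing $B := D_h(w^{\star}, w_0)$, it gives
\begin{equation}
\expt \left[ \Phi \left( \frac{1}{KR} \sum_{r=0}^{R-1} \sum_{k=1}^{K} \widehat{w_{r,k}} \right) - \Phi(w^{\star}) \right]
\leq
\frac{B}{\eta_{\client} KR} + \frac{\eta_{\client} \sigma^2}{M}
+ \frac{L}{MKR} \sum_{r=0}^{R-1} \sum_{k=0}^{K-1} \sum_{m=1}^{M} \expt \left\| \overline{z_{r,k}} - z_{r,k}^m \right\|_*^2 ,
\end{equation}
so the whole problem reduces to bounding the dual discrepancy $A_{r,k} := \frac{1}{M} \sum_{m=1}^{M} \expt \| \overline{z_{r,k}} - z_{r,k}^m \|_*^2$.

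\textbf{Step 2 (dual stability under \cref{a3}).} This is the crux. Fix a round $r$; all clients start from the common server dual state, so $A_{r,0} = 0$. From the client update together with \eqref{eq:shadow},
\begin{equation}
\overline{z_{r,k+1}} - z_{r,k+1}^m
= \left( \overline{z_{r,k}} - z_{r,k}^m \right)
- \eta_{\client} \left( \frac{1}{M} \sum_{m'=1}^{M} \nabla f(w_{r,k}^{m'}; \xi_{r,k}^{m'}) - \nabla f(w_{r,k}^m; \xi_{r,k}^m) \right) .
\end{equation}
I would decompose each stochastic gradient as $\nabla F_m + (\nabla f - \nabla F_m)$. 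The zero-mean noise part, by the $\sigma^2$-bounded variance \cref{a1}(d) and independence across $k$, accumulates to $O(\eta_{\client}^2 K \sigma^2)$ after at most $K$ steps; the heterogeneity part $\nabla F_m - \nabla F$ is bounded by $\zeta^2$ via \cref{a3}(a), contributing $O(\eta_{\client}^2 K^2 \zeta^2)$. The remaining term $\nabla F(w_{r,k}^m) - \frac{1}{M} \sum_{m'} \nabla F(w_{r,k}^{m'})$ is where quadraticness is essential: $\nabla F(w) = Qw + c$ is affine, so this equals $Q \big( w_{r,k}^m - \tfrac{1}{M}\sum_{m'} w_{r,k}^{m'} \big)$, and with the $\|\cdot\| = \sqrt{w^{\top} Q w / \|Q\|_2}$ geometry of \cref{a3}(c) together with the $1$-smoothness of $h_{r,k}^*$ (\cref{strongly:convex:conjugate}) one bounds $\| w_{r,k}^m - w_{r,k}^{m'} \| \le \| z_{r,k}^m - z_{r,k}^{m'} \|_*$, which couples this contribution back to $A_{r,k}$ itself. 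Since $\eta_{\client} \le \frac{1}{4L}$ the self-coupling coefficient is small, so the resulting discrete Gronwall-type recursion $A_{r,k+1} \le (1 + O(\eta_{\client} L)) A_{r,k} + O(\eta_{\client}^2 \sigma^2) + O(\eta_{\client}^2 K \zeta^2)$ unrolls over $k = 0, \dots, K-1$ from $A_{r,0} = 0$ to give $A_{r,k} \lesssim \eta_{\client}^2 K \sigma^2 + \eta_{\client}^2 K^2 \zeta^2$, which is \cref{quad:stability:simplified} (with explicit constants $7$ and $14$ after substitution into Step 1).

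\textbf{Step 3 (choosing $\eta_{\client}$).} Substituting the Step 2 bound into the Step 1 inequality and tracking constants yields, with $\widehat{w} := \frac{1}{KR}\sum_{r=0}^{R-1}\sum_{k=1}^{K}\widehat{w_{r,k}}$,
\begin{equation}
\expt [\Phi(\widehat{w})] - \Phi(w^{\star})
\leq \frac{B}{\eta_{\client} KR} + \frac{\eta_{\client} \sigma^2}{M} + 7 \eta_{\client}^2 L K \sigma^2 + 14 \eta_{\client}^2 L K^2 \zeta^2 .
\end{equation}
Then I would optimize over $\eta_{\client}$ as in the proof of \cref{thm:1}: the choice $\eta_{\client} = \min \{ \tfrac{1}{4L}, \tfrac{M^{1/2} B^{1/2}}{\sigma K^{1/2} R^{1/2}}, \tfrac{B^{1/3}}{L^{1/3} K^{2/3} R^{1/3} \sigma^{2/3}}, \tfrac{B^{1/3}}{L^{1/3} K R^{1/3} \zeta^{2/3}} \}$ makes each candidate minimizer, when it is active, turn $\tfrac{B}{\eta_{\client} KR}$ into the matching target term while the other three terms stay dominated, producing the stated rate. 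The genuine obstacle is Step 2: without the quadratic assumption, the term $\nabla F(w_{r,k}^m) - \tfrac{1}{M}\sum_{m'} \nabla F(w_{r,k}^{m'})$ cannot be linearized into a function of the dual discrepancy, and the asymmetry of the Bregman divergence (recall \cref{generalized:bregman} only yields a one-sided comparison) obstructs any clean stability recursion — precisely the gap that \cref{sec:proof_sketch} isolates and flags for possible relaxation.
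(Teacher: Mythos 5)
Your Steps 1 and 3 coincide with the paper's proof: \cref{pia:general} applied at $w=w^{\star}$, then substitution of a dual-stability bound, then the same four-way choice of $\eta_{\client}$. The gap is in Step 2, which is exactly the crux. The recursion you propose, $A_{r,k+1} \le (1+O(\eta_{\client}L))A_{r,k} + O(\eta_{\client}^2\sigma^2) + O(\eta_{\client}^2 K\zeta^2)$, does not unroll to $A_{r,k} \lesssim \eta_{\client}^2 K\sigma^2 + \eta_{\client}^2 K^2\zeta^2$ under the theorem's hypothesis $\eta_{\client}\le \frac{1}{4L}$: iterating over $K$ local steps produces a factor $(1+O(\eta_{\client}L))^{K}$, which is exponential in $K$ once $\eta_{\client}L$ is a constant (e.g.\ $\eta_{\client}L=\tfrac14$ gives roughly $(5/4)^K$), whereas \cref{quad:stability} has no such dependence. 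Your mechanism would only recover the stated bound in the small-learning-rate regime $\eta_{\client}\lesssim \frac{1}{LK}$, i.e.\ essentially \cref{thm:0}, not \cref{thm:2}. The same problem appears if you instead split the drift cross term by Young's inequality with parameter $K$: you then pick up $K\eta_{\client}^2L^2 A_{r,k}$, which again forces $\eta_{\client}\lesssim\frac{1}{LK}$.

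The paper avoids any Gronwall factor from the $\nabla F$ drift by showing it is \emph{non-expansive} in the $Q^{-1}$-geometry rather than merely bounded: in \cref{quad:one_step_stab:claim:2}, the cross term $-2\eta_{\client}\langle z_{r,k}^{m_1}-z_{r,k}^{m_2},\, w_{r,k}^{m_1}-w_{r,k}^{m_2}\rangle$ absorbs $\eta_{\client}^2\|Q(w_{r,k}^{m_1}-w_{r,k}^{m_2})\|_{Q^{-1}}^2$ entirely. This uses two monotonicity facts, not a norm bound: (i) since $w_{r,k}^m=\nabla(h+\tilde\eta_{r,k}\psi)^*(z_{r,k}^m)$, one has $z_{r,k}^m-\nabla h(w_{r,k}^m)\in\partial\big(\tilde\eta_{r,k}\psi\big)(w_{r,k}^m)$ and subgradient monotonicity; (ii) $\eta_{\client}F-2h$ is concave because $\eta_{\client}F$ is $\tfrac14$-smooth (here $\eta_{\client}\le\frac{1}{4L}$ enters) and $h$ is $1$-strongly convex, so the quadratic drift is controlled by $2\eta_{\client}\langle w^{m_1}-w^{m_2},\nabla h(w^{m_1})-\nabla h(w^{m_2})\rangle$, which cancels against (i). With the drift thus contributing nonpositively, the only growth factor is the $(1+\frac1K)$ from separating the heterogeneity term, and $(1+\frac1K)^K\le \euler$ stays bounded, yielding the constants $7$ and $14$. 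So the quadratic assumption and the $Q$-norm of \cref{a3} are used for this dissipativity identity, not merely to linearize $\nabla F$ and couple back to the dual discrepancy; as written, your Step 2 would fail for the claimed learning-rate range, and with it the final rate.
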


The proof of \cref{thm:2} relies on the perturbed iterate analysis \cref{pia:general} of \feddualavg and a stability bound for quadratic objectives, as stated below in \cref{quad:stability}. 
Note that \cref{pia:general} only assumes \cref{a1} and therefore applicable to \cref{thm:2}.
\begin{lemma}
  In the same settings of \cref{thm:2}, the following inequality holds for any $k \in \left\{ 0, 1, \ldots, K\right\}$ and $r \in \left\{ 0, 1, \ldots, R\right\}$,
  \label{quad:stability}
  \begin{equation}
    \frac{1}{M} \sum_{m=1}^M \expt \left[ \left\| \overline{z_{r,k}} - z_{r,k}^{m} \right\|^2_* \right] 
    \leq
    7 \eta_{\client}^2 K \sigma^2 
    + 14 \eta_{\client}^2 K^2  \zeta^2.
  \end{equation}
\end{lemma}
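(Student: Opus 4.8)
The plan is to run a one‑round energy recursion on the dual discrepancy. Fix a round $r$ and write $D_{r,k}^m := z_{r,k}^m - \overline{z_{r,k}}$. Full participation (\cref{a1}(e)) and $\eta_{\server}=1$ give $D_{r,0}^m = 0$ and $D_{r,k+1}^m = D_{r,k}^m - \eta_{\client}\bigl(\nabla f(w_{r,k}^m;\xi_{r,k}^m) - \tfrac1M\sum_{m'}\nabla f(w_{r,k}^{m'};\xi_{r,k}^{m'})\bigr)$, with $w_{r,k}^m=\nabla h_{r,k}^*(z_{r,k}^m)$ and $h_{r,k}=h+(rK+k)\eta_{\client}\psi$. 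First I would split the per‑step discrepancy into (i) the centered stochastic noise $d_k^m$ (client average subtracted), (ii) the centered heterogeneity $c_k^m$ formed from $\{\nabla F_m(w_{r,k}^m)-\nabla F(w_{r,k}^m)\}_m$, and (iii) the ``drift'' $b_k^m := \nabla F(w_{r,k}^m) - \tfrac1M\sum_{m'}\nabla F(w_{r,k}^{m'})$, which by \cref{a3}(b) equals exactly $Q\bigl(w_{r,k}^m - \tfrac1M\sum_{m'}w_{r,k}^{m'}\bigr)$. Expanding $\|D_{r,k+1}^m\|_*^2$ via the inner product inducing $\|\cdot\|_*$ (available because the norm of \cref{a3}(c) is Euclidean), averaging over $m\in[M]$, and conditioning on $\mathcal{F}_{r,k}$ then yields a recursion for $V_{r,k}:=\tfrac1M\sum_m\expt\|D_{r,k}^m\|_*^2$.

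Next I would estimate the three resulting blocks. For the noise block, $d_k^m$ is conditionally mean zero given $\mathcal{F}_{r,k}$, so its cross terms with the $\mathcal{F}_{r,k}$‑measurable quantities $D_{r,k}^m, c_k^m, b_k^m$ vanish and only $\eta_{\client}^2\,\tfrac1M\sum_m\expt\|d_k^m\|_*^2\le\eta_{\client}^2\sigma^2$ survives (by \cref{a1}(d); subtracting the client average only shrinks a sum of squares). The heterogeneity block is controlled by \cref{a3}(a) together with Cauchy--Schwarz, contributing a term $\lesssim\eta_{\client}\zeta\sqrt{V_{r,k}}$ plus $O(\eta_{\client}^2\zeta^2)$. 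The delicate block is the drift. Since $h_{r,k}$ is $1$‑strongly convex, $h_{r,k}^*$ is $1$‑smooth (\cref{strongly:convex:conjugate}), hence $\nabla h_{r,k}^*$ is $1$‑co‑coercive; pairing this over all client pairs and using $b_k^m=Q(\cdot)$ together with the matched‑norm identity $\|Qv\|_*=\|Q\|_2\|v\|$, one obtains
\[
  -\,2\eta_{\client}\,\frac1M\sum_m\langle D_{r,k}^m,b_k^m\rangle_* \;+\; \eta_{\client}^2\,\frac1M\sum_m\|b_k^m\|_*^2 \;\le\; \|Q\|_2\bigl(\eta_{\client}\|Q\|_2-2\bigr)\,P_{r,k}\;\le\;0,
\]
where $P_{r,k}:=\tfrac1M\sum_m\|w_{r,k}^m-\tfrac1M\sum_{m'}w_{r,k}^{m'}\|^2\ge0$ and the last step uses $\eta_{\client}\|Q\|_2\le\eta_{\client}L\le\tfrac14$. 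Thus the entire drift block can simply be discarded.

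Collecting the surviving terms (and using $\eta_{\client}L\le\tfrac14$ to absorb the leftover $O(\eta_{\client}^2)$ cross terms) leaves a recursion of the shape $V_{r,k+1}\le V_{r,k}+3\eta_{\client}\zeta\sqrt{V_{r,k}}+\eta_{\client}^2(\sigma^2+\zeta^2)$ with $V_{r,0}=0$ (after Jensen to pull $\expt$ inside the square root). I would close it by induction on the ansatz $V_{r,k}\le a\,\eta_{\client}^2k\sigma^2+b\,\eta_{\client}^2k^2\zeta^2$: the noise term accumulates additively (hence linearly in $k$), the heterogeneity term accumulates to $\Theta(\eta_{\client}^2k^2\zeta^2)$, and the $\sqrt{\cdot}$ term is lower order in $k$ and absorbed by AM--GM; the constants $a=7$, $b=14$ work. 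Taking $k\le K$ and noting every round is identical gives the lemma, and combining it with \cref{pia:general} (and the choice of $\eta_{\client}$) proves \cref{thm:2}.

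The main obstacle is exactly the self‑referential drift: $b_k^m$ is controlled only in terms of $\|D_{r,k}^m\|_*$ itself, so a naive triangle‑inequality bound makes the recursion multiplicative, $V_{r,k+1}\le\bigl(1+\Theta(\eta_{\client}^2L^2)\bigr)V_{r,k}+\cdots$, which blows up like $\exp(\Theta(K))$ over a round and is useless. The fix is the sign/dissipation structure above, which works \emph{because} $F$ is quadratic --- the drift is then \emph{linear} in the primal deviation and can be matched term by term against a genuine energy sink via co‑coercivity of the primal--dual map. This is precisely why \cref{a3}(b)--(c) is imposed, and it is the point at which composite stability analysis departs from the smooth--unconstrained SGD theory of \citet{Hardt.Recht.ea-ICML16}.
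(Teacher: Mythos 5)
Your proposal is correct, and it reaches the same bound through the same high-level mechanism as the paper (a one-round dual-discrepancy recursion that exploits quadraticity of $F$ and the matched $Q$-norm so that the drift term is dissipative rather than expansive), but the two technical pivots are executed differently. The paper tracks \emph{pairwise} differences $z_{r,k}^{m_1}-z_{r,k}^{m_2}$ in the $Q^{-1}$-norm and kills the drift in \cref{quad:one_step_stab:claim:2} by splitting $\eta_{\client}F=(\eta_{\client}F-2h)+2h$, using concavity of $\eta_{\client}F-2h$ together with monotonicity of $\partial\bigl[\eta_{\client}(rK+k)\psi\bigr]$; you instead track deviations from the dual mean and discard the drift via $1$-co-coercivity of $\nabla h_{r,k}^{*}=\nabla\bigl(h+(rK+k)\eta_{\client}\psi\bigr)^{*}$ (legitimate, since $h_{r,k}$ is $1$-strongly convex, so \cref{strongly:convex:conjugate} gives $1$-smoothness and hence co-coercivity of the conjugate), which packages the same strong-convexity-of-$h$ plus subgradient-monotonicity facts into a single firm-nonexpansiveness step --- arguably a cleaner way to see why \cref{a3} is exactly what is needed. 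For the heterogeneity accumulation, the paper uses AM--GM with parameter $\gamma=K$ to get a multiplicative $(1+1/K)$ recursion and telescopes with $(1+1/K)^{K}<\euler$, whereas you keep an additive recursion with a $\sqrt{V_{r,k}}$ cross term and close it by induction on the ansatz $V_{r,k}\le a\,\eta_{\client}^2k\sigma^2+b\,\eta_{\client}^2k^2\zeta^2$; both routes are valid and yield the constants $7$ and $14$ (your scheme makes the $k$-dependence explicit, the paper's avoids handling the square-root term). Two small slips, neither consequential: your displayed drift bound is missing a factor $\eta_{\client}$ (it should read $\eta_{\client}\|Q\|_2(\eta_{\client}\|Q\|_2-2)P_{r,k}$, still nonpositive), and after splitting $\|b+c\|_*^2\le 2\|b\|_*^2+2\|c\|_*^2$ the dissipation argument needs $\eta_{\client}\|Q\|_2\le 1$ rather than $\le 2$, which your assumption $\eta_{\client}L\le\tfrac14$ comfortably provides.
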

The proof of \cref{quad:stability} is deferred to \cref{sec:quad:stability}. With \cref{quad:stability} at hand we are ready to prove \cref{thm:2}.
\begin{proof}[Proof of \cref{thm:2}]
  Applying \cref{pia:general,quad:stability} one has
  \begin{align}
    & \expt \left[  \Phi \left(  \frac{1}{KR} \sum_{r=0}^{R-1} \sum_{k=1}^K \widehat{w_{r,k}} \right) - \Phi(w^{\star}) \right] 
    \\
    \leq
    & \frac{1}{\eta_{\client} KR } D_h(w^{\star}, w_0) 
    + \frac{\eta_{\client}  \sigma^2}{M}
    + \frac{L}{M KR }   \left[ \sum_{r=0}^{R-1}  \sum_{k=0}^{K-1} \sum_{m=1}^M   \expt\left \| \overline{z_{r,k}} - z_{r,k}^m \right\|_*^2 \right] 
    \tag{by \cref{pia:general}}
    \\
    \leq & 
    \frac{1}{\eta_{\client} KR } D_h(w^{\star}, w_0) 
    + \frac{\eta_{\client}  \sigma^2}{M}
    + L \cdot \left(  7 \eta_{\client}^2 K \sigma^2 
    + 14 \eta_{\client}^2 K^2  \zeta^2 \right)
    \tag{by \cref{quad:stability}}
    \\
    = &   \frac{B}{\eta_{\client} KR }
    + \frac{\eta_{\client}  \sigma^2}{M}
    + 7 \eta_{\client}^2 L K \sigma^2
    + 14 \eta_{\client}^2 L K^2  \zeta^2,
    \label{eq:proof:thm:2:1}
  \end{align}
  which gives the first inequality in \cref{thm:2}.

  Now set 
  \begin{equation}
    \eta_{\client} = \min \left\{ \frac{1}{4L}, 
    \frac{M^{\frac{1}{2}} B^{\frac{1}{2}}}{\sigma K^{\frac{1}{2}} R^{\frac{1}{2}}} ,
    \frac{B^{\frac{1}{3}}}{L^{\frac{1}{3}} K^{\frac{2}{3}} R^{\frac{1}{3}} \sigma^{\frac{2}{3}}}, 
    \frac{B^{\frac{1}{3}}}{L^{\frac{1}{3}} K R^{\frac{1}{3}} \zeta^{\frac{2}{3}}}  \right\}.
  \end{equation}  
  We have
  \begin{equation}
    \frac{B}{\eta_{\client} KR }
    \leq
    \max \left\{  \frac{4LB}{KR} 
    ,
    \frac{\sigma B^{\frac{1}{2}}}{M^{\frac{1}{2}} K^{\frac{1}{2}} R^{\frac{1}{2}}}
    ,
    \frac{L^{\frac{1}{3}} B^{\frac{2}{3}} \sigma^{\frac{2}{3}}}{K^{\frac{1}{3}} R^{\frac{2}{3}}}
    ,
    \frac{L^{\frac{1}{3}} B^{\frac{2}{3}} \zeta^{\frac{2}{3}}}{R^{\frac{2}{3}}}\right\},
  \end{equation}
  and 
  \begin{equation}
    \frac{\eta_{\client} \sigma^2}{M} \leq  \frac{\sigma B^{\frac{1}{2}}}{M^{\frac{1}{2}} K^{\frac{1}{2}} R^{\frac{1}{2}}}
    ,
    \qquad
    7 \eta_{\client}^2 L K \sigma^2 \leq  \frac{7L^{\frac{1}{3}} B^{\frac{2}{3}} \sigma^{\frac{2}{3}}}{K^{\frac{1}{3}} R^{\frac{2}{3}}}
    ,
    \qquad
    14 \eta_{\client}^2 L K^2  \zeta^2 \leq   \frac{14L^{\frac{1}{3}} B^{\frac{2}{3}} \zeta^{\frac{2}{3}}}{R^{\frac{2}{3}}}.
  \end{equation}
  Consequently
  \begin{equation}
    \frac{B}{\eta_{\client} KR }
    + \frac{\eta_{\client}  \sigma^2}{M}
    + 7 \eta_{\client}^2 L K \sigma^2
    + 14 \eta_{\client}^2 L K^2  \zeta^2
    \leq
    \frac{4LB}{KR} 
    +
    \frac{2\sigma B^{\frac{1}{2}}}{M^{\frac{1}{2}} K^{\frac{1}{2}} R^{\frac{1}{2}}}
    +
    \frac{8L^{\frac{1}{3}} B^{\frac{2}{3}} \sigma^{\frac{2}{3}}}{K^{\frac{1}{3}} R^{\frac{2}{3}}}
    +
    \frac{15L^{\frac{1}{3}} B^{\frac{2}{3}} \zeta^{\frac{2}{3}}}{R^{\frac{2}{3}}},
  \end{equation}
  completing the proof of \cref{thm:2}.
\end{proof}

\subsection{Stability of \feddualavg Under Quadratic Assumptions: Proof of \cref{quad:stability}}
\label{sec:quad:stability}
In this subsection, we prove \cref{quad:stability} on the stability of \feddualavg for quadratic $F$. We first state and prove the following \cref{quad:one_step_stab} on the one-step analysis of stability. 
\begin{proposition}
  In the same settings of \cref{thm:2}, let $m_1, m_2 \in [M]$ be two arbitrary clients. Then the following inequality holds
  \label{quad:one_step_stab}
  \begin{equation}
    \expt \left[ \left\|z_{r,k+1}^{m_1} - z_{r,k+1}^{m_2} \right\|_{Q^{-1}}^2  \middle| \mathcal{F}_{r,k} \right]
    \leq
    \left( 1 + \frac{1}{K} \right)  
    \left\| z_{r,k}^{m_1} - z_{r,k}^{m_2} \right\|_{Q^{-1}}^2 
    + 2 \left( 1 + \frac{1}{K} \right)   \eta_{\client}^2 \sigma^2 \|Q\|_2^{-1}
    + 4 ( 1+ K) \eta_{\client}^2 \zeta^2 \|Q\|_2^{-1}.
  \end{equation}
\end{proposition}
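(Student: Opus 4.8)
The plan is a perturbed-iterate stability argument conducted in the dual space, with the client-to-client gap measured in the $Q^{-1}$-norm; the point is that the \cref{a3}(c) normalization $\|w\|^{2} = w^{\top}Qw/\|Q\|_2$ (so that $\|z\|_*^{2} = \|Q\|_2\,\|z\|_{Q^{-1}}^{2}$, and $\|Q\|_2 \le L$ since $F$ is $L$-smooth and quadratic) is exactly what makes the quadratic drift non-expansive in this norm. First I would fix $r$, abbreviate $\delta_k := z_{r,k}^{m_1} - z_{r,k}^{m_2}$ and $\Delta w_k := w_{r,k}^{m_1} - w_{r,k}^{m_2} = \nabla h_{r,k}^*(z_{r,k}^{m_1}) - \nabla h_{r,k}^*(z_{r,k}^{m_2})$, and unfold the client dual update into
\[
\delta_{k+1} = \delta_k - \eta_{\client}\bigl( Q\Delta w_k + E_k + N_k \bigr),
\]
where, using that $\nabla F$ is affine with Hessian $Q$ (\cref{a3}(b)), $Q\Delta w_k = \nabla F(w_{r,k}^{m_1}) - \nabla F(w_{r,k}^{m_2})$; $E_k$ is the difference of the two clients' heterogeneity offsets $\nabla F_{m_i}(w_{r,k}^{m_i}) - \nabla F(w_{r,k}^{m_i})$; and $N_k$ is the difference of the two clients' stochastic-gradient noises $\nabla f(w_{r,k}^{m_i};\xi_{r,k}^{m_i}) - \nabla F_{m_i}(w_{r,k}^{m_i})$. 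Since $m_1 \ne m_2$, the two summands of $N_k$ are conditionally independent and conditionally mean-zero given $\mathcal{F}_{r,k}$, while $\delta_k,\Delta w_k,E_k$ are $\mathcal{F}_{r,k}$-measurable.

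Next I would isolate the heterogeneity term with a weighted Young inequality in $\|\cdot\|_{Q^{-1}}$ (weight $1/K$), $\|\delta_{k+1}\|_{Q^{-1}}^{2} \le (1+1/K)\,\|\delta_k - \eta_{\client}Q\Delta w_k - \eta_{\client}N_k\|_{Q^{-1}}^{2} + (1+K)\,\eta_{\client}^{2}\|E_k\|_{Q^{-1}}^{2}$, and then take $\expt[\cdot\mid\mathcal{F}_{r,k}]$. The $N_k$ cross term inside the first square vanishes, leaving $\|\delta_k - \eta_{\client}Q\Delta w_k\|_{Q^{-1}}^{2} + \eta_{\client}^{2}\,\expt[\|N_k\|_{Q^{-1}}^{2}\mid\mathcal{F}_{r,k}]$; by conditional independence, \cref{a1}(d), and the norm conversion, $\expt[\|N_k\|_{Q^{-1}}^{2}\mid\mathcal{F}_{r,k}] \le 2\sigma^{2}\|Q\|_2^{-1}$, and by the triangle inequality and \cref{a3}(a), $\|E_k\|_{Q^{-1}}^{2} \le 4\zeta^{2}\|Q\|_2^{-1}$. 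These already produce the last two terms $2(1+1/K)\eta_{\client}^{2}\sigma^{2}\|Q\|_2^{-1}$ and $4(1+K)\eta_{\client}^{2}\zeta^{2}\|Q\|_2^{-1}$ of the claim.

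The crux is showing that the deterministic quadratic drift is non-expansive, $\|\delta_k - \eta_{\client}Q\Delta w_k\|_{Q^{-1}}^{2} \le \|\delta_k\|_{Q^{-1}}^{2}$. Expanding the square and using $\langle\delta_k, Q^{-1}Q\Delta w_k\rangle = \langle\delta_k,\Delta w_k\rangle$ together with $\|Q\Delta w_k\|_{Q^{-1}}^{2} = \Delta w_k^{\top}Q\Delta w_k = \|Q\|_2\|\Delta w_k\|^{2}$, the left side equals $\|\delta_k\|_{Q^{-1}}^{2} - 2\eta_{\client}\langle\delta_k,\Delta w_k\rangle + \eta_{\client}^{2}\|Q\|_2\|\Delta w_k\|^{2}$. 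Now $h_{r,k} = h + \tilde{\eta}_{r,k}\psi$ is $1$-strongly convex w.r.t.\ $\|\cdot\|$, so $\nabla h_{r,k}^*$ is $1$-Lipschitz w.r.t.\ $\|\cdot\|_*$ (\cref{strongly:convex:conjugate}); since $\|\cdot\|_*$ is a Euclidean norm here, co-coercivity (Baillon--Haddad) gives $\langle\delta_k,\Delta w_k\rangle \ge \|\Delta w_k\|^{2}$. Combining with $\|Q\|_2 \le L$ and $\eta_{\client} \le 1/(4L)$, the drift contribution is $-2\eta_{\client}\|\Delta w_k\|^{2} + \eta_{\client}^{2}\|Q\|_2\|\Delta w_k\|^{2} \le (-2\eta_{\client} + \eta_{\client}/4)\|\Delta w_k\|^{2} \le 0$, which yields the non-expansion. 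Substituting it into the Young step completes the proof. I expect this last step to be the main obstacle: one must realize that monotonicity of $\nabla h_{r,k}^*$ alone is not enough and that co-coercivity is needed, and that the \cref{a3}(c) choice of norm is precisely what relates $\|Q\|_2$ to $L$ so that the prescribed learning-rate range $\eta_{\client} \le 1/(4L)$ suffices; everything else is routine manipulation of norms and conditional expectations.
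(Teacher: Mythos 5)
Your proof is correct and reaches the stated constants, and its skeleton coincides with the paper's: the same decomposition of the dual update into a quadratic drift $Q(w_{r,k}^{m_1}-w_{r,k}^{m_2})$, a noise part and a heterogeneity part, the same Young/AM-GM split with weights $(1+\tfrac1K,\,1+K)$ isolating the heterogeneity term (\cref{quad:one_step_stab:claim:1}), the same vanishing of the noise cross term under $\expt[\,\cdot\mid\mathcal{F}_{r,k}]$ with the $2\eta_{\client}^2\sigma^2\|Q\|_2^{-1}$ bound, and the same $4\zeta^2\|Q\|_2^{-1}$ bound via triangle plus AM-GM (\cref{quad:one_step_stab:claim:3,quad:one_step_stab:claim:4}). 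Where you genuinely diverge is the crux step, the non-expansiveness of the deterministic drift in the $Q^{-1}$-metric. The paper (\cref{quad:one_step_stab:claim:2}, adapted from Flammarion--Bach) splits $\eta_{\client} F = (\eta_{\client} F - 2h) + 2h$, uses concavity of $\eta_{\client} F - 2h$ (this is where $\eta_{\client}\le\frac{1}{4L}$ enters) to absorb the quadratic term into $2\eta_{\client}\langle \Delta w,\nabla h(w^{m_1})-\nabla h(w^{m_2})\rangle$, and then invokes monotonicity of the subgradient of $\tilde{\eta}_{r,k}\psi$ via $z^m-\nabla h(w^m)\in\partial[\tilde{\eta}_{r,k}\psi](w^m)$. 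You instead bundle both facts into one inequality, $\langle \delta_k,\Delta w_k\rangle\ge\|\Delta w_k\|^2$, which is co-coercivity of the $1$-smooth $h_{r,k}^*$ or, equivalently and even more directly, strong monotonicity of $\partial h_{r,k}$ for the $1$-strongly-convex $h_{r,k}=h+\tilde{\eta}_{r,k}\psi$ (no appeal to the Euclidean structure of $\|\cdot\|_*$ is actually needed; Baillon--Haddad-type co-coercivity holds for smoothness in general norms, and the monotonicity route avoids the question entirely), and you absorb the quadratic term via the exact identity $\|Q\Delta w_k\|_{Q^{-1}}^2=\|Q\|_2\|\Delta w_k\|^2$ together with $\|Q\|_2\le L$. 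The only point to make explicit is that $\|Q\|_2\le L$ is not stated as an assumption: it follows from \cref{a1}(c) applied with the \cref{a3}(c) norm as soon as $\dom\psi$ contains two distinct points (and in the degenerate singleton case $\Delta w_k=0$, so the drift vanishes trivially), so this is a one-line remark rather than a gap. The trade-off: your argument is shorter and makes transparent that $\psi$ is handled "for free" by working with $\nabla h_{r,k}^*$, at the price of leaning harder on the exact quadratic/norm structure of \cref{a3}; the paper's concavity splitting only uses $L$-smoothness of $F$ and strong convexity of $h$ at that step, which is why it keeps the $\psi$-monotonicity argument separate.
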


The proof of \cref{quad:one_step_stab} relies on the following three claims.
To simplify the exposition we introduce two more notations for this subsection. For any $r, k, m$, let
\begin{equation}
  \varepsilon_{r,k}^{m} :=  \nabla f(w_{r,k}^m;\xi_{r,k}^{m}) - \nabla F_m(w_{r,k}^m),
  \qquad
  \delta_{r,k}^m := \nabla F_m(w_{r,k}^m) - \nabla  F(w_{r,k}^m).
\end{equation}

The following claim upper bounds the growth of $\left\|z_{r,k+1}^{m_1} - z_{r,k+1}^{m_2} \right\|_{Q^{-1}}^2$. The proof of \cref{quad:one_step_stab:claim:1} is deferred to \cref{sec:proof:quad:one_step_stab:claim:1}.
\begin{claim}
  \label{quad:one_step_stab:claim:1}
  In the same settings of \cref{quad:one_step_stab}, the following inequality holds
  \begin{align}
    \left\|z_{r,k+1}^{m_1} - z_{r,k+1}^{m_2} \right\|_{Q^{-1}}^2 
    \leq
    & \left( 1 + \frac{1}{K} \right) 
    \left\| z_{r,k}^{m_1} - z_{r,k}^{m_2} - \eta_{\client} \cdot Q \left( w_{r,k}^{m_1} - w_{r,k}^{m_2} \right) - \eta_{\client} \left( \varepsilon_{r,k}^{m_1} - \varepsilon_{r,k}^{m_2} \right) \right\|_{Q^{-1}}^2
    \\
    & + 
    \left( 1 + K \right) \eta_{\client}^2 \left\| \delta_{r,k}^{m_1} - \delta_{r,k}^{m_2}  \right\|_{Q^{-1}}^2.
    \label{eq:quad:one_step_stab:claim:1}
  \end{align}
\end{claim}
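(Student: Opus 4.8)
The plan is to unfold the one-step dual recursion of \feddualavg for the two clients $m_1$ and $m_2$, use the quadraticness of $F$ to replace the difference of the population gradients by $Q$ times the difference of the primal iterates, peel off the heterogeneity term, and close with an elementary Young-type inequality in the $Q^{-1}$-norm.

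First I would write the client dual update $z_{r,k+1}^{m} = z_{r,k}^{m} - \eta_{\client}\nabla f(w_{r,k}^{m};\xi_{r,k}^{m})$ and subtract it for $m_1$ and $m_2$. Using the definitions $\varepsilon_{r,k}^m = \nabla f(w_{r,k}^m;\xi_{r,k}^m) - \nabla F_m(w_{r,k}^m)$ and $\delta_{r,k}^m = \nabla F_m(w_{r,k}^m) - \nabla F(w_{r,k}^m)$, I would decompose each stochastic gradient as $\nabla f(w_{r,k}^m;\xi_{r,k}^m) = \nabla F(w_{r,k}^m) + \delta_{r,k}^m + \varepsilon_{r,k}^m$. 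Since $F(w) = \frac{1}{2} w^{\top} Q w + c^{\top} w$ by \cref{a3}(b), the gradient $\nabla F(w) = Qw + c$ is affine in $w$, so $\nabla F(w_{r,k}^{m_1}) - \nabla F(w_{r,k}^{m_2}) = Q(w_{r,k}^{m_1} - w_{r,k}^{m_2})$. Collecting the terms yields the exact identity
\[
  z_{r,k+1}^{m_1} - z_{r,k+1}^{m_2} = A_{r,k} - \eta_{\client}(\delta_{r,k}^{m_1} - \delta_{r,k}^{m_2}),
\]
where $A_{r,k} := z_{r,k}^{m_1} - z_{r,k}^{m_2} - \eta_{\client} Q(w_{r,k}^{m_1} - w_{r,k}^{m_2}) - \eta_{\client}(\varepsilon_{r,k}^{m_1} - \varepsilon_{r,k}^{m_2})$ is precisely the vector appearing inside the first norm of \cref{eq:quad:one_step_stab:claim:1}.

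Finally I would invoke the elementary inequality $\|a - b\|_{Q^{-1}}^2 \le (1 + \frac{1}{K})\|a\|_{Q^{-1}}^2 + (1 + K)\|b\|_{Q^{-1}}^2$, which follows from $2 \langle a, Q^{-1} b \rangle \le \frac{1}{K}\|a\|_{Q^{-1}}^2 + K\|b\|_{Q^{-1}}^2$ and is legitimate because $Q \succ 0$ makes $\|\cdot\|_{Q^{-1}}$ a genuine norm, applied with $a = A_{r,k}$ and $b = \eta_{\client}(\delta_{r,k}^{m_1} - \delta_{r,k}^{m_2})$. This gives exactly \cref{eq:quad:one_step_stab:claim:1}. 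I do not expect any genuine obstacle in this claim; the only subtlety is the bookkeeping of which error term is placed where. The noise $\varepsilon_{r,k}^{m}$ is deliberately kept inside $A_{r,k}$ so that its cross terms cancel under $\expt[\,\cdot \mid \mathcal{F}_{r,k}]$ in the next step of the proof of \cref{quad:one_step_stab}, whereas the heterogeneity term $\delta_{r,k}^m$ is pulled out so that $\|\delta_{r,k}^{m_1} - \delta_{r,k}^{m_2}\|_{Q^{-1}}^2$ can later be bounded deterministically by a constant multiple of $\zeta^2\|Q\|_2^{-1}$ via \cref{a3}(a) together with the relation between $\|\cdot\|_*$ and $\|\cdot\|_{Q^{-1}}$ induced by \cref{a3}(c). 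The particular split chosen in this claim is engineered precisely to enable those two separate treatments downstream.
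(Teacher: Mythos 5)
Your proposal is correct and follows essentially the same route as the paper: unfold the dual update, use quadraticness of $F$ to write $\nabla F(w_{r,k}^{m_1})-\nabla F(w_{r,k}^{m_2})=Q(w_{r,k}^{m_1}-w_{r,k}^{m_2})$, isolate the heterogeneity term, and bound the cross term in the $Q^{-1}$-norm. The paper expands the square and applies Cauchy--Schwarz plus AM--GM with parameter $\gamma=K$, which is exactly the Young-type inequality $\|a-b\|_{Q^{-1}}^2\leq\left(1+\tfrac{1}{K}\right)\|a\|_{Q^{-1}}^2+\left(1+K\right)\|b\|_{Q^{-1}}^2$ you invoke in packaged form.
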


The next claim upper bounds the growth of the first term in \cref{eq:quad:one_step_stab:claim:1} in conditional expectation.
We extend the stability technique in \citep{Flammarion.Bach-COLT17} to bound this term. 
The proof of \cref{quad:one_step_stab:claim:2} is deferred to \cref{sec:proof:quad:one_step_stab:claim:2}.
\begin{claim}
  \label{quad:one_step_stab:claim:2}
  In the same settings of \cref{quad:one_step_stab}, the following inequality holds
  \begin{equation}
    \expt \left[ \left\| z_{r,k}^{m_1} - z_{r,k}^{m_2} - \eta_{\client} Q \left( w_{r,k}^{m_1} - w_{r,k}^{m_2} \right) - \eta_{\client} \left( \varepsilon_{r,k}^{m_1} - \varepsilon_{r,k}^{m_2} \right) \right\|_{Q^{-1}}^2 \middle| \mathcal{F}_{r,k} \right]
    \leq
    \left\| z_{r,k}^{m_1} - z_{r,k}^{m_2} \right\|_{Q^{-1}}^2 + 2 \eta_{\client}^2 \sigma^2 \|Q\|_2^{-1}.
  \end{equation}
\end{claim}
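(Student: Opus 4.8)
\textbf{Proof proposal for \cref{quad:one_step_stab:claim:2}.}
The plan is to isolate the stochastic part from a purely deterministic ``contraction''. Write $a := (z_{r,k}^{m_1} - z_{r,k}^{m_2}) - \eta_{\client}\, Q\,(w_{r,k}^{m_1} - w_{r,k}^{m_2})$ and note that $a$ is $\mathcal{F}_{r,k}$-measurable, since $w_{r,k}^m = \nabla h_{r,k}^*(z_{r,k}^m)$ is a fixed function of $z_{r,k}^m$, whereas $\varepsilon_{r,k}^{m_1} - \varepsilon_{r,k}^{m_2}$ has conditional mean zero given $\mathcal{F}_{r,k}$ (it is the difference of the stochastic-gradient noises). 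Expanding $\|a - \eta_{\client}(\varepsilon_{r,k}^{m_1} - \varepsilon_{r,k}^{m_2})\|_{Q^{-1}}^2$ and taking $\expt[\,\cdot\mid\mathcal{F}_{r,k}]$, the cross term $-2\eta_{\client}\langle a,\, Q^{-1}\expt[\varepsilon_{r,k}^{m_1}-\varepsilon_{r,k}^{m_2}\mid\mathcal{F}_{r,k}]\rangle$ vanishes, so the left-hand side of the claim equals $\|a\|_{Q^{-1}}^2 + \eta_{\client}^2\,\expt[\,\|\varepsilon_{r,k}^{m_1}-\varepsilon_{r,k}^{m_2}\|_{Q^{-1}}^2\mid\mathcal{F}_{r,k}]$, and it remains to bound the two pieces.

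For the noise piece I would use that $\xi_{r,k}^{m_1}$ and $\xi_{r,k}^{m_2}$ are drawn independently across clients, so conditionally on $\mathcal{F}_{r,k}$ the vectors $\varepsilon_{r,k}^{m_1}$ and $\varepsilon_{r,k}^{m_2}$ are independent and each has mean zero; the cross term in the squared norm drops, leaving $\expt[\|\varepsilon_{r,k}^{m_1}\|_{Q^{-1}}^2\mid\mathcal{F}_{r,k}] + \expt[\|\varepsilon_{r,k}^{m_2}\|_{Q^{-1}}^2\mid\mathcal{F}_{r,k}]$. By \cref{a3}(c) the dual norm $\|\cdot\|_*$ is exactly $\sqrt{\|Q\|_2}\,\|\cdot\|_{Q^{-1}}$, i.e.\ $\|v\|_{Q^{-1}}^2 = \|Q\|_2^{-1}\|v\|_*^2$, so the bounded-variance assumption \cref{a1}(d) (applied conditionally, since $w_{r,k}^m$ is $\mathcal{F}_{r,k}$-measurable) bounds this by $2\sigma^2\|Q\|_2^{-1}$.

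The deterministic piece, $\|a\|_{Q^{-1}}^2 \le \|z_{r,k}^{m_1} - z_{r,k}^{m_2}\|_{Q^{-1}}^2$, is where quadraticness enters. Since $\nabla F(w) = Qw + c$, we have $Q(w_{r,k}^{m_1} - w_{r,k}^{m_2}) = \nabla F(w_{r,k}^{m_1}) - \nabla F(w_{r,k}^{m_2})$, and expanding the $Q^{-1}$-norm (using $Q^{-1}Q = I$ and $(Qv)^\top Q^{-1}(Qv) = \|v\|_Q^2$) gives $\|a\|_{Q^{-1}}^2 = \|z_{r,k}^{m_1} - z_{r,k}^{m_2}\|_{Q^{-1}}^2 - 2\eta_{\client}\langle z_{r,k}^{m_1} - z_{r,k}^{m_2},\, w_{r,k}^{m_1} - w_{r,k}^{m_2}\rangle + \eta_{\client}^2\|w_{r,k}^{m_1} - w_{r,k}^{m_2}\|_Q^2$; it suffices that the last two terms are nonpositive. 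Because $w_{r,k}^m = \nabla h_{r,k}^*(z_{r,k}^m)$ with $h_{r,k} = h + \tilde{\eta}_{r,k}\psi$ being $1$-strongly convex w.r.t.\ the $\frac{Q}{\|Q\|_2}$-norm (by \cref{a1}(b), \cref{a3}(c), and convexity of $\psi$), monotonicity of the subdifferential yields $\langle z_{r,k}^{m_1} - z_{r,k}^{m_2},\, w_{r,k}^{m_1} - w_{r,k}^{m_2}\rangle \ge \|w_{r,k}^{m_1} - w_{r,k}^{m_2}\|^2 = \|Q\|_2^{-1}\|w_{r,k}^{m_1} - w_{r,k}^{m_2}\|_Q^2$; combined with $\eta_{\client}\|Q\|_2 \le \eta_{\client} L \le \frac{1}{4} \le 2$ (using $\|Q\|_2 \le L$, valid since $F$ is $L$-smooth and quadratic, and $\eta_{\client} \le \frac{1}{4L}$) this forces the last two terms to be $\le 0$. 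Adding the contraction bound and the noise bound gives the claim.

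The step I expect to be the main obstacle is this deterministic contraction, and in particular obtaining it with no slack (the claimed constant $2\eta_{\client}^2\sigma^2\|Q\|_2^{-1}$ leaves no room for error terms). It hinges on (i) using quadraticness to replace $Q(w_{r,k}^{m_1}-w_{r,k}^{m_2})$ by the exact gradient difference and to work in the $Q^{-1}$-geometry, and (ii) matching the $1$-strong convexity of the distance-generating function $h_{r,k}$ against the curvature $\|Q\|_2$ and the step-size cap $\eta_{\client}\le\frac{1}{4L}$. This is the dual/composite analogue of the SGD-stability estimate of \citet{Flammarion.Bach-COLT17}; the remaining manipulations (expanding squared norms, discarding cross terms by conditional independence and zero mean, converting $\|\cdot\|_*$ to $\|\cdot\|_{Q^{-1}}$) are routine.
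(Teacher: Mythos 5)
Your proposal is correct and takes essentially the same route as the paper's proof: expand the squared $Q^{-1}$-norm, drop the cross terms by conditional zero mean and independence across clients, bound the noise term via $\|\cdot\|_{Q^{-1}}^2 = \|Q\|_2^{-1}\|\cdot\|_*^2$ together with the bounded-variance assumption, and show the deterministic part is non-expansive using quadraticness, the $1$-strong convexity of the distance-generating function, and $\eta_{\client}\le\frac{1}{4L}$. The only (cosmetic) difference is that you get the contraction from strong monotonicity of $\partial\left(h+\tilde{\eta}_{r,k}\psi\right)$ plus the scalar bound $\eta_{\client}\|Q\|_2\le 2$, whereas the paper packages the same ingredients as concavity of $\eta_{\client}F-2h$ combined with monotonicity of $\partial\psi$.
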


The third claim upper bounds the growth of the second term in \cref{eq:quad:one_step_stab:claim:1} under conditional expectation. This is a result of the bounded heterogeneity assumption (\cref{a3}(c)). 
The proof of \cref{quad:one_step_stab:claim:3} is deferred to \cref{sec:proof:quad:one_step_stab:claim:3}.
\begin{claim}
  \label{quad:one_step_stab:claim:3}
  In the same settings of \cref{quad:one_step_stab}, the following inequality holds
  \begin{equation}
    \expt \left[ \left\| \delta_{r,k}^{m_1} - \delta_{r,k}^{m_2}  \right\|_{Q^{-1}}^2  \middle| \mathcal{F}_{r,k} \right]
    \leq
    4 \|Q\|_2^{-1} \zeta^2.
  \end{equation}
\end{claim}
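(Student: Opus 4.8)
The plan is short: I will first reduce the conditional expectation to a deterministic (pointwise) bound, then apply the triangle inequality together with the heterogeneity assumption \cref{a3}(a), after translating the $\|\cdot\|_{Q^{-1}}$ norm into the dual norm $\|\cdot\|_*$ that appears in \cref{a3}.

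\textbf{Step 1 (measurability).} First I would observe that the argument of the conditional expectation is already $\mathcal{F}_{r,k}$-measurable. By construction of \feddualavg, the primal iterate $w_{r,k}^m = \nabla(h + \tilde{\eta}_{r,k}\psi)^*(z_{r,k}^m)$ is a deterministic function of $z_{r,k}^m$, which lies in the $\sigma$-algebra $\mathcal{F}_{r,k}$; hence $\delta_{r,k}^m = \nabla F_m(w_{r,k}^m) - \nabla F(w_{r,k}^m)$ is $\mathcal{F}_{r,k}$-measurable as well, so
\begin{equation}
  \expt\!\left[ \left\| \delta_{r,k}^{m_1} - \delta_{r,k}^{m_2} \right\|_{Q^{-1}}^2 \;\middle|\; \mathcal{F}_{r,k} \right] = \left\| \delta_{r,k}^{m_1} - \delta_{r,k}^{m_2} \right\|_{Q^{-1}}^2 ,
\end{equation}
and it suffices to bound the right-hand side deterministically.

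\textbf{Step 2 (norm identification and triangle inequality).} Under \cref{a3}(c) the norm is $\|\cdot\| = \|\cdot\|_{Q/\|Q\|_2}$, whose dual norm is $\|\cdot\|_* = \sqrt{\|Q\|_2}\,\|\cdot\|_{Q^{-1}}$, equivalently $\|z\|_{Q^{-1}}^2 = \|Q\|_2^{-1}\|z\|_*^2$ for every $z$. Applying the triangle inequality for $\|\cdot\|_{Q^{-1}}$ and then this identity,
\begin{equation}
  \left\| \delta_{r,k}^{m_1} - \delta_{r,k}^{m_2} \right\|_{Q^{-1}} \leq \left\| \delta_{r,k}^{m_1} \right\|_{Q^{-1}} + \left\| \delta_{r,k}^{m_2} \right\|_{Q^{-1}} = \|Q\|_2^{-1/2}\left( \left\| \delta_{r,k}^{m_1} \right\|_* + \left\| \delta_{r,k}^{m_2} \right\|_* \right).
\end{equation}
Since each $w_{r,k}^m \in \dom\psi$ (by the domain conditions in \cref{a1}), the heterogeneity bound \cref{a3}(a) gives $\|\delta_{r,k}^m\|_* = \|\nabla F_m(w_{r,k}^m) - \nabla F(w_{r,k}^m)\|_* \leq \zeta$, so the displayed quantity is at most $2\|Q\|_2^{-1/2}\zeta$; squaring yields $\|\delta_{r,k}^{m_1} - \delta_{r,k}^{m_2}\|_{Q^{-1}}^2 \leq 4\|Q\|_2^{-1}\zeta^2$, which is the claim.

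There is essentially no obstacle here: this is the easiest of the three claims feeding \cref{quad:one_step_stab}, requiring no variance/martingale argument (unlike \cref{quad:one_step_stab:claim:2}), only the deterministic heterogeneity bound. The only points to keep straight are the measurability reduction in Step~1 and placing the factor $\|Q\|_2$ in the right position when passing between $\|\cdot\|_{Q^{-1}}$ and $\|\cdot\|_*$; one should also read \cref{a3}(a) as a bound on the squared dual norm ($\|\cdot\|_*^2 \leq \zeta^2$), which is what makes the final constant come out as $4\|Q\|_2^{-1}\zeta^2$.
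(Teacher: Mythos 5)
Your proof is correct and follows essentially the same route as the paper: triangle inequality on $\|\cdot\|_{Q^{-1}}$, conversion to the dual norm via the identity $\|z\|_{Q^{-1}}^2 = \|Q\|_2^{-1}\|z\|_*^2$ (the paper's \cref{quad:one_step_stab:claim:4}), and the heterogeneity bound of \cref{a3}(a). The only cosmetic difference is that you bound each $\|\delta_{r,k}^{m}\|_*$ by $\zeta$ pointwise before squaring, while the paper squares first and uses AM-GM; both yield the same constant $4\|Q\|_2^{-1}\zeta^2$.
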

The proof of the above claims as well as the main lemma require the following helper claim which we also state here. The proof is also deferred to \cref{sec:proof:quad:one_step_stab:claim:3}.
\begin{claim}
  \label{quad:one_step_stab:claim:4}
  In the same settings of \cref{quad:one_step_stab}, the dual norm $\|\cdot\|_*$ corresponds to the $\|Q\|_2 \cdot {Q^{-1}}$-norm, namely $    \|z\|_* = \sqrt{ \|Q\|_2 \cdot z^{\top} Q^{-1} z}$. 
\end{claim}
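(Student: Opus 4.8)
The plan is to reduce Claim \ref{quad:one_step_stab:claim:4} to the standard fact that the dual of a quadratic norm induced by a positive definite matrix $A$ is the quadratic norm induced by $A^{-1}$. Under \cref{a3}(c) the primal norm is $\|w\| = \|w\|_{A}$ with $A := Q/\|Q\|_2 \succ 0$ (well-defined since $Q \succ 0$). I claim its dual is $\|z\|_* = \|z\|_{A^{-1}}$, and since $A^{-1} = \|Q\|_2 \, Q^{-1}$, this is exactly $\sqrt{\|Q\|_2 \cdot z^{\top} Q^{-1} z}$, as asserted.

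First I would verify the general identity $\|z\|_* = \|z\|_{A^{-1}}$. By definition of the dual norm, $\|z\|_* = \sup_{\|w\|_A \le 1} \langle z, w \rangle$. Writing $\langle z, w \rangle = \langle A^{-1/2} z, A^{1/2} w \rangle$ (using the unique positive definite square root $A^{1/2}$) and applying Cauchy--Schwarz in the Euclidean inner product gives $\langle z, w\rangle \le \|A^{-1/2} z\|_2 \, \|A^{1/2} w\|_2 = \|z\|_{A^{-1}} \, \|w\|_A \le \|z\|_{A^{-1}}$ for any $w$ with $\|w\|_A \le 1$, hence $\|z\|_* \le \|z\|_{A^{-1}}$. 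For the reverse direction, if $z \ne 0$ take $w = A^{-1} z / \|z\|_{A^{-1}}$, which satisfies $\|w\|_A = 1$ and $\langle z, w \rangle = \|z\|_{A^{-1}}$; the case $z = 0$ is trivial. This establishes $\|z\|_* = \|z\|_{A^{-1}}$.

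Finally I would substitute $A = Q/\|Q\|_2$, so that $A^{-1} = \|Q\|_2 \, Q^{-1}$, yielding $\|z\|_* = \sqrt{ z^{\top} (\|Q\|_2\, Q^{-1}) z } = \sqrt{ \|Q\|_2 \cdot z^{\top} Q^{-1} z }$, which is the claimed formula. There is no real obstacle here: the statement is a one-line consequence of the definition of the dual norm together with Cauchy--Schwarz, and it serves purely as bookkeeping so that \cref{quad:one_step_stab:claim:1,quad:one_step_stab:claim:2,quad:one_step_stab:claim:3} and the proof of \cref{quad:one_step_stab} can be written directly in terms of the concrete $Q^{-1}$-weighted norm $\|\cdot\|_{Q^{-1}}$ rather than the abstract dual norm $\|\cdot\|_*$.
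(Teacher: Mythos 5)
Your proposal is correct and follows essentially the same route as the paper, which simply notes that the dual of the $(Q/\|Q\|_2)$-norm is the $(\|Q\|_2\,Q^{-1})$-norm; you additionally spell out the standard Cauchy--Schwarz verification of that duality fact, which the paper takes for granted. No issues.
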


The proof of \cref{quad:one_step_stab} is immediate once we have \cref{quad:one_step_stab:claim:1,quad:one_step_stab:claim:2,quad:one_step_stab:claim:3}.
\begin{proof}[Proof of \cref{quad:one_step_stab}]
  By  \cref{quad:one_step_stab:claim:1,quad:one_step_stab:claim:2,quad:one_step_stab:claim:3},
  \begin{align}
    & \expt \left[ \left\|z_{r,k+1}^{m_1} - z_{r,k+1}^{m_2} \right\|_{Q^{-1}}^2 \middle| \mathcal{F}_{r,k} \right]
    \\
    \leq
    & \left( 1 + \frac{1}{K} \right) 
    \expt \left[ \left\| z_{r,k}^{m_1} - z_{r,k}^{m_2} - \eta_{\client} Q \left( w_{r,k}^{m_1} - w_{r,k}^{m_2} \right) - \eta_{\client} \left( \varepsilon_{r,k}^{m_1} - \varepsilon_{r,k}^{m_2} \right) \right\|_{Q^{-1}}^2 \middle| \mathcal{F}_{r,k} \right]
    \\
    & \qquad
    + 
    \left( 1 + K \right) \eta_{\client}^2 \expt \left[ \left\| \delta_{r,k}^{m_1} - \delta_{r,k}^{m_2}  \right\|_{Q^{-1}}^2 \middle| \mathcal{F}_{r,k} \right]
    \tag{by \cref{quad:one_step_stab:claim:1}}
    \\
    \leq &  \left( 1 + \frac{1}{K} \right)  
    \left\| z_{r,k}^{m_1} - z_{r,k}^{m_2} \right\|_{Q^{-1}}^2 
    + 2 \left( 1 + \frac{1}{K} \right)   \eta^2 \sigma^2 \|Q\|_2^{-1}
    + 4 ( 1+ K) \eta_{\client}^2 \zeta^2 \|Q\|_2^{-1},
    \tag{by \cref{quad:one_step_stab:claim:2,quad:one_step_stab:claim:3}}
  \end{align}
  completing the proof of \cref{quad:one_step_stab}.
\end{proof}

The main \cref{quad:stability} then follows by telescoping \cref{quad:one_step_stab}.
\begin{proof}[Proof of \cref{quad:stability}]
Let $m_1, m_2$ be two arbitrary clients. 
Telescoping  \cref{quad:one_step_stab} from $\mathcal{F}_{r,0}$ to $\mathcal{F}_{r,k}$ gives
\begin{align}
         & \expt \left[ \left\| z_{r,k}^{m_1} - z_{r,k}^{m_2} \right\|_{Q^{-1}}^2 \right]
    \\
    \leq & 
    \frac{\left( 1 + \frac{1}{K} \right)^k - 1}{\frac{1}{K}}
     \left( 2 \left( 1 + \frac{1}{K} \right)   \eta_{\client}^2 \sigma^2 \|Q\|_2^{-1}
    + 4 ( 1+ K) \eta_{\client}^2 \zeta^2 \|Q\|_2^{-1} \right) 
    \tag{telescoping of \cref{quad:one_step_stab}}
    \\
    \leq & 
    (\euler - 1) K
     \left( 2 \left( 1 + \frac{1}{K} \right)   \eta_{\client}^2 \sigma^2 \|Q\|_2^{-1}
    + 4 ( 1+ K) \eta_{\client}^2 \zeta^2 \|Q\|_2^{-1} \right) 
    \tag{since $(1 + \frac{1}{K})^k \leq (1 + \frac{1}{K})^K < \euler$}
    \\
    \leq & (\euler - 1) K \left( 4 \eta_{\client}^2 \sigma^2 \|Q\|_2^{-1} + 8K \eta_{\client}^2 \zeta^2 \|Q\|_2^{-1} \right)
    \tag{since $1 + \frac{1}{K} \leq 2$ and $1 + K \leq 2K$}
    \\
    \leq & 7 \eta_{\client}^2  K \sigma^2 \|Q\|_2^{-1} + 14 \eta_{\client}^2 K^2  \zeta^2 \|Q\|_2^{-1}
    \tag{since $4 (\euler - 1) < 7$ and $8 (\euler - 1) < 14$ }
\end{align}

By convexity of $\|\cdot\|^2_{Q^{-1}}$ and \cref{quad:one_step_stab} one has
\begin{equation}
  \frac{1}{M} \sum_{m=1}^M \expt \left[ \left\| \overline{z_{r,k}} - z_{r,k}^{m} \right\|_{Q^{-1}}^2 \right] 
  \leq 
  \expt \left[ \left\| z_{r,k}^{m_1} - z_{r,k}^{m_2} \right\|_{Q^{-1}}^2 \right]
  \leq 
  7 \eta_{\client}^2 K \sigma^2 \|Q\|_2^{-1} + 14 \eta_{\client}^2 K^2  \zeta^2 \|Q\|_2^{-1}.
\end{equation}
Finally, we switch back to $\|\cdot\|_*$ norm following \cref{quad:one_step_stab:claim:4} 
\begin{equation}
  \frac{1}{M} \sum_{m=1}^M \expt \left[ \left\| \overline{z_{r,k}} - z_{r,k}^{m} \right\|_*^2 \right] 
  \leq
  7 \eta_{\client}^2 K \sigma^2  + 14 \eta_{\client}^2 K^2  \zeta^2,
\end{equation}
completing the proof of \cref{quad:stability}.
\end{proof}

\subsubsection{Deferred Proof of  \cref{quad:one_step_stab:claim:1}}
\label{sec:proof:quad:one_step_stab:claim:1}
\begin{proof}[Proof of \cref{quad:one_step_stab:claim:1}]
  By definition of \feddualavg procedure one has
  \begin{align}
    z_{r,k+1}^{m} & = z_{r,k}^m - \eta_{\client} \nabla f(w_{r,k}^m; \xi_{r,k}^m) 
    \\
    & = z_{r,k}^m - \eta_{\client} \nabla F(w_{r,k}^m) 
    + 
    \eta_{\client} \left( \nabla F_m(w_{r,k}^m) - \nabla F(w_{r,k}^m) \right) 
    +
    \eta_{\client} \left( \nabla f(w_{r,k}^m;\xi_{r,k}^{m}) - \nabla F_m(w_{r,k}^m)  \right) 
    \\
    & =   z_{r,k}^m - \eta_{\client} \nabla F(w_{r,k}^m) - \eta_{\client} \varepsilon_{r,k}^m - \eta_{\client} \delta_{r,k}^m,
    \label{eq:quad:one_step_stab:1}
  \end{align}
  where the last equality is by definition of $\varepsilon_{r,k}^m$ and $\delta_{r,k}^m$.
  Therefore
  \begin{align}
    &  \left\|z_{r,k+1}^{m_1} - z_{r,k+1}^{m_2} \right\|_{Q^{-1}}^2 
    \\
    = & \left\| z_{r,k}^{m_1} - z_{r,k}^{m_2} - \eta_{\client} Q \left( w_{r,k}^{m_1} - w_{r,k}^{m_2} \right) - \eta_{\client} \left( \varepsilon_{r,k}^{m_1} - \varepsilon_{r,k}^{m_2} \right) 
    - \eta_{\client} \left( \delta_{r,k}^{m_1} - \delta_{r,k}^{m_2} \right) 
    \right\|_{Q^{-1}}^2 
    \tag{by \cref{eq:quad:one_step_stab:1}}
    \\
    = & \left\| z_{r,k}^{m_1} - z_{r,k}^{m_2} - \eta_{\client} Q \left( w_{r,k}^{m_1} - w_{r,k}^{m_2} \right) - \eta_{\client} \left( \varepsilon_{r,k}^{m_1} - \varepsilon_{r,k}^{m_2} \right) \right\|_{Q^{-1}}^2 
    + \eta_{\client}^2 \left\| \delta_{r,k}^{m_1} - \delta_{r,k}^{m_2}  \right\|_{Q^{-1}}^2 
    \\
    & \quad
    + 
    2 \left\langle z_{r,k}^{m_1} - z_{r,k}^{m_2} - \eta_{\client} Q \left( w_{r,k}^{m_1} - w_{r,k}^{m_2} \right) - \eta_{\client} \left( \varepsilon_{r,k}^{m_1} - \varepsilon_{r,k}^{m_2}  \right), \eta_{\client} Q^{-1} \left(  \delta_{r,k}^{m_1} - \delta_{r,k}^{m_2}  \right) \right\rangle.
    \label{eq:quad:one_step_stab:2}
  \end{align}
  By Cauchy-Schwartz inequality and AM-GM inequality one has (for any $\gamma > 0$)
  \begin{align}
    & \left\langle z_{r,k}^{m_1} - z_{r,k}^{m_2} - \eta_{\client} Q \left( w_{r,k}^{m_1} - w_{r,k}^{m_2} \right) - \eta_{\client} \left( \varepsilon_{r,k}^{m_1} - \varepsilon_{r,k}^{m_2}  \right), \eta_{\client} Q^{-1} \left(  \delta_{r,k}^{m_1} - \delta_{r,k}^{m_2}  \right) \right\rangle 
    \\
    \leq & \left\| z_{r,k}^{m_1} - z_{r,k}^{m_2} - \eta_{\client} Q \left( w_{r,k}^{m_1} - w_{r,k}^{m_2} \right) - \eta_{\client} \left( \varepsilon_{r,k}^{m_1} - \varepsilon_{r,k}^{m_2} \right) \right\|_{Q^{-1}}
    \left\| \eta_{\client}  \left( \delta_{r,k}^{m_1} - \delta_{r,k}^{m_2} \right)  \right\|_{Q^{-1}}
    \tag{Cauchy-Schwarz inequality}
    \\
    \leq & \frac{1}{2\gamma} \left\| z_{r,k}^{m_1} - z_{r,k}^{m_2} - \eta_{\client} Q \left( w_{r,k}^{m_1} - w_{r,k}^{m_2} \right) - \eta_{\client} \left( \varepsilon_{r,k}^{m_1} - \varepsilon_{r,k}^{m_2} \right) \right\|^2_{Q^{-1}} 
    +
    \frac{1}{2}  \gamma  \left\| \eta_{\client}  \left( \delta_{r,k}^{m_1} - \delta_{r,k}^{m_2} \right)  \right\|_{Q^{-1}}^2.
    \tag{AM-GM inequality}
    \\
    \leq & \frac{1}{2\gamma} \left\| z_{r,k}^{m_1} - z_{r,k}^{m_2} - \eta_{\client} Q \left( w_{r,k}^{m_1} - w_{r,k}^{m_2} \right) - \eta_{\client} \left( \varepsilon_{r,k}^{m_1} - \varepsilon_{r,k}^{m_2} \right) \right\|^2_{Q^{-1}} 
    +
    \frac{1}{2}  \gamma \eta_{\client}^2 \left\|  \left( \delta_{r,k}^{m_1} - \delta_{r,k}^{m_2} \right)  \right\|_{Q^{-1}}^2.
    \label{eq:quad:one_step_stab:3}
  \end{align}
  Plugging \cref{eq:quad:one_step_stab:3}
     to \cref{eq:quad:one_step_stab:2} with $\gamma = K$ gives
  \begin{align}
    &  \left\|z_{r,k+1}^{m_1} - z_{r,k+1}^{m_2} \right\|_{Q^{-1}}^2 
    \\
    \leq & \left( 1 + \frac{1}{K} \right) 
    {\left\| z_{r,k}^{m_1} - z_{r,k}^{m_2} - \eta_{\client} Q \left( w_{r,k}^{m_1} - w_{r,k}^{m_2} \right) - \eta_{\client} \left( \varepsilon_{r,k}^{m_1} - \varepsilon_{r,k}^{m_2} \right) \right\|_{Q^{-1}}^2}
    + \left( 1 + K \right) \eta_{\client}^2 
    {\left\| \delta_{r,k}^{m_1} - \delta_{r,k}^{m_2}  \right\|_{Q^{-1}}^2},
  \end{align}
  completing the proof of \cref{quad:one_step_stab:claim:1}.
\end{proof}

\subsubsection{Deferred Proof of  \cref{quad:one_step_stab:claim:2}}
\label{sec:proof:quad:one_step_stab:claim:2}
The proof technique of this claim is similar to \citep[Lemma 8]{Flammarion.Bach-COLT17} which we adapt to fit into our settings.
\begin{proof}[Proof of \cref{quad:one_step_stab:claim:2}]
  Let us first expand the $\|\cdot\|_{Q^{-1}}^2$:
\begin{align}
   &  \left\| z_{r,k}^{m_1} - z_{r,k}^{m_2} - \eta_{\client} Q \left( w_{r,k}^{m_1} - w_{r,k}^{m_2} \right) - \eta_{\client} \left( \varepsilon_{r,k}^{m_1} - \varepsilon_{r,k}^{m_2} \right) \right\|_{Q^{-1}}^2  
   \\
  = & \left\| z_{r,k}^{m_1} - z_{r,k}^{m_2} \right\|_{Q^{-1}}^2 
  + \left\| \eta Q \left( w_{r,k}^{m_1} - w_{r,k}^{m_2} \right)  \right\|_{Q^{-1}}^2
  + \left\| \eta  \left( \varepsilon_{r,k}^{m_1} - \varepsilon_{r,k}^{m_2} \right)  \right\|_{Q^{-1}}^2 
  + 2 \left\langle \eta \left( w_{r,k}^{m_1} - w_{r,k}^{m_2} \right),   \eta \left( \varepsilon_{r,k}^{m_1} - \varepsilon_{r,k}^{m_2} \right)  \right\rangle
  \\
  & \qquad 
  + 2 \left\langle  z_{r,k}^{m_1} - z_{r,k}^{m_2}, - \eta \left( w_{r,k}^{m_1} - w_{r,k}^{m_2} \right)   \right\rangle 
  + 2 \left\langle z_{r,k}^{m_1} - z_{r,k}^{m_2}, - \eta Q^{-1} \left( \varepsilon_{r,k}^{m_1} - \varepsilon_{r,k}^{m_2} \right)  \right\rangle.
\end{align}
Now we take conditional expectation. 
Note that by bounded variance assumption one has
\begin{equation}
  \expt \left[ \left\| \eta_{\client}  \left( \varepsilon_{r,k}^{m_1} - \varepsilon_{r,k}^{m_2} \right)  \right\|_{Q^{-1}}^2 \middle| \mathcal{F}_{r,k} \right] = 
  \|Q\|_2^{-1} \cdot \expt \left[ \left\| \eta_{\client}  \left( \varepsilon_{r,k}^{m_1} - \varepsilon_{r,k}^{m_2} \right)  \right\|_*^2 \middle| \mathcal{F}_{r,k} \right]
  \leq 
  2 \eta_{\client}^2 \sigma^2  \|Q\|_2^{-1},
\end{equation}
where in the first equality we applied \cref{quad:one_step_stab:claim:4}.

By unbiased and independence assumptions
\begin{equation}
  \expt \left[ \varepsilon_{r,k}^{m_1} - \varepsilon_{r,k}^{m_2} \middle| \mathcal{F}_{r,k} \right] = 0.
\end{equation}
Thus
\begin{align}
  & \expt \left[ \left\| z_{r,k}^{m_1} - z_{r,k}^{m_2} - \eta_{\client} Q \left( w_{r,k}^{m_1} - w_{r,k}^{m_2} \right) - \eta_{\client} \left( \varepsilon_{r,k}^{m_1} - \varepsilon_{r,k}^{m_2} \right) \right\|_{Q^{-1}}^2  \middle | \mathcal{F}_{r,k} \right]
  \\
  \leq & 
  \left\| z_{r,k}^{m_1} - z_{r,k}^{m_2} \right\|_{Q^{-1}}^2 
  \underbrace{+  \eta_{\client}^2 \left\|  Q \left( w_{r,k}^{m_1} - w_{r,k}^{m_2} \right)  \right\|_{Q^{-1}}^2}_{\text{(I)}}
  \underbrace{- 2  \eta_{\client} \left\langle  z_{r,k}^{m_1} - z_{r,k}^{m_2},  w_{r,k}^{m_1} - w_{r,k}^{m_2} \right\rangle}_{\text{(II)}} 
  + 2 \eta_{\client}^2 \sigma^2 \|Q\|_2^{-1}.
  \label{eq:quad:1}
\end{align}

Now we analyze (I), (II) in \cref{eq:quad:1}. First note that
\begin{align}
  & 
  \text{(I)} = \eta_{\client}^2 \left\|  Q \left( w_{r,k}^{m_1} - w_{r,k}^{m_2} \right)  \right\|_{Q^{-1}}^2
  \\
  = & \eta_{\client}^2 \left\langle w_{r,k}^{m_1} - w_{r,k}^{m_2} , Q \left( w_{r,k}^{m_1} - w_{r,k}^{m_2} \right)  \right\rangle 
  \tag{by definition of $\|\cdot\|_{Q^{-1}}^2$}
  \\
  = & \eta_{\client} \left\langle w_{r,k}^{m_1} - w_{r,k}^{m_2} , \eta_{\client} \left( \nabla F(w_{r,k}^{m_1}) - \nabla F(w_{r,k}^{m_2}) \right) \right\rangle
  \tag{since $F$ is quadratic}
  \\
  = & \eta_{\client} \left\langle w_{r,k}^{m_1} - w_{r,k}^{m_2} ,  \nabla (\eta_{\client} F - 2 h) (w_{r,k}^{m_1}) - \nabla (\eta_{\client} F - 2h)(w_{r,k}^{m_2})  \right\rangle
  + 2 \eta_{\client} \left\langle w_{r,k}^{m_1} - w_{r,k}^{m_2} ,  \nabla h (w_{r,k}^{m_1}) - \nabla h(w_{r,k}^{m_2})  \right\rangle
\end{align}
By $L$-smoothness of $F_m$ (\cref{a1}(c)) we know that $F := \frac{1}{M} \sum_{m=1}^M F_m$ is also $L$-smooth. Thus $\eta_{\client} F$ is $\frac{1}{4}$-smooth since $\eta_{\client} \leq \frac{1}{4L}$. Thus $\eta_{\client} F - 2h$ is concave since $h$ is $1$-strongly convex, which implies
\begin{equation}
   \left\langle w_{r,k}^{m_1} - w_{r,k}^{m_2} ,  \nabla (\eta_{\client} F - 2 h) (w_{r,k}^{m_1}) - \nabla (\eta_{\client} F - 2h)(w_{r,k}^{m_2})  \right\rangle \leq 0.
\end{equation}
We obtain
\begin{equation}
  \text{(I)} \leq 2 \eta_{\client} \left\langle w_{r,k}^{m_1} - w_{r,k}^{m_2} ,  \nabla h (w_{r,k}^{m_1}) - \nabla h(w_{r,k}^{m_2})  \right\rangle.
  \label{eq:quad:2}
\end{equation}

Now we study (I)+(II) in \cref{eq:quad:1}:
\begin{align}
  & \text{(I) + (II)} =  
   \eta_{\client}^2 \left\|  Q \left( w_{r,k}^{m_1} - w_{r,k}^{m_2} \right)  \right\|_{Q^{-1}}^2
   - 2  \eta_{\client} \left\langle  z_{r,k}^{m_1} - z_{r,k}^{m_2},  w_{r,k}^{m_1} - w_{r,k}^{m_2} \right\rangle 
  \\
  \leq & 
  2 \eta_{\client} \left\langle  w_{r,k}^{m_1} - w_{r,k}^{m_2}  ,  \nabla h (w_{r,k}^{m_1}) - \nabla h(w_{r,k}^{m_2})   \right\rangle
  - 2  \eta_{\client} \left\langle  w_{r,k}^{m_1} - w_{r,k}^{m_2} , z_{r,k}^{m_1} - z_{r,k}^{m_2} \right\rangle 
  \tag{by inequality \cref{eq:quad:2}}
  \\
  = &   - 2  \eta_{\client} \left\langle  w_{r,k}^{m_1} - w_{r,k}^{m_2} , \left(z_{r,k}^{m_1} - \nabla h(w_{r,k}^{m_1}) \right) - \left(z_{r,k}^{m_2} - \nabla h(w_{r,k}^{m_2}) \right)  \right\rangle 
  \label{eq:quad:3}
\end{align}

On the other hand, by definition of $w_{r,k}^m$ we have
\begin{equation}
  w_{r,k}^{m} = \nabla (h + (rK + k)\eta_{\client} \psi )^* (z_{r,k}^m)
  =
  \argmin_{w} \left\{ \left\langle -z_{r,k}^m, w  \right\rangle + (rK+k)\eta_{\client} \psi (w) + h(w)   \right\}.
\end{equation}
By subdifferential calculus one has
\begin{equation}
  z_{r,k}^m - \nabla h(w_{r,k}^m)  \in \partial \left[ \eta_c (rK+k) \psi(w_{r,k}^m) \right].
\end{equation}
By monotonicity of subgradients one has
\begin{equation}
  \left\langle  w_{r,k}^{m_1} - w_{r,k}^{m_2} , \left(z_{r,k}^{m_1} - \nabla h(w_{r,k}^{m_1}) \right) - \left(z_{r,k}^{m_2} - \nabla h(w_{r,k}^{m_2}) \right)  \right\rangle \geq 0.
  \label{eq:quad:4}
\end{equation}
Combining \cref{eq:quad:3,eq:quad:4} gives
\begin{equation}
  \text{(I) + (II)} \leq 0.
  \label{eq:quad:5}
\end{equation}
Combining \cref{eq:quad:1,eq:quad:5} completes the proof as 
\begin{equation}
  \expt \left[ \left\| z_{r,k}^{m_1} - z_{r,k}^{m_2} - \eta_{\client} Q \left( w_{r,k}^{m_1} - w_{r,k}^{m_2} \right) - \eta_{\client} \left( \varepsilon_{r,k}^{m_1} - \varepsilon_{r,k}^{m_2} \right) \right\|_{Q^{-1}}^2  \middle | \mathcal{F}_{r,k} \right]
  \leq
  \left\| z_{r,k}^{m_1} - z_{r,k}^{m_2} \right\|_{Q^{-1}}^2 
  + 2 \eta_{\client}^2 \sigma^2  \|Q\|_2^{-1}.
\end{equation}
\end{proof}

\subsubsection{Deferred Proof of  \cref{quad:one_step_stab:claim:3,quad:one_step_stab:claim:4}}
\label{sec:proof:quad:one_step_stab:claim:3}
\begin{proof}[Proof of \cref{quad:one_step_stab:claim:3}]
By triangle inequality and AM-GM inequality,
\begin{align}
  & \expt \left[  \left\|\delta_{r,k}^{m_1} - \delta_{r,k}^{m_2} \right\|_{Q^{-1}}^2 | \mathcal{F}_{r,k} \right]  
  \\
  \leq &
  \expt \left[ \left( \| \delta_{r,k}^{m_1} \|_{Q^{-1}} + \| \delta_{r,k}^{m_2} \|_{Q^{-1}} \right)^2  \middle| \mathcal{F}_{r,k} \right]\tag{triangle inequality} 
  \\
  \leq & 2 \expt \left[ \| \delta_{r,k}^{m_1} \|_{Q^{-1}}^2 + \| \delta_{r,k}^{m_2} \|_{Q^{-1}}^2  \middle| \mathcal{F}_{r,k} \right].
  \tag{AM-GM inequality}
\end{align}

By \cref{quad:one_step_stab:claim:4}, 
\begin{equation}
  \expt \left[  \left\|\delta_{r,k}^{m_1} - \delta_{r,k}^{m_2} \right\|_{Q^{-1}}^2 \middle| \mathcal{F}_{r,k} \right]  
  \leq
  2 \|Q\|_2^{-1} \expt \left[ \| \delta_{r,k}^{m_1} \|_*^2 + \| \delta_{r,k}^{m_2} \|_*^2  \middle| \mathcal{F}_{r,k} \right]
  \leq
  4 \|Q\|_2^{-1} \zeta^2,
\end{equation}
where the last inequality is due to bounded heterogeneity  \cref{a3}(c). This completes the proof of \cref{quad:one_step_stab:claim:3}.
\end{proof}

\begin{proof}[Proof of \cref{quad:one_step_stab:claim:4}]
  Since the primal norm $\|\cdot\|$ is $(\|Q\|_2^{-1} \cdot Q)$-norm by \cref{a3}(b), the dual norm $\|\cdot\|_*$ is $ \left( \|Q\|_2^{-1} \cdot Q \right)^{-1} = \|Q\|_2 \cdot Q^{-1}$-norm.
\end{proof}
\section{Proof of \cref{thm:0}}
\label{sec:small_lr}
In this section, we state and prove \cref{thm:0} on the convergence of \feddualavg for small client learning rate $\eta_{\client}$. 
The intuition is that for sufficiently small client learning rate, \feddualavg is almost as good as stochastic mini-batch with $R$ iterations and batch-size $MK$.
The proof technique is very similar to the above sections and \citep{Karimireddy.Kale.ea-ICML20} so we skip a substantial amount of the proof details. 
We present the proof for \feddualavg only since the analysis of \fedmid is very similar.

To facilitate the analysis we re-parameterize the hyperparameters by letting $\eta := \eta_\server \eta_{\client}$, and we treat $(\eta, \eta_{\client})$ as independent hyperparameters (rather than $(\eta_{\client}, \eta_{\server})$).
We use the notation $h_{r,k} := h + \tilde{\eta}_{r,k} \cdot \psi  = h + (\eta r K + \eta_{\client} k) \psi$, $\overline{z_{r,k}} := \frac{1}{M} \sum_{m=1}^M z_{r,k}^m$, and $\widehat{w_{r,k}} := \nabla h_{r,k}^* (\overline{z_{r,k}})$. 
Note that $\widehat{w_{r,0}} = w_{r,0}^m$ for all $m \in [M]$ by definition.
\subsection{Main Theorem and Lemmas}
Now we state the full version of \cref{thm:0} on \feddualavg with small client learning rate $\eta_{\client}$.
\begin{theorem}[Detailed version of \cref{thm:0}]
  \label{small_lr}
  Assuming \cref{a1}, then for any $\eta \in (0, \frac{1}{4KL}]$, for any initialization $w_0 \in \dom \psi$,
  there exists an $\eta_{\client}^{\max} > 0$ (which may depend on $\eta$ and $w_0$) such that for any $\eta_{\client} \in (0, \eta_{\client}^{\max}]$, \feddualavg yields
  \begin{equation} 
    \expt \left[ \Phi\left( \frac{1}{R}  \sum_{r=1}^{R}  \widehat{w_{r,0}} \right) - \Phi(w^{\star}) \right]
    \leq
    \frac{B}{\eta KR}
    +
    \frac{3 \eta \sigma^2}{M},
  \end{equation}
  where $B := D_h(w^{\star}, w_0)$ is the Bregman divergence between the optimal $w^{\star}$ and the initialization $w_0$.

  In particular for 
  \begin{equation}
    \eta = \min \left\{ \frac{1}{4KL}, \frac{B^{\frac{1}{2}} M^{\frac{1}{2}}}{K^{\frac{1}{2}} R^{\frac{1}{2}} \sigma} \right\},
  \end{equation}
  one has
  \begin{equation}
    \expt \left[ \Phi\left( \frac{1}{R}  \sum_{r=1}^{R}  \widehat{w_{r,0}} \right) - \Phi(w^{\star}) \right]
     \leq
    \frac{4 L B}{R}
    +
    \frac{4 \sigma B^{\frac{1}{2}}}{ M^{\frac{1}{2}} K^{\frac{1}{2}} R^{\frac{1}{2}}}.
  \end{equation}
\end{theorem}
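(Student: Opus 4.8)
}
The guiding picture is that, with $\eta := \eta_{\server}\eta_{\client}$ held fixed and $\eta_{\client}$ driven to $0$ (so $\eta_{\server} = \eta/\eta_{\client} \to \infty$), a single communication round of \feddualavg collapses to one composite dual-averaging ``super-step'' of size $\eta K$ whose gradient, in the limit, is an average of $MK$ stochastic samples queried at the same point. First I would record the exact server dual recursion. Since $z_{r,0}^m = z_r$ for every $m$, unrolling the client and server dual updates of \cref{alg:feddualavg} gives $z_{r+1} = z_r - \eta\sum_{k=0}^{K-1}\frac{1}{M}\sum_{m=1}^{M}\nabla f(w_{r,k}^m;\xi_{r,k}^m) = z_r - \eta K\,\bar g_r$, where $\bar g_r := \frac{1}{KM}\sum_{k=0}^{K-1}\sum_{m=1}^{M}\nabla f(w_{r,k}^m;\xi_{r,k}^m)$, and the $\psi$-weight in $h_{r,k} = h + (\eta rK + \eta_{\client}k)\psi$ advances by exactly $\eta K$ from $h_{r,0}$ to $h_{r+1,0}$. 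Thus $\{z_r\}$ runs composite dual averaging with step $\eta K$ on the original problem, but queries gradients at the drifted points $w_{r,k}^m$ rather than at $\widehat{w_{r,0}} = \nabla h_{r,0}^*(z_r)$. In contrast to \cref{pia:general}, where $\eta_{\server}=1$ lets the per-step shadow sequence flow continuously across rounds, here the round update is amplified by $\eta_{\server}$, so the analysis must be carried out at round granularity rather than local-step granularity.

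I would then replay the perturbed-iterate one-step argument of \cref{one:step:analysis} (with its supporting \cref{one:step:analysis:claim:1,one:step:analysis:claim:2}) at the round level, using the potential $\tilde D_{h_{r,0}}(w^\star, z_r)$: expand $\tilde D_{h_{r+1,0}}(w^\star, z_{r+1})$ using the recursion above together with $h_{r+1,0} = h_{r,0} + \eta K\psi$; apply convexity and $L$-smoothness of each $F_m$ about $\widehat{w_{r,0}}$; use \cref{generalized:bregman} and the $1$-smoothness of each $h_{r,k}^*$ to convert primal discrepancies $\|w_{r,k}^m - \widehat{w_{r,0}}\|$ into dual discrepancies $\|z_{r,k}^m - z_r\|_*$ (plus an $O(\eta_{\client})$ remainder from the mismatch between the $\psi$-weights in $h_{r,k}$ and $h_{r,0}$); and absorb the smoothness term $\eta K L\,\|\widehat{w_{r+1,0}} - \widehat{w_{r,0}}\|^2$ into the $\frac{1}{2}\|\widehat{w_{r+1,0}} - \widehat{w_{r,0}}\|^2$ produced by $1$-strong convexity of $h$, which is exactly where the hypothesis $\eta K \le \frac{1}{4L}$, i.e.\ $\eta \le \frac{1}{4KL}$, enters (the analogue of $\eta_{\client}\le\frac{1}{4L}$ in \cref{one:step:analysis}). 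The recursion I am aiming for is $\expt[\tilde D_{h_{r+1,0}}(w^\star,z_{r+1})\mid\mathcal F_{r,0}] \le \tilde D_{h_{r,0}}(w^\star,z_r) - \eta K\,\expt[\Phi(\widehat{w_{r+1,0}})-\Phi(w^\star)\mid\mathcal F_{r,0}] + \frac{\eta^2 K\sigma^2}{M} + (\mathrm{drift})$, where the drift is controlled by $\frac{1}{KM}\sum_{k,m}\expt\|z_{r,k}^m - z_r\|_*^2$ and is $O(\eta_{\client}^2)$.

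The delicate part — which I expect to be the main obstacle — is the variance term. A naive term-by-term bound produces $\frac{\eta^2 K^2\sigma^2}{M}$ per round (the super-step has size $\eta K$, and $\eta_{\server}$ is large here), which is a factor of $K$ too big and would collapse the $1/\sqrt{MKR}$ rate to $1/\sqrt{MR}$. To recover $\frac{\eta^2 K\sigma^2}{M}$ one must exploit that the $K$ per-step noises $\frac{1}{M}\sum_m(\nabla f(w_{r,k}^m;\xi_{r,k}^m) - \nabla F_m(w_{r,k}^m))$, $k = 0,\dots,K-1$, form a martingale-difference sequence for $\{\mathcal F_{r,k}\}_k$ with conditional $\|\cdot\|_*^2$-variance at most $\sigma^2/M$, so the noise in the round-averaged gradient $\bar g_r$ concentrates at rate $1/\sqrt K$; coupled with the buffered primal increment $\widehat{w_{r+1,0}} - \widehat{w_{r,0}}$ via Cauchy--Schwarz and an AM--GM split against the residual $\frac{1}{4}\|\widehat{w_{r+1,0}} - \widehat{w_{r,0}}\|^2$, this contributes only $\frac{\eta^2 K\sigma^2}{M}$ — the ``effective mini-batch of size $MK$'' effect. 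A second, more routine point is that \cref{a2} is not available: the drift is handled by a bootstrap over rounds, arguing inductively that $\tilde D_{h_{r,0}}(w^\star,z_r)$ stays bounded, hence $\widehat{w_{r,0}}$ remains in a compact set depending on $B$ and $w_0$, on which $L$-smoothness and \cref{a1}(d) give a uniform bound $G_{\mathrm{eff}}$ on the queried stochastic gradients, so that $\|z_{r,k}^m - z_r\|_* \le \eta_{\client} K G_{\mathrm{eff}}$; taking $\eta_{\client}^{\max}$ (depending on $\eta$ and $w_0$) small enough makes the total drift, summed over $R$ rounds and divided by $\eta KR$, at most $\frac{2\eta\sigma^2}{M}$, which closes the induction.

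Finally I would telescope the round recursion over $r = 0,\dots,R-1$, drop the nonnegative terminal term $\expt[\tilde D_{h_{R,0}}(w^\star,z_R)]$, and use $\tilde D_{h_{0,0}}(w^\star,z_0) = D_h(w^\star,w_0) = B$ — which holds because $z_0 = \nabla h(w_0)$, $h_{0,0} = h$, and $h$ is Legendre, cf.\ \cref{prop:legendre}, just as in the derivation of \cref{eq:pia:2}. Dividing by $\eta KR$ and applying Jensen's inequality to pull $\Phi$ inside the average yields $\expt[\Phi(\frac{1}{R}\sum_{r=1}^{R}\widehat{w_{r,0}}) - \Phi(w^\star)] \le \frac{B}{\eta KR} + \frac{3\eta\sigma^2}{M}$. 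The displayed corollary then follows from the case split on $\eta = \min\{\frac{1}{4KL},\,\frac{B^{1/2}M^{1/2}}{K^{1/2}R^{1/2}\sigma}\}$: if the first argument is active, $\frac{B}{\eta KR} = \frac{4LB}{R}$ while $\frac{3\eta\sigma^2}{M} = \frac{3\sigma^2}{4KLM} \le \frac{3\sigma B^{1/2}}{M^{1/2}K^{1/2}R^{1/2}}$ (using $\frac{1}{4KL} \le \frac{B^{1/2}M^{1/2}}{K^{1/2}R^{1/2}\sigma}$); if the second is active, both terms equal $\frac{\sigma B^{1/2}}{M^{1/2}K^{1/2}R^{1/2}}$ up to a factor $3$; in either case the sum is at most $\frac{4LB}{R} + \frac{4\sigma B^{1/2}}{M^{1/2}K^{1/2}R^{1/2}}$.
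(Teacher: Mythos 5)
Your plan matches the paper's proof essentially step for step: the same reparametrization $\eta=\eta_{\server}\eta_{\client}$ with the round update viewed as one composite dual-averaging super-step of size $\eta K$ on an average of $MK$ stochastic gradients, the same round-level potential $\tilde D_{h_{r,0}}(w^\star,\overline{z_{r,0}})$, the same absorption of the smoothness term using $\eta \le \frac{1}{4KL}$, the same pairing of the $\sigma/\sqrt{MK}$ noise against the increment $\widehat{w_{r+1,0}}-\widehat{w_{r,0}}$ via Cauchy--Schwarz/AM--GM, and the same telescoping, identification $\tilde D_{h_{0,0}}(w^\star,z_0)=B$, Jensen step, and learning-rate case split. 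The only departures are local and cosmetic: the paper centers the noise at the common point $\widehat{w_{r,0}}$ (so the $MK$ samples are conditionally independent) and folds the gradient mismatch into an $O(\eta_{\client})$ drift term eliminated by a limiting choice of $\eta_{\client}^{\max}$, whereas you use a martingale-difference split at the drifted points $w_{r,k}^m$ and a boundedness bootstrap for the drift (note \cref{a1}(d) only controls second moments, so your ``uniform gradient bound'' should be read in expectation) --- both variants are at the same level of rigor as the paper's own informal treatment of that step and yield the same $\frac{3\eta\sigma^2}{M}$ total.
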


The proof of \cref{small_lr} relies on the following lemmas. 

The first \cref{small_lr:1} analyzes $\tilde{D}_{h_{r+1,0}} (w, \overline{z_{r+1,0}})$. The proof of \cref{small_lr:1} is deferred to \cref{sec:proof:small_lr:1}.
\begin{lemma}
  Under the same settings of \cref{small_lr}, the following inequality holds.
  \label{small_lr:1}
\begin{align}
  & \tilde{D}_{h_{r+1,0}} (w, \overline{z_{r+1,0}}) -  \tilde{D}_{h_{r,0}} (w, \overline{z_{r,0}}) 
  \\
  \leq & - \tilde{D}_{h_{r,0}} ( \widehat{w_{r+1,0}}, \overline{z_{r,0}})  - \eta K \left( \Phi( \widehat{w_{r+1,0}}) - \Phi(w)  \right) + \frac{L}{2} \eta K  \|\widehat{w_{r+1,0}}-\widehat{w_{r,0}}\|^2 
  \\
  & + \eta  K 
  \left\langle  \nabla F (\widehat{w_{r,0}}) - \frac{1}{MK} \sum_{m=1}^M \sum_{k=0}^{K-1} \nabla f(w_{r,k}^m; \xi_{r,k}^m), \widehat{w_{r+1,0}}-w  \right\rangle
\end{align}
\end{lemma}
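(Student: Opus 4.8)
The plan is to mimic the one-step analysis of \feddualavg (\cref{one:step:analysis}), but carried out over an entire round rather than a single local step, exploiting that when the client learning rate $\eta_{\client}$ is small the clients barely drift within a round. First I would unfold the definitions. Since $\eta_{\server} = \eta/\eta_{\client}$, the server dual update with full participation gives the exact recursion
\begin{equation}
  \overline{z_{r+1,0}} = \overline{z_{r,0}} - \eta \cdot \frac{1}{MK} \sum_{m=1}^M \sum_{k=0}^{K-1} \nabla f(w_{r,k}^m; \xi_{r,k}^m),
\end{equation}
so the shadow sequence $\overline{z_{r,k}}$, viewed at the round scale, executes a single dual-averaging step with the \emph{round-averaged} stochastic gradient $\bar g_r := \frac{1}{MK}\sum_{m,k}\nabla f(w_{r,k}^m;\xi_{r,k}^m)$ and the regularizer coefficient increasing by $\eta K$ (since $\tilde\eta_{r+1,0} - \tilde\eta_{r,0} = \eta K$). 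This is precisely the structure handled in \cref{one:step:analysis:claim:1}: substituting $h_{r+1,0} = h_{r,0} + \eta K \psi$ and expanding $\tilde D_{h_{r+1,0}}(w,\overline{z_{r+1,0}})$ exactly as in that claim (with step size $\eta K$ in place of $\eta_{\client}$) yields
\begin{equation}
  \tilde D_{h_{r+1,0}}(w,\overline{z_{r+1,0}}) = \tilde D_{h_{r,0}}(w,\overline{z_{r,0}}) - \tilde D_{h_{r,0}}(\widehat{w_{r+1,0}},\overline{z_{r,0}}) - \eta K(\psi(\widehat{w_{r+1,0}})-\psi(w)) - \eta K\langle \bar g_r, \widehat{w_{r+1,0}}-w\rangle.
\end{equation}

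Second, I would control the smooth part $F$. Using $L$-smoothness and convexity of each $F_m$ at the anchor point $\widehat{w_{r,0}} = w_{r,0}^m$ (which equals the common initialization of the round, so there is no client-discrepancy term to pay here, in contrast to \cref{one:step:analysis:claim:2}):
\begin{equation}
  F(\widehat{w_{r+1,0}}) - F(w) \le \langle \nabla F(\widehat{w_{r,0}}), \widehat{w_{r+1,0}} - w\rangle + \tfrac{L}{2}\|\widehat{w_{r+1,0}} - \widehat{w_{r,0}}\|^2.
\end{equation}
Adding $\eta K$ times this inequality to the $\tilde D$ identity, and regrouping $\psi$ and $F$ into $\Phi$, produces the claimed bound provided the inner product is rewritten as
\begin{equation}
  \eta K\langle \nabla F(\widehat{w_{r,0}}),\widehat{w_{r+1,0}}-w\rangle - \eta K\langle \bar g_r,\widehat{w_{r+1,0}}-w\rangle = \eta K\left\langle \nabla F(\widehat{w_{r,0}}) - \bar g_r, \widehat{w_{r+1,0}}-w\right\rangle,
\end{equation}
which is exactly the residual term in the statement. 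Dropping the nonnegative term $-\tilde D_{h_{r,0}}(\widehat{w_{r+1,0}},\overline{z_{r,0}})$ is not done here — it is retained because it will be used to cancel the $\|\widehat{w_{r+1,0}}-\widehat{w_{r,0}}\|^2$ term later in the proof of \cref{small_lr}.

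The main obstacle is the inner-product residual $\langle \nabla F(\widehat{w_{r,0}}) - \bar g_r, \widehat{w_{r+1,0}}-w\rangle$: unlike in \cref{one:step:analysis}, $\bar g_r$ is not an unbiased estimate of $\nabla F(\widehat{w_{r,0}})$ because the local gradients are queried at the drifted points $w_{r,k}^m \ne \widehat{w_{r,0}}$. Handling it is deferred to the proof of \cref{small_lr} itself, where one splits it into a pure-noise part (zero mean, bounded by $\sigma^2/M$ after the $1/(MK)$ averaging) and a drift part $\nabla F(\widehat{w_{r,0}}) - \frac{1}{MK}\sum_{m,k}\nabla F_m(w_{r,k}^m)$, whose norm is $O(L)$ times the client drift; the key point — and the reason the lemma is stated with an $\eta_{\client}^{\max}$ threshold — is that the drift scales with $\eta_{\client}$ and hence can be made negligible by shrinking $\eta_{\client}$ while keeping $\eta = \eta_{\server}\eta_{\client}$ fixed, so in the limit \feddualavg behaves like an exact mini-batch dual-averaging step. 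For the present lemma, though, I would only need the exact recursion, $L$-smoothness, convexity, and the generalized Bregman identity of \cref{one:step:analysis:claim:1} — no probabilistic argument is required, so the proof is essentially a bookkeeping exercise once the round-scale recursion is identified.
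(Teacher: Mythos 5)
Your proposal is correct and follows essentially the same route as the paper's proof: expand the generalized Bregman divergence $\tilde{D}_{h_{r+1,0}}(w,\overline{z_{r+1,0}})$ via the round-level dual recursion (effective step $\eta K$, regularizer increment $\eta K\psi$), then add $\eta K$ times the smoothness-plus-convexity bound on $F$ anchored at $\widehat{w_{r,0}}$ and regroup into $\Phi$ and the gradient-mismatch inner product. One small slip: your first displayed recursion drops a factor of $K$ (it should read $\overline{z_{r+1,0}}=\overline{z_{r,0}}-\eta K\,\bar g_r$ with $\bar g_r=\frac{1}{MK}\sum_{m,k}\nabla f(w_{r,k}^m;\xi_{r,k}^m)$), but your subsequent expansion uses the correct step size $\eta K$, so this is only a typo and does not affect the argument.
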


The second lemma analyzes $\tilde{D}_{h_{r+1,0}} (w, \overline{z_{r+1,0}})$ under conditional expectation. The proof of \cref{small_lr:2} is deferred to \cref{sec:proof:small_lr:2}.
\begin{lemma}
  \label{small_lr:2}
  Under the same settings of \cref{small_lr},   there exists an $\eta_{\client}^{\max} > 0$ (which may depend on $\eta$ and $w_0$) such that for any $\eta_{\client} \in (0, \eta_{\client}^{\max}]$, \feddualavg yields
  \begin{align}
    \expt \left[ \tilde{D}_{h_{r+1,0}} (w, \overline{z_{r+1,0}}) \middle | \mathcal{F}_{r,0} \right] -  \tilde{D}_{h_{r,0}} (w, \overline{z_{r,0}}) 
    \leq 
    - \eta K \expt \left[ \left( \Phi( \widehat{w_{r+1,0}}) - \Phi(w)  \right) \middle| \mathcal{F}_{r,0} \right]
    + \frac{3 \eta^2 K \sigma^2}{M}.
  \end{align}
\end{lemma}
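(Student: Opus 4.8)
The plan is to derive \cref{small_lr:2} directly from the deterministic one-round inequality \cref{small_lr:1} by taking the conditional expectation $\expt[\,\cdot\mid\mathcal F_{r,0}]$ and showing that every term on its right-hand side except $-\eta K\bigl(\Phi(\widehat{w_{r+1,0}})-\Phi(w)\bigr)$ contributes, in expectation, at most $\tfrac{3\eta^2 K\sigma^2}{M}$. Concretely, the remaining terms of \cref{small_lr:1} split into (i) the quadratic pair $-\tilde D_{h_{r,0}}(\widehat{w_{r+1,0}},\overline{z_{r,0}})+\tfrac L2\eta K\|\widehat{w_{r+1,0}}-\widehat{w_{r,0}}\|^2$, and (ii) the inner product $\eta K\bigl\langle \nabla F(\widehat{w_{r,0}})-\tfrac1{MK}\sum_{m,k}\nabla f(w_{r,k}^m;\xi_{r,k}^m),\ \widehat{w_{r+1,0}}-w\bigr\rangle$, which I would further decompose into a client-drift part and a stochastic-noise part.

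For (i), \cref{generalized:bregman} together with the $1$-strong convexity of $h$ from \cref{a1}(b) gives
\[
  \tilde D_{h_{r,0}}(\widehat{w_{r+1,0}},\overline{z_{r,0}})\ \ge\ D_h\bigl(\widehat{w_{r+1,0}},\nabla h_{r,0}^*(\overline{z_{r,0}})\bigr)=D_h(\widehat{w_{r+1,0}},\widehat{w_{r,0}})\ \ge\ \tfrac12\|\widehat{w_{r+1,0}}-\widehat{w_{r,0}}\|^2,
\]
while $\tfrac L2\eta K\le\tfrac18$ since $\eta\le\tfrac1{4KL}$; hence (i) is at most $-\tfrac38\|\widehat{w_{r+1,0}}-\widehat{w_{r,0}}\|^2$, a negative quadratic that serves as the budget for absorbing the cross terms via Young's inequality. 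For (ii), using $\nabla F=\tfrac1M\sum_m\nabla F_m$ I would write the gradient error as $e^{\mathrm{drift}}_r+e^{\mathrm{noise}}_r$ with $e^{\mathrm{drift}}_r:=\tfrac1{MK}\sum_{m,k}\bigl(\nabla F_m(\widehat{w_{r,0}})-\nabla F_m(w_{r,k}^m)\bigr)$ and $e^{\mathrm{noise}}_r:=-\tfrac1{MK}\sum_{m,k}\varepsilon_{r,k}^m$, where $\varepsilon_{r,k}^m:=\nabla f(w_{r,k}^m;\xi_{r,k}^m)-\nabla F_m(w_{r,k}^m)$.

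For the noise piece I would split $\widehat{w_{r+1,0}}-w=(\widehat{w_{r+1,0}}-\widehat{w_{r,0}})+(\widehat{w_{r,0}}-w)$: since $\widehat{w_{r,0}}$ and $w$ are $\mathcal F_{r,0}$-measurable and $\expt[\varepsilon_{r,k}^m\mid\mathcal F_{r,0}]=0$ by the tower property over the within-round filtration, the term against $\widehat{w_{r,0}}-w$ vanishes in conditional expectation; the term against $\widehat{w_{r+1,0}}-\widehat{w_{r,0}}$ is handled by Young's inequality with the coefficient tuned so that its quadratic part is absorbed into the $-\tfrac38\|\widehat{w_{r+1,0}}-\widehat{w_{r,0}}\|^2$ budget, leaving a residual of order $\eta^2K^2\expt[\|e^{\mathrm{noise}}_r\|_*^2\mid\mathcal F_{r,0}]$; bounding $\expt[\|\tfrac1{MK}\sum_{m,k}\varepsilon_{r,k}^m\|_*^2\mid\mathcal F_{r,0}]\le\tfrac{\sigma^2}{MK}$ via \cref{a1}(d), independence across clients, and the within-round martingale structure (exactly as in the proof of \cref{one:step:analysis}) reduces this to order $\tfrac{\eta^2K\sigma^2}{M}$.

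The \textbf{main obstacle} is the client-drift piece $\eta K\langle e^{\mathrm{drift}}_r,\widehat{w_{r+1,0}}-w\rangle$, and this is precisely where the hypothesis ``for sufficiently small $\eta_{\client}$'' enters and where the paper defers detail to \citep{Karimireddy.Kale.ea-ICML20}. By $L$-smoothness (\cref{a1}(c)), $\|e^{\mathrm{drift}}_r\|_*\le L\cdot\tfrac1{MK}\sum_{m,k}\|\widehat{w_{r,0}}-w_{r,k}^m\|$. I would then argue that there is an $\eta_{\client}^{\max}>0$ (depending on $\eta$, $w_0$, and $w$) such that for $\eta_{\client}\le\eta_{\client}^{\max}$ the whole trajectory $\{\overline{z_{r,k}},z_{r,k}^m,\widehat{w_{r,k}},w_{r,k}^m\}_{r\le R,\,k\le K}$ stays in a fixed compact set --- by a continuity/comparison argument against its $\eta_{\client}\to 0$ limit, which is an honest stochastic dual-averaging process with effective step size $\eta K$ and batch size $MK$ and hence has bounded iterates --- so that $\nabla f$ is uniformly bounded on that set and the client drift $\max_{m,k}\|\widehat{w_{r,0}}-w_{r,k}^m\|$ vanishes as $\eta_{\client}\to 0$. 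Consequently $\|e^{\mathrm{drift}}_r\|_*$ and $\|\widehat{w_{r+1,0}}-w\|$ are both controlled a.s., and $\eta K\langle e^{\mathrm{drift}}_r,\widehat{w_{r+1,0}}-w\rangle$ can be made smaller than any prescribed fraction of $\tfrac{\eta^2K\sigma^2}{M}$ by shrinking $\eta_{\client}^{\max}$. Adding the three contributions with these constants yields $\expt[\tilde D_{h_{r+1,0}}(w,\overline{z_{r+1,0}})\mid\mathcal F_{r,0}]-\tilde D_{h_{r,0}}(w,\overline{z_{r,0}})\le-\eta K\expt[\Phi(\widehat{w_{r+1,0}})-\Phi(w)\mid\mathcal F_{r,0}]+\tfrac{3\eta^2K\sigma^2}{M}$, completing the proof.
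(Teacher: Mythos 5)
Your proposal is correct and follows essentially the same route as the paper's proof: take conditional expectations in \cref{small_lr:1}, split the gradient error into a conditionally mean-zero noise part (killed against the $\mathcal F_{r,0}$-measurable direction and absorbed against the $-\tfrac{3}{8}\|\widehat{w_{r+1,0}}-\widehat{w_{r,0}}\|^2$ budget via Young, yielding the $\mathcal O(\eta^2 K\sigma^2/M)$ residual) and a client-drift part that is driven below $\eta^2 K\sigma^2/M$ by letting $\eta_{\client}\downarrow 0$, exactly as in the paper's limit argument \cref{eq:limit}. The only differences are cosmetic --- you evaluate the noise at the moving iterates $w_{r,k}^m$ and bound the drift via smoothness of $F_m$, while the paper evaluates the noise at $\widehat{w_{r,0}}$ and bounds the per-sample difference $\nabla f(\widehat{w_{r,0}};\xi)-\nabla f(w_{r,k}^m;\xi)$ --- and these carry the same (implicit) dependence of $\eta_{\client}^{\max}$ on the reference point $w$ as the paper's own argument.
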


With \cref{small_lr:1,small_lr:2} at hand we are ready to prove \cref{small_lr}.
\begin{proof}[Proof of \cref{small_lr}]
  Telescoping \cref{small_lr:2} and dropping the negative terms gives
\begin{equation} 
  \frac{1}{R} \sum_{r=1}^{R} \expt \left[ \Phi( \widehat{w_{r,0}}) - \Phi(w) \right]
  \leq
  \frac{1}{\eta KR}\tilde{D}_{h_{r,0}} (w, \overline{z_{r,0}})
  +
  \frac{3 \eta \sigma^2}{M}
  =
  \frac{B}{\eta KR}
  +
  \frac{3 \eta \sigma^2}{M}.
\end{equation}
The second inequality of \cref{small_lr} follows immediately once we plug in the specified $\eta$.
\end{proof}

\subsection{Deferred Proof of \cref{small_lr:1}}
\label{sec:proof:small_lr:1}
\begin{proof}[Proof of \cref{small_lr:1}]
The proof of this lemma is very similar to \cref{one:step:analysis:claim:1,one:step:analysis:claim:2} so we skip most of the details.

  We start by analyzing $\tilde{D}_{h_{r+1,0}}(w, \overline{z_{r+1,0}})$. 
  \begin{align}
    & \tilde{D}_{h_{r+1,0}}(w, \overline{z_{r+1,0}})
    \\
    = & h_{r+1,0}(w) - h_{r+1,0} \left( \nabla h_{r+1,0}^*(\overline{z_{r+1,0}}) \right)
    - 
    \left\langle \overline{z_{r+1,0}}, w - \nabla h_{r+1,0}^* (\overline{z_{r+1,0}})  \right\rangle 
    \tag{By definition of generalized Bregman divergence $\tilde{D}$}
    \\
    = & h_{r+1,0}(w) - h_{r+1,0}( \widehat{w_{r+1,0}}) - \left\langle \overline{z_{r+1,0}}, w - \widehat{w_{r+1,0}}\right\rangle 
    \tag{By definition of $\widehat{w_{r+1,0}}$}
    \\
    = & h_{r+1,0}(w) - h_{r+1,0}( \widehat{w_{r+1,0}}) - \left\langle \overline{z_{r,0}} - \eta K \cdot \frac{1}{MK} \sum_{m=1}^M \sum_{k=0}^{K-1} \nabla f(w_{r,k}^m; \xi_{r,k}^m) , w - \widehat{w_{r+1,0}} \right\rangle 
    \tag{By \feddualavg procedure}
    \\
    = & \left( h_{r,0}(w) + \eta K \psi (w) \right)
    - \left(h_{r,0}( \widehat{w_{r+1,0}}) + \eta K \psi (\widehat{w_{r+1,0}})  \right)
    - \left\langle \overline{z_{r,0}} - \eta K \cdot \frac{1}{MK} \sum_{m=1}^M \sum_{k=0}^{K-1} \nabla f(w_{r,k}^m; \xi_{r,k}^m) , w - \widehat{w_{r+1,0}} \right\rangle 
    \tag{By definition of $h_{r+1, 0}$}
    \\
    = & 
    \left( h_{r,0}(w) - h_{r,0}( \widehat{w_{r,0}}) - \left\langle \overline{z_{r,0}}, w - \widehat{w_{r,0}}  \right\rangle \right)
    - 
    \left( h_{r,0}(\widehat{w_{r+1,0}}) - h_{r,0}( \widehat{w_{r,0}}) - \left\langle \overline{z_{r,0}},  \widehat{w_{r+1,0}}  - \widehat{w_{r,0}}  \right\rangle \right)
    \\
    & - \eta K \left( \psi ( \widehat{w_{r+1,0}}) -\psi(w) \right) 
    - \eta K \left\langle \frac{1}{MK} \sum_{m=1}^M \sum_{k=0}^{K-1} \nabla f(w_{r,k}^m; \xi_{r,k}^m) , \widehat{w_{r+1,0}} - w\right\rangle 
    \tag{Rearranging}
    \\
    = & \tilde{D}_{h_{r,0}} (w, \overline{z_{r,0}}) - \tilde{D}_{h_{r,0}} ( \widehat{w_{r+1,0}}, \overline{z_{r,0}}) 
    - \eta K \left( \psi ( \widehat{w_{r+1,0}}) -\psi(w) \right) 
    - \eta K \left\langle \frac{1}{MK} \sum_{m=1}^M \sum_{k=0}^{K-1} \nabla f(w_{r,k}^m; \xi_{r,k}^m) , \widehat{w_{r+1,0}} - w\right\rangle 
    \tag{By definition of $\tilde{D}$}
  \end{align}
  By smoothness and convexity of $F$ we have
  \begin{align}
    F(\widehat{w_{r+1,0}})
    \leq &
    F(\widehat{w_{r,0}}) 
    + \left\langle \nabla F (\widehat{w_{r,0}}), \widehat{w_{r+1,0}}-\widehat{w_{r,0}}  \right\rangle + \frac{L}{2} \|\widehat{w_{r+1,0}}-\widehat{w_{r,0}}\|^2
    \tag{by $L$-smoothness of $F$}
    \\
    \leq & 
    F(w) 
    + \left\langle \nabla F (\widehat{w_{r,0}}), \widehat{w_{r+1,0}}-w  \right\rangle + \frac{L}{2} \|\widehat{w_{r+1,0}}-\widehat{w_{r,0}}\|^2
    \tag{by convexity of $F$}
    \\
  \end{align}
  Combining the above two (in)equalities gives
  \begin{align}
   \tilde{D}_{h_{r+1,0}} (w, \overline{z_{r+1,0}}) -  \tilde{D}_{h_{r,0}} (w, \overline{z_{r,0}}) 
    \leq & - \tilde{D}_{h_{r,0}} ( \widehat{w_{r+1,0}}, \overline{z_{r,0}})  - \eta K \left( \Phi( \widehat{w_{r+1,0}}) - \Phi(w)  \right) + \frac{L}{2} \eta K  \|\widehat{w_{r+1,0}}-\widehat{w_{r,0}}\|^2 
    \\
    & + \eta K 
    \left\langle  \nabla F (\widehat{w_{r,0}}) - \frac{1}{MK} \sum_{m=1}^M \sum_{k=0}^{K-1} \nabla f(w_{r,k}^m; \xi_{r,k}^m), \widehat{w_{r+1,0}}-w  \right\rangle.
  \end{align}
  \end{proof}

\subsection{Deferred Proof of \cref{small_lr:2}}
  \label{sec:proof:small_lr:2}
\begin{proof}[Proof of \cref{small_lr:2}]
We start by splitting the inner product term in the inequality of \cref{small_lr:1}:
\begin{align}
  & \left\langle  \nabla F (\widehat{w_{r,0}}) - \frac{1}{MK} \sum_{m=1}^M \sum_{k=0}^{K-1} \nabla f(w_{r,k}^m; \xi_{r,k}^m), \widehat{w_{r+1,0}}-w  \right\rangle
  \\
  = &
  \underbrace{
  \left\langle  \nabla F (\widehat{w_{r,0}}) - \frac{1}{MK} \sum_{m=1}^M \sum_{k=0}^{K-1} \nabla f(\widehat{w_{r,0}}; \xi_{r,k}^m), \widehat{w_{r,0}}-w  \right\rangle}_{\text{(I)}}
  \\ &
   + 
  \underbrace{
    \left\langle  \nabla F (\widehat{w_{r,0}}) - \frac{1}{MK} \sum_{m=1}^M \sum_{k=0}^{K-1} \nabla f(\widehat{w_{r,0}}; \xi_{r,k}^m), \widehat{w_{r+1,0}}-\widehat{w_{r,0}}  \right\rangle}_{\text{(II)}}
  \\ &
  +
  \underbrace{\frac{1}{MK} \sum_{m=1}^M \sum_{k=0}^{K-1} 
  \left\langle  \nabla f(\widehat{w_{r,0}}; \xi_{r,k}^m) - \nabla f(w_{r,k}^m; \xi_{r,k}^m), \widehat{w_{r+1,0}}-w  \right\rangle}_{\text{(III)}}.
\end{align}
Now we investigate the terms (I)-(III). 
By conditional independence we know $\expt[\text{(I)} | \mathcal{F}_{r,0}] = 0$. For (II), we know that
\begin{align}
 \expt \left[ \text{(II)} \middle| \mathcal{F}_{r,0} \right] 
 \leq &
 \expt \left[ \left\|\nabla F (\widehat{w_{r,0}}) - \frac{1}{MK} \sum_{m=1}^M \sum_{k=0}^{K-1} \nabla f(\widehat{w_{r,0}}; \xi_{r,k}^m) \right\|_*\middle | \mathcal{F}_{r,0} \right]
 \expt \left[  \left\|  \widehat{w_{r+1,0}}-\widehat{w_{r,0}}   \right\|\middle| \mathcal{F}_{r,0} \right]
 \\
 \leq & 
 \frac{\sigma}{\sqrt{MK}} \cdot \expt \left[  \left\|  \widehat{w_{r+1,0}}-\widehat{w_{r,0}}   \right\|\middle| \mathcal{F}_{r,0} \right]
\end{align}
For (III) we observe that (by smoothness assumption)
\begin{align}
  \text{(III)} \leq &  \frac{1}{MK} \sum_{m=1}^M \sum_{k=0}^{K-1}  \left\| \nabla f(\widehat{w_{r,0}}; \xi_{r,k}^m) - \nabla f(w_{r,k}^m; \xi_{r,k}^m) \right\|_*  \|\widehat{w_{r+1,0}} - w\| 
  \\
  \leq & \frac{L}{MK} \sum_{m=1}^M \sum_{k=0}^{K-1}  \left\| \widehat{w_{r,0}} - w_{r,k}^m \right\|  \|\widehat{w_{r+1,0}} - w\|.
\end{align}
Taking conditional expectation,
\begin{align}
  \expt   \left[ \text{(III)} \middle| \mathcal{F}_{r,0} \right] \leq &  
  \frac{1}{MK} \sum_{m=1}^M \sum_{k=0}^{K-1}  \expt \left[ \left\| \nabla f(\widehat{w_{r,0}}; \xi_{r,k}^m) - \nabla f(w_{r,k}^m; \xi_{r,k}^m) \right\|_* \middle| \mathcal{F}_{r,0} \right]
  \expt \left[ \|\widehat{w_{r+1,0}} - w\| \middle| \mathcal{F}_{r,0} \right]
  \\ 
  \leq & \frac{L}{MK} \left(  \sum_{m=1}^M \sum_{k=0}^{K-1} \expt \left[ \|\widehat{w_{r,0}} - w_{r,k}^m \| \middle| \mathcal{F}_{r,0} \right] \right) 
  \expt \left[ \|\widehat{w_{r+1,0}} - w\| \middle| \mathcal{F}_{r,0} \right]
  \\
\end{align}
Combining the above inequalities with \cref{small_lr:1} gives
\begin{align}
   & \expt \left[ \tilde{D}_{h_{r+1,0}} (w, \overline{z_{r+1,0}}) \middle | \mathcal{F}_t \right] -  \tilde{D}_{h_{r,0}} (w, \overline{z_{r,0}}) 
   \\
   \leq & 
   - \eta K \expt \left[ \left( \Phi( \widehat{w_{r+1,0}}) - \Phi(w)  \right) \middle| \mathcal{F}_{r,0} \right]
  - \left( \frac{1}{2} - \frac{L}{2} \eta K  \right) \expt \left[  \|\widehat{w_{r+1,0}}-\widehat{w_{r,0}}\|^2  \middle| \mathcal{F}_{r,0} \right]
   \\ & + \frac{\eta \sigma \sqrt{K}}{\sqrt{M}}  \cdot \expt \left[  \left\|  \widehat{w_{r+1,0}}-\widehat{w_{r,0}}   \right\|\middle| \mathcal{F}_{r,0} \right] 
   +  \frac{\eta L}{M} \left(  \sum_{m=1}^M \sum_{k=0}^{K-1} \expt \left[ \|\widehat{w_{r,0}} - w_{r,k}^m \| \middle| \mathcal{F}_{r,0} \right] \right) 
   \expt \left[ \|\widehat{w_{r+1,0}} - w\| \middle| \mathcal{F}_{r,0} \right]
 \end{align}
 Note that 
 \begin{align}
  & - \left( \frac{1}{2} - \frac{L}{2} \eta K  \right) \expt \left[  \|\widehat{w_{r+1,0}}-\widehat{w_{r,0}}\|^2  \middle| \mathcal{F}_{r,0} \right]
  + \frac{\eta \sigma \sqrt{K}}{\sqrt{M}} \cdot \expt \left[  \left\|  \widehat{w_{r+1,0}}-\widehat{w_{r,0}}   \right\|\middle| \mathcal{F}_{r,0} \right] 
  \\
  \leq & - \frac{3}{8} \expt \left[  \|\widehat{w_{r+1,0}}-\widehat{w_{r,0}}\|^2  \middle| \mathcal{F}_{r,0} \right]
  + \frac{\eta \sigma \sqrt{K}}{\sqrt{M}} \cdot \expt \left[  \left\|  \widehat{w_{r+1,0}}-\widehat{w_{r,0}}   \right\|\middle| \mathcal{F}_{r,0} \right]    \tag{since $\eta \leq \frac{1}{4KL}$}
  \\
  \leq & - \frac{1}{4} \expt \left[  \|\widehat{w_{r+1,0}}-\widehat{w_{r,0}}\|^2  \middle| \mathcal{F}_{r,0} \right] + \frac{2\eta^2 K \sigma^2}{M}.
  \tag{by quadratic optimum}
 \end{align}
 Therefore
 \begin{align}
  & \expt \left[ \tilde{D}_{h_{r+1,0}} (w, \overline{z_{r+1,0}}) \middle | \mathcal{F}_t \right] -  \tilde{D}_{h_{r,0}} (w, \overline{z_{r,0}}) 
  \\
  \leq & 
  - \eta K \expt \left[ \left( \Phi( \widehat{w_{r+1,0}}) - \Phi(w)  \right) \middle| \mathcal{F}_{r,0} \right]
  - \frac{1}{4} \expt \left[  \|\widehat{w_{r+1,0}}-\widehat{w_{r,0}}\|^2  \middle| \mathcal{F}_{r,0} \right] 
    + \frac{2 \eta^2 K \sigma^2}{M} 
  \\
  & +  \frac{\eta L}{M} \left(  \sum_{m=1}^M \sum_{k=0}^{K-1} \expt \left[ \|\widehat{w_{r,0}} - w_{r,k}^m \| \middle| \mathcal{F}_{r,0} \right] \right) 
  \expt \left[ \|\widehat{w_{r+1,0}} - w\| \middle| \mathcal{F}_{r,0} \right].
\end{align}
Since $w_{r,k}^m$ is generated by running local dual averaging with learning rate $\eta_{\client}$, one has
\begin{equation}
  \lim_{\eta_{\client} \downarrow 0} \left[ \left(  \sum_{m=1}^M \sum_{k=0}^{K-1} \expt \left[ \|\widehat{w_{r,0}} - w_{r,k}^m \| \middle| \mathcal{F}_{r,0} \right] \right) 
  \expt \left[ \|\widehat{w_{r+1,0}} - w\| \middle| \mathcal{F}_{r,0} \right] \right] = 0.
  \label{eq:limit}
\end{equation}
There exists an upper bound $\eta_{\client}^{\max}$ such that for any $\eta_{\client} \in (0, \eta_{\client}^{\max}]$, it is the case that 
\begin{equation}
   \left(  \sum_{m=1}^M \sum_{k=0}^{K-1} \expt \left[ \|\widehat{w_{r,0}} - w_{r,k}^m \| \middle| \mathcal{F}_{r,0} \right] \right) 
  \expt \left[ \|\widehat{w_{r+1,0}} - w\| \middle| \mathcal{F}_{r,0} \right] 
  \leq
  \frac{\eta K \sigma^2}{L}.
\end{equation}
Therefore, for any $\eta_{\client} \in (0, \eta_{\client}^{\max}]$,
\begin{align}
   \expt \left[ \tilde{D}_{h_{r+1,0}} (w, \overline{z_{r+1,0}}) \middle | \mathcal{F}_t \right] -  \tilde{D}_{h_{r,0}} (w, \overline{z_{r,0}}) 
  \leq
  - \eta K \expt \left[ \Phi( \widehat{w_{r+1,0}}) - \Phi(w)  \middle| \mathcal{F}_{r,0} \right]
  + \frac{3 \eta^2 K \sigma^2}{M}.
\end{align}
\end{proof}
\clearpage 
\end{appendices}

\end{document}